\documentclass{article}
\pdfoutput=1

\usepackage[preprint]{neurips_2026}
\PassOptionsToPackage{numbers,sort&compress}{natbib}
\usepackage{neurips_2026}
\usepackage[utf8]{inputenc}
\usepackage[T1]{fontenc}
\usepackage{wrapfig}
\usepackage{graphicx}
\usepackage{booktabs}
\usepackage{float}
\usepackage[pagebackref,breaklinks,colorlinks]{hyperref}
\usepackage[ruled,linesnumbered]{algorithm2e}
\usepackage{url}
\usepackage{booktabs}
\usepackage{amsfonts}
\usepackage{nicefrac}
\usepackage{microtype}
\usepackage[table]{xcolor}
\usepackage{amsmath}
\usepackage{amssymb}
\usepackage{mathtools}
\usepackage{amsthm}
\usepackage{graphicx}
\usepackage{subcaption}
\usepackage{multirow}
\usepackage{bbm}
\usepackage[capitalize,noabbrev]{cleveref}
\usepackage{enumitem}

\theoremstyle{plain}
\newtheorem{theorem}{Theorem}[section]

\newtheorem{proposition}[theorem]{Proposition}
\newtheorem{lemma}[theorem]{Lemma}
\newtheorem{corollary}[theorem]{Corollary}
\theoremstyle{definition}
\newtheorem{definition}[theorem]{Definition}

\theoremstyle{remark}
\newtheorem{remark}[theorem]{Remark}
\newcolumntype{G}{>{\columncolor{gray!15}}c}
\definecolor{myblue}{RGB}{70,130,180}
\definecolor{mygreen}{RGB}{60,179,113}
\definecolor{myorange}{RGB}{255,140,0}
\definecolor{mygray}{RGB}{100,100,100}
\definecolor{backpropcolor}{RGB}{220,20,60}
\title{SMI: Statistical Membership Inference for Reliable Unlearned Model Auditing}
\author{
Jialong Sun\thanks{Equal contribution \texttt{jialong.sunh@gmail.com}.}\\
Shenzhen University of Advanced Technology
\And
Zeming Wei\footnotemark[1]\\
Peking University
\And
Jiaxuan Zou\footnotemark[1]\\
Xi'an Jiaotong University
\And
Jiacheng Gong\\
Heilongjiang University
\And
Jie Fu\\
Stevens Institute of Technology
\And
Chengyang Dong\\
Shenzhen University of Advanced Technology
\And
Heng Xu\\
Shenzhen University of Advanced Technology
\And
Jialong Li\\
Shenzhen University of Advanced Technology
\And
Bo Liu\thanks{Corresponding author: \texttt{liubo@suat-sz.edu.cn}}\\
Shenzhen University of Advanced Technology
}

\begin{document}

\maketitle

\begin{abstract}
Machine unlearning (MU) is essential for enforcing the right to be forgotten in machine learning systems. A key challenge of MU is how to reliably audit whether a model has truly forgotten specified training data. Membership Inference Attacks (MIAs) are widely used for unlearned model auditing, where samples that evade membership detection are regarded as successfully forgotten. We show this assumption is fundamentally flawed: failed membership inference does not imply true forgetting. We prove that unlearned samples occupy fundamentally different positions in the feature space than non-member samples, making this alignment bias unavoidable and unobservable, which leads to systematically optimistic evaluations of unlearning performance. Meanwhile, training shadow models for MIA incurs substantial computational overhead. To address both limitations, we propose Statistical Membership Inference (SMI), a training-free auditing framework that reformulates auditing as estimating the non-member mixture proportion in the unlearned feature distribution. Beyond estimating the forgetting rate, SMI also provides bootstrap reference ranges for quantified auditing reliability. Extensive experiments show that SMI consistently outperforms all MIA-based baselines, with no shadow model training required. Overall, SMI establishes a principled and efficient alternative to MIA-based auditing methods, with both theoretical guarantees and strong empirical performance.
\end{abstract}
\section{Introduction}

Machine learning (ML) has been widely deployed in various critical domains, such as healthcare, finance, and Transportation, whose training processes may heavily depend on privacy-sensitive data under certain protections~\cite{Jeffsafty,Liusafty}. However, the authentication of data usage from users is not necessarily unwarranted, particularly in the background of policies like GDPR that propose ``the right to be forgotten'', which indicates users reserve the right to withdraw the data authentication with considerations like privacy-protection, ethical issues, or policy requirements~\cite{unlearningN,unlearningS,Yisafty,newunlearning}. This explicitly indicates an urgent need for an accountable mechanism that can timely remove the impact of related data on model behaviors when authentication is revoked, as well as restrictions on training, inference, or communication with these data for future model deployments~\cite{unlearningnunix}.

Given this background, machine unlearning has emerged as a prominent approach to address these requirements~\cite{unlearningicml1,unlearningicml2,wu2025reliable}. Generally, machine unlearning involves using post-training methods to eliminate the influence of specific training data on the trained model. Despite notable success, a key open question in machine unlearning remains: \emph{How can we effectively
evaluate the forgetting of a given set of data~\cite{nounealrning,nounlearning1}?}
In this paper, we refer to this evaluation as the \emph{auditing} problem of machine unlearning. Currently, common auditing criteria can be categorized into three types of backdoor attacks~\cite{back1,backdoor}, adversarial attacks~\cite{duikang1,duikang}, and membership inference attacks (MIA)~\cite{MIA,MIA2}.

Both backdoor and adversarial attacks attempt to poison the training data, leading to compromised model performance on specific tasks. Since machine unlearning can be viewed as recovering the model's ability by locating the poisoning data and unlearning them, these two kinds of attacks calculate the attack success rate  on the unlearned model for auditing~\cite{backandduikang}. Thus, a critical concern emerges: as defense techniques (e.g., adversarial training against adversarial attacks) can also decrease ASRs, yet do not achieve the unlearned goal, the trustworthiness of such auditing methods is compromised.
In addition, both attack strategies require training a new target model, which may be perceived as tampering with the underlying task.

Besides, MIA aims to judge whether a specific data point is involved in the model training process, determined by assessing the output of the model given this input~\cite{miaCCS}. Intuitively, MIA can be mathematically formulated as a threshold-based classification problem:
\begin{equation}
    \text{MIA}(x;w)=\mathbbm{1}_{\text{scoreMIA}(x;w)\ge \beta},
\end{equation}
where $\text{scoreMIA}(x;w)$ is the membership score for sample $x$ under model parameter $w$. So far, a thread of MIA-based methods has paved the way for effectively auditing machine unlearning results~\cite{Miawhite}. Many variants have been implemented to improve MIA performance on diverse unlearning settings, such as  RMIA~\cite{RMIA}, IAM~\cite{IAM}, and RULI~\cite{newmia1}.
These attack-based auditing methods are prone to a failure mode in which failed attacks give rise to illusory forgetting.
Specifically, a failure in the attack strategy may be misinterpreted as the model having unlearned the sample. Consequently, this causes the auditing of the unlearning task to fail.

In this study, we aim to address the issue of illusory forgetting caused by failed membership inference attacks.
We revisit the validity of the MIA paradigm.
In an MIA task, the goal is essentially to train a binary classifier that distinguishes member sample features $F_m$ (e.g., features from the training dataset) from non-member sample features $F_n$ (e.g., features from the test dataset).
However, in a real unlearning task, the audited targets are the feature representations of unlearned samples $F_u$~\cite{Sha1,Sha2,Sha3}. Conventional MIA assumes that the audit features of $F_u$ are close to those of $F_n$, and therefore treats samples whose features in $F_u$ are classified as $F_n$ as successfully unlearned. As illustrated in Figure~\ref{fig:tu1}, this assumption is not well aligned with the actual objective of unlearning.
 This is because, in the unlearning task, $F_m$ corresponds to feature representations that contribute positive gradients to the model, $F_n$ corresponds to feature representations that contribute no gradient, while $F_u$ corresponds to feature representations that induce negative gradients. Such a gradient-based distinction is fundamentally misaligned with the core assumption of MIA, which does not explicitly account for these opposing optimization dynamics.
Although some studies have attempted to improve the alignment between MIA and unlearning through group-level attack strategies or independent modeling procedures, these methods do not fundamentally resolve the alignment bias and still require training tens of shadow models, each as costly as the audited model itself, making the computational overhead prohibitive.
\begin{figure}
    \centering
    \includegraphics[width=1\linewidth]{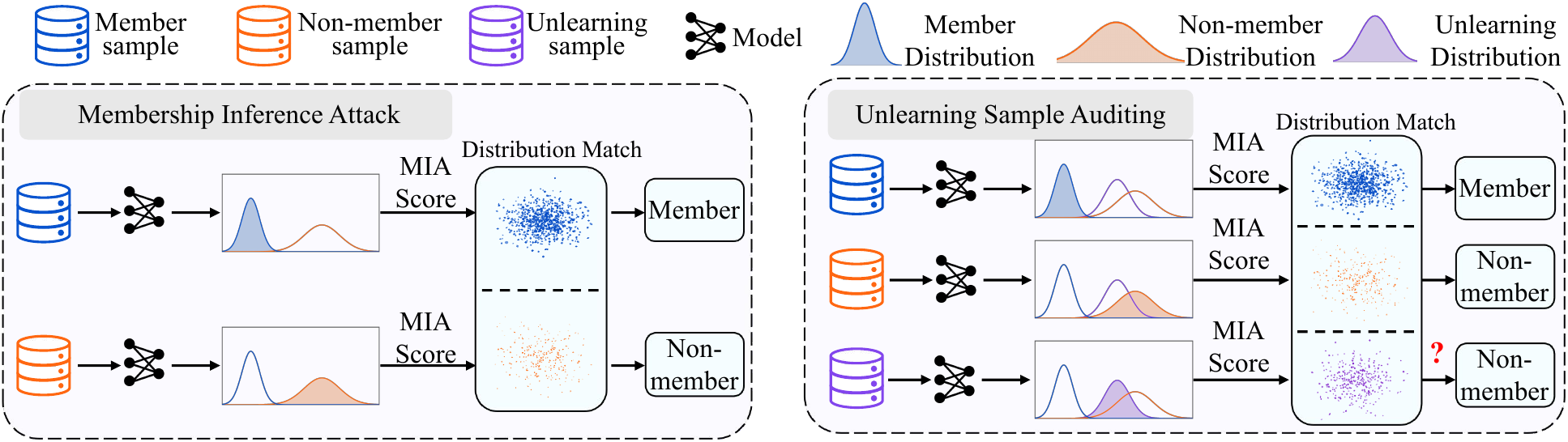}
    \caption{Misalignment Between MIA and the Unlearned Model Auditing Process}
    \label{fig:tu1}
\end{figure}

 Based on the above observations, we first conduct a theoretical analysis from two aspects. First, we formally prove that MIA-based strategies for auditing machine unlearning suffer from inevitable alignment errors. Second, based on the sample features migration property during the unlearning process, we introduce optimal transport theory to derive the corresponding optimal estimation. Building on this analysis, we propose a training-free statistical method for auditing unlearning tasks, called Statistical Membership Inference (SMI). Unlike existing methods that rely on shadow model training and classifier training, SMI does not require any shadow models or MIA classifiers. Instead, it only uses statistical techniques to estimate the mixture proportion between member and non-member distributions. In addition, SMI provides reference ranges for the audit results, which offer a reliable reference for audit decisions. Overall, our contributions are summarized as follows:

\begin{itemize}
    \item \textbf{Conceptually}, we show that failed MIA does not imply true forgetting, since unlearned samples may reside in a feature distribution fundamentally distinct from that of genuine non-members. This reveals a structural mismatch between MIA-based auditing and the goal of unlearned model auditing.

    \item \textbf{Theoretically}, we prove that this mismatch arises from an unavoidable yet unobservable alignment bias between unlearned samples and non-members. We further reformulate unlearned model auditing as estimating the member and non-member mixture proportion, shifting the Monge problem of unlearning from the parameter space to the feature space.

    \item \textbf{Practically}, we propose SMI and its variant SMI-M, a training-free auditing framework that estimates the forgetting rate without shadow models or learned classifiers, while providing bootstrap reference ranges to quantify auditing reliability.
\end{itemize}

\section{Revisiting MIA: From Attack to Auditing}
This section explains the limitations of MIA in unlearned model auditing and motivates the proposed improvement, SMI.  First, we use PAC-Bayes theory to derive the alignment bias in unlearned model auditing. Second, we adopt an optimal-transport perspective to revise the auditing strategy.

\textbf{Alignment Bias in MIA.}
We decompose the attacker’s learning process to identify the alignment bias in MIA-based unlearned model auditing. Let $F_n$, $F_m$, and $F_u$ be the auditing feature sets of the retained member dataset $D_n$, non-member dataset $D_m$, and unlearning dataset $D_u$, and let $\mathcal F_n$, $\mathcal F_m$, and $\mathcal F_u$ be the corresponding empirical distributions~\cite{Unleark,MIAD}. Let $\mathcal P$ and $\mathcal Q$ denote the prior and posterior distributions over the attacker’s hypothesis space~\cite{lilun1,lilun2}. The $\chi^2$-based PAC-Bayes risk bound is stated in Theorem~\ref{2.1}.

\begin{theorem}[PAC-Bayesian Bound for MIA]\label{2.1}
    For a binary classifier, suppose that the attacker model $\mathcal A$ learns from a prior distribution $\mathcal P$ and induces a posterior distribution $\mathcal Q$. For a sample size $m$, with probability at least $1-\delta$, the following inequality holds for any distribution $\mathcal Q$:
    \begin{equation}
        R_D(\mathcal Q)
        \leq
        \underbrace{R_S(\mathcal Q)}_{\text{Empirical Risk}}
        +
        \underbrace{
        \sqrt{\chi^2(\mathcal Q \| \mathcal P)+1}
        \cdot
        \sqrt{\frac{2}{\delta m}}
        }_{\text{Statistical Error}},
    \end{equation}
where $R_D(\mathcal Q)$ is the true risk when hypotheses are sampled according to $\mathcal Q$, $R_S(\mathcal Q)$ is the average empirical risk computed on the training set, and $\chi^2(\mathcal Q\|\mathcal P)$ is the $\chi^2$ divergence between $\mathcal Q$ and $\mathcal P$.
\end{theorem}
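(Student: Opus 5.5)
The plan is to follow the standard change-of-measure route for PAC-Bayes bounds, with Cauchy--Schwarz replacing the Donsker--Varadhan inequality. This is the natural way to bring in a $\chi^2$ divergence, and Chebyshev/Markov then produces the $\delta^{-1/2}$ rather than $\log(1/\delta)$ dependence.

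Fix a loss bounded in $[0,1]$, e.g.\ the 0--1 loss of the binary attacker. For each hypothesis $h$ set $\Delta(h)=R_D(h)-R_S(h)$, where $S$ is an i.i.d.\ sample of size $m$. Since $\mathcal Q$ is allowed to depend on $S$ while $\mathcal P$ is not, I would transfer to the prior by writing
\begin{equation*}
\mathbb E_{h\sim\mathcal Q}[\Delta(h)]=\mathbb E_{h\sim\mathcal P}\Big[\tfrac{d\mathcal Q}{d\mathcal P}(h)\,\Delta(h)\Big]\le\sqrt{\mathbb E_{\mathcal P}\big[(\tfrac{d\mathcal Q}{d\mathcal P})^2\big]}\sqrt{\mathbb E_{\mathcal P}[\Delta(h)^2]},
\end{equation*}
and then use $\mathbb E_{\mathcal P}[(d\mathcal Q/d\mathcal P)^2]=\chi^2(\mathcal Q\|\mathcal P)+1$. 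If $\mathcal Q\not\ll\mathcal P$ the divergence is infinite and the bound is vacuous, so absolute continuity can be assumed.

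Next I would control the data-dependent but $\mathcal Q$-free quantity $Z(S)=\mathbb E_{h\sim\mathcal P}[\Delta(h)^2]$. For each fixed $h$, $R_S(h)$ is a mean of $m$ i.i.d.\ $[0,1]$ variables, so $\mathbb E_S[\Delta(h)^2]=\mathrm{Var}/m\le 1/(4m)$. Fubini then gives $\mathbb E_S[Z]\le 1/(4m)$, and Markov's inequality yields, with probability at least $1-\delta$, $Z\le 1/(4\delta m)$. This already beats the stated constant, since $\sqrt{1/(4\delta m)}\le\sqrt{2/(\delta m)}$. Because this event does not involve $\mathcal Q$, the bound then holds simultaneously for all posteriors, which is the "for any $\mathcal Q$" clause. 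Combining the two steps gives $R_D(\mathcal Q)\le R_S(\mathcal Q)+\sqrt{\chi^2(\mathcal Q\|\mathcal P)+1}\sqrt{2/(\delta m)}$.

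The main obstacle is mostly bookkeeping rather than depth. The order of quantifiers must be handled correctly: Markov has to be applied to $Z$, which is independent of $\mathcal Q$, not to anything $\mathcal Q$-dependent. The convention for $R_S(\mathcal Q)$ as an average empirical risk under $\mathcal Q$ must also be fixed, and the loss must be bounded. If the authors intend a looser variance proxy, for example treating member and non-member halves separately with $m$ split between them, the constant $2$ should absorb this. I would state the result with the sharper constant and remark that it implies the displayed one.
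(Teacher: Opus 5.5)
Your proposal is correct and follows the paper's proof essentially step for step: change of measure to the prior, Cauchy--Schwarz with $\mathbb E_{\mathcal P}[(d\mathcal Q/d\mathcal P)^2]=\chi^2(\mathcal Q\|\mathcal P)+1$, Fubini to bound $\mathbb E_S[Z_S]$ by the per-hypothesis variance over $m$, and Markov's inequality applied to the $\mathcal Q$-free quantity $Z_S$. The only difference is that the paper keeps a generic variance bound $\sigma^2$ through to its final line, whereas you use the $[0,1]$ range of the loss to get $\sigma^2\le 1/4$, which is what actually justifies the stated constant $2$.
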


Remark 1: We use a $\chi^2$-based PAC-Bayes framework because it provides convenient second-moment control for deriving the subsequent distribution-shift term.
Detailed proof refers to Appendix~\ref{T2.1}.

Theorem~\ref{2.1} gives a general decomposition of the attacker’s learning error. More importantly, it does not reveal the alignment discrepancy in unlearned model auditing. The reason is that the bound is derived on the MIA training distribution $\mathcal F_t$, which is constructed from $\mathcal F_n$ and $\mathcal F_m$, while the actual auditing target is $\mathcal F_u$. Conventional MIA implicitly treats $\mathcal F_t$ and $\mathcal F_u$ as aligned, which can introduce an alignment bias. We therefore incorporate the discrepancy between $\mathcal F_t$ and $\mathcal F_u$ into the PAC-Bayes bound and obtain a refined auditing-error decomposition in Corollary~\ref{coro}.

\begin{corollary}[PAC-Bayesian Bound for MIA Auditing Results]
    When the training data distribution $\mathcal F_t$ of the MIA attacker model $\mathcal A$ is used to audit the data distribution $\mathcal F_u$, for a sample size $m$, with probability at least $1-\delta$, the following risk inequality holds:
    \begin{equation}
        R_{\mathcal F_u}(\mathcal Q)
        \leq
        R_S(\mathcal Q)
        +
        \underbrace{
        \sqrt{
        \frac{2}{m\delta}
        \left(\chi^2(\mathcal Q \| \mathcal P)+1\right)
        }
        }_{\text{Statistical Error}}
        +
        \underbrace{
        \sqrt{
        \frac{1}{2}D_\infty(\mathcal F_t \| \mathcal F_u)
        }
        }_{\text{Auditing Error}} .
    \end{equation}
\label{coro}
where $D_\infty(\mathcal F_t \| \mathcal F_u)=\log\left(\sup \frac{\mathcal F_t}{\mathcal F_u}\right)$ is the Rényi divergence~\cite{duliang3}. Please refer to Appendix~\ref{C2.2} for a detailed explanation of the notation and all proofs.
\end{corollary}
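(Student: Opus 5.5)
The plan is to split the target risk into the risk on the attacker's training distribution plus a distribution-shift term, bound the first by Theorem~\ref{2.1}, and bound the second through Pinsker's inequality. Write $R_{\mathcal F}(\mathcal Q)=\mathbb E_{h\sim\mathcal Q}\mathbb E_{x\sim\mathcal F}[\ell(h,x)]$ with a loss $\ell\in[0,1]$, for example the 0--1 membership loss. Then
\begin{equation*}
R_{\mathcal F_u}(\mathcal Q)=R_{\mathcal F_t}(\mathcal Q)+\bigl(R_{\mathcal F_u}(\mathcal Q)-R_{\mathcal F_t}(\mathcal Q)\bigr).
\end{equation*}
The first term is exactly the true risk $R_D(\mathcal Q)$ of Theorem~\ref{2.1}, with $D=\mathcal F_t$ the distribution from which the $m$ attacker training samples are drawn. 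With probability at least $1-\delta$, simultaneously for all $\mathcal Q$, it is therefore at most $R_S(\mathcal Q)+\sqrt{(\chi^2(\mathcal Q\|\mathcal P)+1)\,2/(\delta m)}$. This matches the displayed statistical-error term after combining the two square roots.

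For the shift term, the loss is bounded in $[0,1]$. Exchanging the order of expectation shows that, for each fixed hypothesis and hence after averaging over $\mathcal Q$, the difference is at most the total variation distance:
\begin{equation*}
\bigl|R_{\mathcal F_u}(\mathcal Q)-R_{\mathcal F_t}(\mathcal Q)\bigr|\le \mathrm{TV}(\mathcal F_t,\mathcal F_u).
\end{equation*}
Pinsker's inequality then gives $\mathrm{TV}(\mathcal F_t,\mathcal F_u)\le\sqrt{\tfrac12\,\mathrm{KL}(\mathcal F_t\|\mathcal F_u)}$. Finally, Rényi divergences are monotone in the order $\alpha$, so $\mathrm{KL}=D_1\le D_\infty(\mathcal F_t\|\mathcal F_u)=\log\sup \tfrac{d\mathcal F_t}{d\mathcal F_u}$, which is the auditing-error term in the statement. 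This step is deterministic, so the failure event has the same probability $\delta$ as in Theorem~\ref{2.1}; no union bound is needed.

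The main obstacle is not the inequalities but the bookkeeping of which distribution each quantity refers to. Theorem~\ref{2.1} must be applied to samples drawn from $\mathcal F_t$, not $\mathcal F_u$. We must also assume $\mathcal F_t\ll\mathcal F_u$, so that $D_\infty$ is finite; otherwise the bound is vacuous but still true. Since the problem is a binary membership task, I would also state clearly that the labels come from $\mathcal F_t$, with members and non-members mixed, while on $\mathcal F_u$ the audit label is the intended ``forgotten'' label. The shift is then treated as a shift in the joint feature--label law. Applying the same TV argument to the joint distributions handles this without further changes.
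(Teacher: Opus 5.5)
Your proof is correct. Its skeleton matches the paper's: split $R_{\mathcal F_u}(\mathcal Q)-R_S(\mathcal Q)$ into the PAC-Bayes gap on $\mathcal F_t$ plus the shift $R_{\mathcal F_u}(\mathcal Q)-R_{\mathcal F_t}(\mathcal Q)$, then bound the shift by $\operatorname{TV}(\mathcal F_t,\mathcal F_u)$ for a $[0,1]$-valued loss. The two arguments differ only in how they pass from total variation to $D_\infty$.

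The paper argues directly. It sets $\rho=d\mathcal F_t/d\mathcal F_u\le e^{\varepsilon}$ with $\varepsilon=D_\infty(\mathcal F_t\|\mathcal F_u)$, and takes $A=\{\rho\ge 1\}$. From this it gets $\mathcal F_u(A)\ge e^{-\varepsilon}\mathcal F_t(A)$, hence $\operatorname{TV}\le 1-e^{-\varepsilon}$. It then invokes the scalar inequality $1-e^{-\varepsilon}\le\sqrt{\varepsilon/2}$ without proof. That inequality does hold: $\frac{d}{d\varepsilon}(1-e^{-\varepsilon})^2=2e^{-\varepsilon}(1-e^{-\varepsilon})\le\frac12$, and both sides vanish at $\varepsilon=0$.

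You instead use Pinsker together with $\mathrm{KL}(\mathcal F_t\|\mathcal F_u)\le D_\infty(\mathcal F_t\|\mathcal F_u)$. This last step does not need the general monotonicity of R\'enyi divergences in the order. It follows in one line from $\mathrm{KL}=\mathbb E_{\mathcal F_t}[\log\rho]\le\log\operatorname{ess\,sup}\rho$.

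What each route buys:
\begin{itemize}
\item The paper's route is elementary and self-contained. Its intermediate bound $1-e^{-D_\infty}$ never exceeds $1$, whereas $\sqrt{\tfrac12\mathrm{KL}}$ can.
\item Your route needs no calculus fact, at the cost of citing Pinsker. Its intermediate term $\sqrt{\tfrac12\mathrm{KL}(\mathcal F_t\|\mathcal F_u)}$ can be far smaller than $\sqrt{\tfrac12 D_\infty}$ when the density ratio is large only on a set of small $\mathcal F_t$-mass.
\end{itemize}
Neither intermediate bound dominates the other. Both deliver the stated inequality with the same failure probability $\delta$.
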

This corollary shows that the MIA risk bound contains an additional auditing term controlled by $D_\infty(\mathcal F_t\|\mathcal F_u)$. This term increases with the distribution shift between $\mathcal F_t$ and $\mathcal F_u$, indicating that conventional MIA faces an alignment bias in unlearned model auditing. Thus, the auditing error is not only caused by empirical risk or statistical complexity, but also by the mismatch between the attacker’s training distribution and the actual auditing distribution.

\textbf{Revising the Auditing Strategy.}
To address the alignment bias in conventional MIA, we revise the auditing criterion from the goal of machine unlearning. After sufficient unlearning, the post-unlearning model should no longer preserve membership traces of the unlearned samples. Thus, auditing should not only test whether the attacker fails to identify their membership. It should also examine whether their auditing features are close to those of non-member samples.

From a transport perspective, machine unlearning can be viewed as moving the original model state toward the retrained model state. Let $\alpha_w$ be the parameter distribution after original training, and let $\beta_{w\setminus u}$ be the ideal retraining distribution after removing $D_u$. The ideal unlearning objective can then be written as the following Monge-type problem.
\begin{definition}
The Monge problem in parameter space is:
\begin{equation}
\min_{T_w:T_{w\#}\alpha_w=\beta_{w\setminus u}}
\int c_w\bigl(w,T_w(w)\bigr)\,d\alpha_w(w),
\end{equation}
where $T_w$ is the unlearning map in the parameter space, $T_{w\#}\alpha_w$ denotes the pushforward distribution obtained by transporting the original parameter distribution $\alpha_w$ through $T_w$, and $c_w$ is the parameter transport cost. This objective requires the post-unlearning model distribution to be aligned with the ideal retrained model distribution.
\end{definition}
\begin{figure}[t]
    \centering
    \includegraphics[width=1\linewidth]{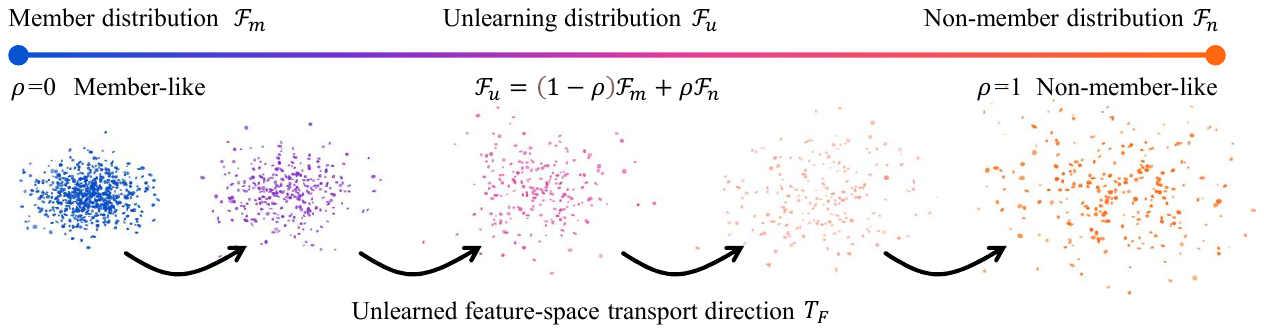}
    \caption{Approximate Representation of the Distribution Transport Problem as a Distribution Mixture Problem}
    \label{fig:tu2}
\end{figure}
Directly verifying this parameter-space objective is usually infeasible. Modern neural networks are high-dimensional, and different parameter settings can yield similar functions. More importantly, unlearned model auditing concerns whether unlearned samples still show membership traces in outputs or representations, instead of whether the parameters match a target distribution. Therefore, we move the auditing objective to the feature space. Instead of verifying the parameter distribution, we check whether $\mathcal F_u$ moves away from $\mathcal F_m$ and approaches $\mathcal F_n$. The feature-space auditing objective can then be written as the following Monge-type problem.
\begin{definition}
The Monge problem in feature space is:
\begin{equation}
\min_{T_F:T_{F\#}\mathcal F_u=\mathcal F_n}
\int c_F\bigl(z,T_F(z)\bigr)\,d\mathcal F_u(z),
\end{equation}
where $T_F$ is the transport map in the auditing feature space, $T_{F\#}\mathcal F_u$ is the pushforward of $\mathcal F_u$ under $T_F$, and $c_F$ measures the cost of transporting $\mathcal F_u$ to $\mathcal F_n$.
\end{definition}
The Monge formulation does not require explicitly solving $T_F$. It only specifies the desired auditing direction: after sufficient unlearning, $\mathcal F_u$ should move away from $\mathcal F_m$ and approach $\mathcal F_n$. Thus, auditing only needs to measure how far $\mathcal F_u$ has moved toward the non-member state. As illustrated in Figure~\ref{fig:tu2}, we therefore relax the transport objective into a mixture-proportion problem and approximate $\mathcal F_u$ as a convex combination of $\mathcal F_m$ and $\mathcal F_n$. The relaxation of the conditions in this Monge problem is discussed in detail in Appendix~\ref{diss}.
\begin{equation}
    \mathcal F_u
    \approx
    (1-\rho)\mathcal F_m+\rho\mathcal F_n,
    \qquad \rho\in[0,1],
\end{equation}
where $\rho$ measures the non-member-like proportion in $\mathcal F_u$. Thus, the new unlearned model auditing objective is to estimate the mixture coefficient $\rho$ instead of solving the feature-space transport map. In practice, $\rho$ can be estimated by minimizing the discrepancy between empirical distributions.
\begin{definition}
The practical estimation problem for the mixture proportion is defined as:
\begin{equation}
    \rho^*
    =
    \arg\min_{\rho\in[0,1]}
    D\left(
    \mathcal F_u,
    (1-\rho)\mathcal F_m+\rho\mathcal F_n
    \right), \label{7}
\end{equation}
where $D(\cdot,\cdot)$ denotes a distributional discrepancy measure. We then define the auditing objective as solving the optimization problem in Eq.~\eqref{7}, and use the resulting $\rho^*$ as the auditing metric for the unlearned model.
\end{definition}
\section{Method}
In this section, we propose Statistical Membership Inference (SMI) for reliable unlearned model auditing. Motivated by the previous analysis, SMI solves the optimization problem in Eq.~\eqref{7}. Statistical metrics for distributional distance are classical; see, e.g.,~\cite{duliang1,duliang2,duliang3}. Appendices~\ref{theory} and~\ref{mmd} provide the omitted mathematical details. We next introduce two solution strategies for SMI:
\begin{itemize}
\item \textbf{SMI:} An auditing tool that calculates low-order moments of distributions.
\item \textbf{SMI-M:} An auditing tool that calculates the Maximum Mean Discrepancy (MMD) between distributions.
\end{itemize}
\subsection{Statistical Membership Inference (SMI)}
For the mixture-ratio estimation problem, SMI estimates the mixing proportion using the low-order statistics of the three distributions $\mathcal F_m$, $\mathcal F_n$, and $\mathcal F_u$. Under the assumption in Eq.~\eqref{7}, the mixed auditing distribution can be expressed as follows.
\begin{proposition}[See Appendix~\ref{P3.1} for the proof.]\label{P1}
If $\mathcal F_u$ is modeled as a mixture of $\mathcal F_n$ and $\mathcal F_m$ with proportions $\rho$ and $1-\rho$, respectively, i.e.,
\begin{equation}
    \mathcal F_u=\rho \mathcal F_n +(1-\rho)\mathcal F_m,
\end{equation}
then its mean $\mu_u$ and covariance $\Sigma_u$ satisfy
\begin{equation}
        \mu_u=\rho \mu_n+(1-\rho)\mu_m,\quad
        \Sigma_u=\rho \Sigma_n+(1-\rho)\Sigma_m
        +(\rho-\rho^2)(\mu_n-\mu_m)(\mu_n-\mu_m)^\top.
\end{equation}
\end{proposition}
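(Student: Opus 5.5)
The plan is to compute the first two moments of the mixture directly, using linearity of integration with respect to a convex combination of measures. Let $Z_u\sim\mathcal F_u$, and similarly $Z_n\sim\mathcal F_n$, $Z_m\sim\mathcal F_m$. I assume all three distributions have finite second moments, with means $\mu_\cdot$ and covariances $\Sigma_\cdot$. For any integrable $g$, the mixture identity $\mathcal F_u=\rho\mathcal F_n+(1-\rho)\mathcal F_m$ gives
\[
\mathbb E[g(Z_u)]=\rho\,\mathbb E[g(Z_n)]+(1-\rho)\,\mathbb E[g(Z_m)].
\]
Equivalently, I can introduce a latent Bernoulli label $B\sim\mathrm{Bern}(\rho)$ and condition on it. Taking $g(z)=z$ immediately gives $\mu_u=\rho\mu_n+(1-\rho)\mu_m$.

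For the covariance I would take $g(z)=zz^\top$ and use $\mathbb E[ZZ^\top]=\Sigma+\mu\mu^\top$ for each component. This gives
\[
\Sigma_u=\rho(\Sigma_n+\mu_n\mu_n^\top)+(1-\rho)(\Sigma_m+\mu_m\mu_m^\top)-\mu_u\mu_u^\top .
\]
The remaining step, which is the only one needing care, is to show that
\[
\rho\mu_n\mu_n^\top+(1-\rho)\mu_m\mu_m^\top-\mu_u\mu_u^\top=(\rho-\rho^2)(\mu_n-\mu_m)(\mu_n-\mu_m)^\top .
\]
Substitute $\mu_u=\rho\mu_n+(1-\rho)\mu_m$ and expand $\mu_u\mu_u^\top$ into $\rho^2\mu_n\mu_n^\top+(1-\rho)^2\mu_m\mu_m^\top+\rho(1-\rho)(\mu_n\mu_m^\top+\mu_m\mu_n^\top)$. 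The coefficients then collect as follows: $\mu_n\mu_n^\top$ gets $\rho-\rho^2$, $\mu_m\mu_m^\top$ gets $(1-\rho)-(1-\rho)^2=\rho(1-\rho)$, and the cross terms get $-\rho(1-\rho)$. Factoring out $\rho(1-\rho)=\rho-\rho^2$ yields the outer product $(\mu_n-\mu_m)(\mu_n-\mu_m)^\top$.

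As an alternative check, the law of total covariance with the latent label $B$ gives the same result. The within-group term is $\mathbb E[\mathrm{Cov}(Z_u\mid B)]=\rho\Sigma_n+(1-\rho)\Sigma_m$. The between-group term is $\mathrm{Cov}(\mathbb E[Z_u\mid B])$, where $\mathbb E[Z_u\mid B]=\mu_m+B(\mu_n-\mu_m)$ and $\mathrm{Var}(B)=\rho(1-\rho)$. This second route is cleaner and I would present it as the main argument. I do not expect a genuine obstacle beyond stating the finite-second-moment assumption and carrying out the bookkeeping above. The identities hold for empirical distributions as well, since those are just finitely supported measures.
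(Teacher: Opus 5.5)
Your proof is correct and is essentially the paper's argument. The paper splits $\mathbb E_{\mathcal F_u}[(x-\mu_u)(x-\mu_u)^\top]$ over the two components and recenters each at its own mean to get $\rho\Sigma_n+(1-\rho)\Sigma_m+\rho(\mu_n-\mu_u)(\mu_n-\mu_u)^\top+(1-\rho)(\mu_m-\mu_u)(\mu_m-\mu_u)^\top$, then substitutes $\mu_n-\mu_u=(1-\rho)(\mu_n-\mu_m)$ and $\mu_m-\mu_u=-\rho(\mu_n-\mu_m)$; this is your law-of-total-covariance decomposition written out by hand, and your remark on empirical distributions is right provided $\Sigma$ is the covariance of the empirical measure itself (normalization $1/n$, not $1/(n-1)$).
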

From Proposition~\ref{P1}, estimating the forgetting degree of $\mathcal F_u$ can be reduced to estimating the mixture coefficient $\rho$. SMI estimates $\rho$ by matching the empirical covariance of $\mathcal F_u$ with the covariance predicted by the mixture model.
\begin{lemma}[Solution of SMI]
Let
$d=\mu_n-\mu_m,B=dd^\top,
A=\Sigma_u-\Sigma_m,
G=\Sigma_n-\Sigma_m+B .$
SMI estimates the mixture ratio by solving:
\begin{equation}
\min_{0\le\rho\le 1}\left\|
\Sigma_u-
\left[
\rho \Sigma_n+(1-\rho)\Sigma_m
+(\rho-\rho^2)(\mu_n-\mu_m)(\mu_n-\mu_m)^\top
\right]
\right\|^2_F. \label{youhauwentismi}
\end{equation}
The solution is selected from the search space $\mathcal C$:
\begin{equation}
\rho^\star=
\arg\min_{\rho\in\mathcal C}
\left\|A-\rho G+\rho^2 B\right\|_F^2. \label{SMI-J}
\end{equation}
The search space $\mathcal C$ is:
\begin{equation}
\mathcal C
=
\{0,1\}
\cup
\Big\{
\rho\in[0,1]:
2\|B\|_F^2\rho^3
-3\langle G,B\rangle_F\rho^2 +
\left(\|G\|_F^2+2\langle A,B\rangle_F\right)\rho
-\langle A,G\rangle_F
=0
\Big\},
\end{equation}
where \(\langle A,B\rangle_F=\mathrm{tr}(A^\top B)\) denotes the Frobenius inner product. Specifically, the solution process can be described as first solving a univariate cubic equation in $\rho$, and then substituting the obtained roots into the search space to select the minimizer. The complete proof is provided in Appendix~\ref{L3.2}. \label{lemmaSMI}
\end{lemma}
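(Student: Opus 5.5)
The plan is to rewrite the residual in terms of $A,G,B$, reduce the problem to minimizing a univariate quartic on $[0,1]$, and then invoke the first-order optimality conditions for a smooth function on a compact interval.

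\textbf{Step 1: rewrite the residual.} Expanding the bracket in Eq.~\eqref{youhauwentismi} gives
\begin{equation*}
\Sigma_u-\rho\Sigma_n-(1-\rho)\Sigma_m-(\rho-\rho^2)B = (\Sigma_u-\Sigma_m)-\rho(\Sigma_n-\Sigma_m+B)+\rho^2 B = A-\rho G+\rho^2 B .
\end{equation*}
So the objective is exactly $f(\rho)=\|A-\rho G+\rho^2B\|_F^2$, which is the function minimized in Eq.~\eqref{SMI-J}. This step only uses the definitions of $A,G,B$; Proposition~\ref{P1} merely motivates the model covariance.

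\textbf{Step 2: expand $f$ and differentiate.} Using bilinearity of $\langle\cdot,\cdot\rangle_F$,
\begin{equation*}
f(\rho)=\|A\|_F^2-2\rho\langle A,G\rangle_F+\rho^2\left(\|G\|_F^2+2\langle A,B\rangle_F\right)-2\rho^3\langle G,B\rangle_F+\rho^4\|B\|_F^2 .
\end{equation*}
This is a polynomial of degree at most four. Its derivative is
\begin{equation*}
f'(\rho)=2\left[2\|B\|_F^2\rho^3-3\langle G,B\rangle_F\rho^2+\left(\|G\|_F^2+2\langle A,B\rangle_F\right)\rho-\langle A,G\rangle_F\right].
\end{equation*}
Up to the factor $2$, this is the cubic appearing in the definition of $\mathcal C$.

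\textbf{Step 3: locate the minimizer.} Since $f$ is continuous on the compact set $[0,1]$, a minimizer $\rho^\star$ exists. By Fermat's theorem, $\rho^\star$ is either an endpoint or an interior point with $f'(\rho^\star)=0$, so $\rho^\star\in\mathcal C$. Conversely $\mathcal C\subseteq[0,1]$, so minimizing $f$ over $\mathcal C$ yields the same minimum value as minimizing over $[0,1]$. This justifies the procedure: solve the cubic, keep the real roots in $[0,1]$, add $\{0,1\}$, and evaluate $f$ on this set.

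\textbf{Main obstacle: degenerate cases.} These need care to ensure $\mathcal C$ is finite and the argmin is well defined.
\begin{itemize}
\item If $B\neq 0$, the cubic has at most three roots, so $\mathcal C$ has at most five elements.
\item If $B=0$ and the polynomial does not vanish identically, $\mathcal C$ is still finite.
\item If the cubic vanishes identically, then $f$ is constant, every $\rho\in[0,1]$ is a minimizer, and $\mathcal C=[0,1]$. The statement still holds in this case, with any point of $\mathcal C$ (for instance an endpoint) an admissible choice of $\rho^\star$.
\end{itemize}
Beyond this, the only point to verify is that the $\rho^2$ coefficient collects both $\|G\|_F^2$ and the cross term $2\langle A,B\rangle_F$, which arises from $2\langle A,\rho^2B\rangle_F$.
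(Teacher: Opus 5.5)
Your proposal is correct and follows essentially the same route as the paper: rewrite the residual as $A-\rho G+\rho^2B$, expand the quartic, differentiate to get the cubic, and note that a minimizer on the compact interval $[0,1]$ is either an endpoint or an interior stationary point. Your handling of the degenerate case where the cubic vanishes identically (so $f$ is constant and $\mathcal C=[0,1]$) is a small refinement; the paper's proof omits it and simply calls $\mathcal C$ finite.
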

\begin{wrapfigure}{r}{0.49\textwidth}
\noindent\makebox[\linewidth][r]{%
\begin{minipage}{0.47\textwidth}
\begin{algorithm}[H]
\caption{SMI}
\label{smi}
\KwIn{Member data $D_m$, non-member data $D_n$, pending-audit data $D_u$, model $w$, audit function $g(\cdot,w)$}
\KwOut{Mixture ratio $\rho^*$.}
    $(\mu_m, \Sigma_m) \leftarrow \operatorname{stats}(g(D_m;w))$\;
    $(\mu_n, \Sigma_n) \leftarrow \operatorname{stats}(g(D_n;w))$\;
    $(\Sigma_u) \leftarrow \operatorname{stats}(g(D_u;w))$\;
    $\displaystyle
    \rho^\star\leftarrow
\arg\min_{\rho\in\mathcal C}
\left\|A-\rho G+\rho^2 B\right\|_F^2$\;
\Return{$\rho^* $}\;
\end{algorithm}
\end{minipage}}
\vspace{-1.0em}
\end{wrapfigure}
The SMI optimization is simple, and the resulting $\rho^*$ serves as the final auditing result. SMI computes the mean and variance of neural features, rather than raw data. For a sample $x$ and model parameters $w$, we denote the extracted feature by $g(x;w)$. Algorithm~\ref{smi} gives the pseudo-code of SMI. The optimization in Lemma~\ref{lemmaSMI} may be unstable because it depends on the empirical statistics $\Sigma_u$, $\Sigma_n$, and $\Sigma_m$. With limited auditing samples, covariance estimates can fluctuate, especially for high-dimensional neural features. Direct covariance matching may therefore give unstable mixture-ratio estimates. To improve stability, we instead use kernel mean embeddings and match the embedding of $\mathcal F_u$ to a convex combination of those of $\mathcal F_m$ and $\mathcal F_n$.

\subsection{Statistical Membership Inference-MMD (SMI-M)}
The computational error in estimating the mixture ratio using low-order moments mainly arises from the estimation of the covariance matrix.
Therefore, we introduce a statistical feature optimization method, embedding $g(\cdot;w)$ into a RKHS, thereby transforming the original second-order moment (covariance $\Sigma$) optimization problem into a first-order moment embedding problem in RKHS.

Specifically, we introduce a kernel function $k(\cdot, \cdot)$ for $g(\cdot;w)$ and define the kernel mean embedding as $\mu^{(k)}_\cdot=\mathbb E[k(g(\cdot;w),g(\cdot;w)^\top)]$. This mapping transforms the second-order information of the original distribution into the first-order information of the embedded distribution. Accordingly, the Lemma~\ref{lemmaSMI} can be reformulated as follows:
\begin{lemma}[Optimization problem of SMI-M]
    For the embedded $\mu^{(k)}_f, \mu^{(k)}_v, \mu^{(k)}_t$, the optimization problem~\eqref{youhauwentismi} is equivalent to the following convex quadratic programming problem:
    \begin{equation}
        \min_{0\le\rho\le 1}\,\, \rho^2\|\mu^{(k)}_v-\mu^{(k)}_t \|^2_\mathcal H-2\rho\langle\mu_f^{(k)}-\mu_t^{(k)}, \mu_v^{(k)}-\mu_t^{(k)}\rangle_\mathcal H,\label{youhuamd}
    \end{equation}
where  $\|\cdot\|_\mathcal H$ denotes the Hilbert space norm, and $\langle\cdot,\cdot \rangle_{\mathcal H}$ denotes the Hilbert space inner product. This problem is a convex quadratic programming problem and can be solved directly by a solver without manually deriving a closed-form solution. The complete proof is provided in Appendix~\ref{A:3.3}.
\end{lemma}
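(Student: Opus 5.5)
We prove the lemma by recasting the covariance-matching objective~\eqref{youhauwentismi} as a first-moment matching problem in the RKHS $\mathcal H$ of the chosen kernel $k$. Read the subscripts as follows: $f$ is the pending-audit (unlearned) distribution $\mathcal F_u$, $v$ is the non-member distribution $\mathcal F_n$, and $t$ is the member distribution $\mathcal F_m$.

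\textbf{Step 1: linearity of the kernel mean embedding.} Let $\phi(z)=k(z,\cdot)$ be the canonical feature map and $\mu^{(k)}_P=\mathbb E_{z\sim P}[\phi(z)]$. The map $P\mapsto\mu^{(k)}_P$ is linear in $P$. Hence, under the mixture model of Proposition~\ref{P1},
\[
\mu^{(k)}_f=\rho\,\mu^{(k)}_v+(1-\rho)\,\mu^{(k)}_t .
\]
This plays the role of the mean identity in Proposition~\ref{P1}. There is one key difference: the quadratic cross term $(\rho-\rho^2)dd^\top$ in the covariance identity disappears. The second-order information of $g(\cdot;w)$ is already encoded as first-order information in $\mathcal H$ (exactly so when $k$ is a polynomial kernel of degree at least two, and fully when $k$ is characteristic).

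\textbf{Step 2: the embedded residual.} The mismatch between the audited distribution and the $\rho$-mixture is measured by the squared MMD,
\[
\bigl\|\mu^{(k)}_f-\rho\mu^{(k)}_v-(1-\rho)\mu^{(k)}_t\bigr\|_{\mathcal H}^2
=\bigl\|(\mu^{(k)}_f-\mu^{(k)}_t)-\rho(\mu^{(k)}_v-\mu^{(k)}_t)\bigr\|_{\mathcal H}^2 .
\]
This is the analogue of $\|A-\rho G+\rho^2B\|_F^2$ in Lemma~\ref{lemmaSMI}, with $A\leftrightarrow\mu^{(k)}_f-\mu^{(k)}_t$, $G\leftrightarrow\mu^{(k)}_v-\mu^{(k)}_t$, and $B\leftrightarrow 0$. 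The equivalence with~\eqref{youhauwentismi} should be read in this sense. Both problems minimise, over $\rho\in[0,1]$, the discrepancy of Eq.~\eqref{7} between $\mathcal F_u$ and the mixture. The first uses the Frobenius norm on second moments; the second uses the RKHS norm on embeddings. Under the mixture model, both attain zero at the same true $\rho$.

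\textbf{Step 3: expansion and convexity.} Expanding the square gives
\[
\|\mu^{(k)}_f-\mu^{(k)}_t\|_{\mathcal H}^2+\rho^2\|\mu^{(k)}_v-\mu^{(k)}_t\|_{\mathcal H}^2-2\rho\langle\mu^{(k)}_f-\mu^{(k)}_t,\mu^{(k)}_v-\mu^{(k)}_t\rangle_{\mathcal H}.
\]
The first term does not depend on $\rho$, so dropping it yields exactly~\eqref{youhuamd}. The leading coefficient $\|\mu^{(k)}_v-\mu^{(k)}_t\|_{\mathcal H}^2\ge0$, so the objective is a convex quadratic on the compact interval $[0,1]$, and any QP solver returns the global minimiser. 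For completeness, the solution is the clipped ratio $\langle\cdot,\cdot\rangle_{\mathcal H}/\|\cdot\|_{\mathcal H}^2$ projected onto $[0,1]$. All inner products are computable by the kernel trick as averages of $k$ over pairs of samples.

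\textbf{Main obstacle.} The delicate part is justifying the word ``equivalent.'' Literally, the Frobenius objective in~\eqref{youhauwentismi} is a quartic in $\rho$, whereas~\eqref{youhuamd} is quadratic. The two agree only in the sense that they share the minimiser $\rho$ whenever the mixture model holds exactly. I would make this precise using injectivity of the embedding for a characteristic kernel, so that zero MMD holds if and only if the distributions coincide. Such a $k$ in particular matches second moments, so zero MMD implies zero covariance residual.
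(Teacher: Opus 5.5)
Your proposal is correct and follows the paper's proof. The paper likewise rewrites the residual as $(\mu^{(k)}_f-\mu^{(k)}_t)-\rho(\mu^{(k)}_v-\mu^{(k)}_t)$, expands the squared RKHS norm, drops the $\rho$-independent term $\|\mu^{(k)}_f-\mu^{(k)}_t\|_{\mathcal H}^2$, and reads off convexity from the nonnegative leading coefficient; it simply takes this RKHS residual to be the objective and never addresses the quartic-versus-quadratic mismatch you flag. Your caveat about ``equivalent'' is the more accurate reading, with one refinement: the implication zero MMD $\Rightarrow$ zero covariance residual is one-way, so under an exact mixture with true ratio $\rho_0$ both problems are minimised at $\rho_0$, but the Frobenius problem can have extra minimisers (e.g.\ if $\Sigma_n=\Sigma_m$ its residual is proportional to $(\rho_0-\rho)(1-\rho_0-\rho)$, so $1-\rho_0$ also attains zero), and ``share the minimiser'' should therefore read ``share $\rho_0$ as a minimiser''.
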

\subsection{Resampling-based Uncertainty Estimation for SMI}
Inspired by shadow models in MIA, we use resampling to estimate the uncertainty of SMI. Shadow-model methods train multiple auxiliary models, which is costly for unlearned model auditing. SMI avoids this cost by bootstrapping $D_u$ in the auditing feature space. For each resampled set, we solve the SMI optimization again and obtain $\{\rho_b^*\}_{b=1}^{B}$.

Resampling doesn't change the point estimate $\rho^*$. It only provides an empirical reference range for finite-sample sensitivity. Concentrated $\rho_b^*$ indicates stable auditing, while large variation suggests that $\rho^*$ should be interpreted with caution. We report $\{\rho^*_{5\%},\rho^*_{95\%}\}$ as the reference range.

\section{Experiment}
To verify the effectiveness of SMI, we will examine it from the following five perspectives:
\begin{itemize}
\item[1.] \textbf{Performance:} \textit{Does SMI outperform conventional MIA-based auditing?}In Section 4.2, we evaluate SMI on public datasets and the unlearning-audit benchmark, and compare it with conventional MIA methods as baselines.
\item[2.] \textbf{Computational Cost:} \textit{Does SMI avoid the substantial computational overhead of shadow-model-based auditing?}
In Section 4.3, we compare the runtime and inference cost of different SMI variants and conventional MIA methods, with a particular focus on shadow model and non-shadow model settings.
\item[3.] \textbf{Limited-sample Reliability:} \textit{Can SMI provide stable auditing results with limited unlearning samples?}
In Section 4.4, we evaluate whether SMI remains reliable when the number of audited samples is limited, and analyze the stability of its estimated forgetting ratio and resampling-based reference range.
\item[4.] \textbf{Robustness:} \textit{Is SMI-M stable under different experimental conditions?}
In Section 4.5, we conduct robustness experiments for SMI-M to assess whether its auditing performance remains stable under variations in data, model behavior, or sampling conditions.
\item[5.] \textbf{Practicality:} \textit{Does SMI work in real unlearning scenarios?}
In Section 4.6, we evaluate SMI in realistic machine unlearning settings to examine whether it can reliably audit actual forgetting behavior beyond controlled benchmark comparisons.
\end{itemize}
\subsection{Experimental Setup}
\textbf{Dataset and Models:} We construct image-classification tasks using the CIFAR-10, CIFAR-100~\cite{Cifar}, and CINIC-10~\cite{cinic} datasets. We adopt ResNet-18~\cite{renset18}, Resnet50~\cite{RESNET50} and VIT~\cite{vit} for both the target model and the shadow models required by conventional MIA baselines.

\textbf{Baselines:} We evaluate model-agnostic MIA methods using the unlearning audit benchmark released alongside IAM~\cite{IAM}. We adopt seven MIA-based strategies as baselines forunlearned model auditing, as listed below:
\begin{itemize}
\item \textbf{Random:} Randomly estimates the member proportion as a worst-case reference.
\item \textbf{Unlea:}This metric measures the output discrepancy before and after unlearning.
    \item \textbf{LiRA:} Performs auditing by comparing the output likelihood ratio between the unlearning dataset and the training dataset on the target model.
    \item \textbf{EMIA:} Extends LiRA with incremental auditing decisions.
    \item \textbf{RMIA:} Builds a robust likelihood ratio based on LiRA.
    \item \textbf{IAM:} Uses a model interpolation strategy to localize model information for auditing.
    \item \textbf{RULI:} Explicitly models the unlearning process for auditing.
\end{itemize}

\textbf{MIA Settings.} For all MIA baselines, we use the online white-box setting, where the attacker accesses the pre-unlearning model, the post-unlearning model, and all shadow models. This gives the baselines the strongest capability under our auditing setup. For LiRA-type methods, we use the full logit vector and learn the threshold on an independent calibration set with 10,000 member and 10,000 non-member samples. The learned threshold is then applied to the unlearning data, with calibration and evaluation strictly separated. For MIA methods with class-level decisions, we report the better result between class-level and full-sample evaluation as a strong upper bound.

\textbf{Metrics.} MIA methods output sample-level membership decisions, while SMI methods estimate a distribution-level mixture ratio. For MIA, we report FNR, FPR, and the proportion of unlearning samples classified as non-members as the aligned $\rho^*$. For SMI, we report the estimated $\rho^*$.

\subsection{The Performance of SMI}
\textbf{Unlearning Task Settings.} In this subsection, we use full retraining as the optimal unlearned model auditing target, and set the unlearning data to a randomly selected 5\% or 10\% subset of each dataset. Practical unlearning algorithms are discussed in the following subsection.

\begin{table}[t]
\centering
\caption{Auditing performance of SMI and MIA-based methods on complete unlearning. The Dataset column lists the model, task, unlearning percentage, and number of shadow models. Experimental results for some datasets, with the \textbf{best} and \underline{second best} results are highlighted. The complete results can be found in Appendix~\ref{A:zhubiao}.}
\label{tab:Results12}
\resizebox{\textwidth}{!}{%
\begin{tabular}{@{}llGGccccccc@{}}
\toprule
Dataset & Metric & SMI & SMI-M & RULI & IAM & RMIA & EMIA & LiRA & Unleak & Ramdom \\
\midrule
\multirow{3}{*}{Resnet18-Cifar10-5\%-64} & FNR & -- & -- & 13.28\% & 12.72\% & 49.68\% & 70.82\% & 13.80\% & 41.24\% & 49.52\% \\
 & FPR & -- & -- & 0.00\% & 13.08\% & 50.04\% & 70.72\% & 0.00\% & 43.48\% & 50.88\% \\
 & $\rho^*$ & \underline{88.89\%} & \textbf{92.44\%} & 87.94\% & 88.67\% & 66.68\% & 39.10\% & 87.84\% & 62.58\% & 50.06\% \\
\midrule
\multirow{3}{*}{Resnet18-Cifar100-5\%-16} & FNR & -- & -- & 15.60\% & 17.76\% & 87.80\% & 24.72\% & 10.47\% & 81.04\% & 48.72\% \\
 & FPR & -- & -- & 0.00\% & 17.96\% & 87.96\% & 22.24\% & 0.00\% & 82.88\% & 49.52\% \\
 & $\rho^*$ & 79.64\% & \textbf{89.84\%} & 83.00\% & \underline{89.36\%} & 67.19\% & 84.64\% & 86.64\% & 28.52\% & 50.88\% \\
\midrule
\multirow{3}{*}{Resnet50-Cifar100-5\%-32} & FNR & -- & -- & 36.76\% & 86.72\% & 91.92\% & 66.52\% & 37.41\% & 93.08\% & 47.68\% \\
 & FPR & -- & -- & 0.00\% & 63.83\% & 56.00\% & 57.56\% & 0.00\% & 47.69\% & 50.15\% \\
 & $\rho^*$ & 81.22\% & \textbf{92.64\%} & 73.08\% & 63.56\% & 58.24\% & 67.84\% & \underline{87.04\%} & 10.40\% & 54.68\% \\
\midrule
\multirow{3}{*}{Resnet50-Cifar100-10\%-32} & FNR & -- & -- & 26.90\% & 87.73\% & 91.50\% & 70.72\% & 38.29\% & 93.56\% & 49.71\% \\
 & FPR & -- & -- & 0.00\% & 61.69\% & 53.21\% & 57.81\% & 0.00\% & 49.77\% & 42.27\% \\
 & $\rho^*$ & 83.46\% & \textbf{93.95\%} & 74.44\% & 64.98\% & 60.74\% & 69.94\% & \underline{87.38\%} & 4.68\% & 54.14\% \\
\midrule
\multirow{3}{*}{Resnet50-Cinic10-10\%-16} & FNR & -- & -- & 11.36\% & 29.92\% & 11.88\% & 22.08\% & 11.18\% & 19.46\% & 49.82\% \\
 & FPR & -- & -- & 0.00\% & 30.96\% & 11.28\% & 21.68\% & 0.00\% & 14.04\% & 50.23\% \\
 & $\rho^*$ & \underline{90.89\%} & \textbf{92.27\%} & 89.88\% & 85.35\% & 87.64\% & 89.66\% & 90.47\% & 78.02\% & 49.76\% \\
\midrule
\multirow{3}{*}{VIT-Cinic10-10\%-16} & FNR & -- & -- & 15.92\% & 57.40\% & 85.04\% & 65.36\% & 18.76\% & 43.24\% & 49.97\% \\
 & FPR & -- & -- & 0.00\% & 53.37\% & 86.12\% & 65.24\% & 0.00\% & 44.76\% & 51.93\% \\
 & $\rho^*$ & \underline{91.44\%} & \textbf{93.62\%} & 89.84\% & 41.01\% & 15.33\% & 36.66\% & 83.51\% & 55.71\% & 51.04\% \\
\bottomrule
\end{tabular}
}
\end{table}
\textbf{Results.}
Table~\ref{tab:Results12} reports the results across datasets for both MIA- and SMI-based auditing. Overall, SMI-M achieves the best estimation accuracy. Among non-SMI methods, RuLI and LiRA perform best in most settings: RuLI explicitly models the unlearning data, while LiRA uses a simpler and more stable auditing pipeline.Better auditing performance is usually associated with lower FPR; for example, RuLI and LiRA consistently achieve an FPR of 0. However, different metrics are not strictly aligned. In many settings, LiRA has a higher FNR than RuLI, whereas RuLI gives a more reliable $\rho^*$. This supports our corollary: even a well-trained MIA may still suffer from an additional auditing error in unlearned model auditing.
\subsection{The Computational Cost of SMI}
\textbf{Computational Cost.}
Table~\ref{table2} reports the computational overhead of different MIA methods and SMI variants for a single auditing run on CIFAR-100. For SMI methods, the runtime covers auditing over all classes. Methods without shadow models generally have low cost, although SMI-M is slower due to its kernel embedding computation.This overhead is still small compared with training shadow models. For example, training one LiRA shadow model with a ResNet-18 backbone takes enough time to run SMI-M hundreds of times, while shadow-model-based MIA usually requires multiple shadow models. For SMI, the main cost comes from bootstrapping, and the runtimes in Table~\ref{table2} are reported with 200 bootstrap iterations. Non-shadow methods require one inference pass, except Unleak, which requires two. Shadow-model methods require one pass for each shadow model plus one additional pass.
\begin{table}[htbp]
\centering
\captionof{table}{Computational cost of MIA methods. A dash (–) indicates that the corresponding method is not used. The compute time reported for Train shadow model refers to the cost of training a single shadow model.}
\resizebox{\linewidth}{!}{%
\begin{tabular}{lccc}
\toprule
Method & Compute times & Shadow model & Bootstrap \\
\midrule
Inference on one model & $4.47 \times 10^4$ms & - & - \\
Logit Metric & 217.25ms & - & - \\
Unleak & 667.09ms & - & - \\
SMI & 930.71ms & - & 200 \\
SMI-M & 4301.66ms & - & 200 \\
\midrule
Train one shadow model & $2.46 \times 10^6$ms & - & - \\
LiRA & 1464.51ms & 8 & - \\
EMIA & 1625.26ms & 8 & - \\
RMIA & 2475.42ms & 8 & - \\
IAM & 2944.35ms & 8 & - \\
RULI & 5672.83ms & 8 & - \\
\bottomrule
\end{tabular}
}
\label{table2}
\end{table}

\subsection{The Small-Sample Reliability of SMI}
\begin{figure}
\vspace{-1.0em}
\centering
    \includegraphics[width=0.6\linewidth]{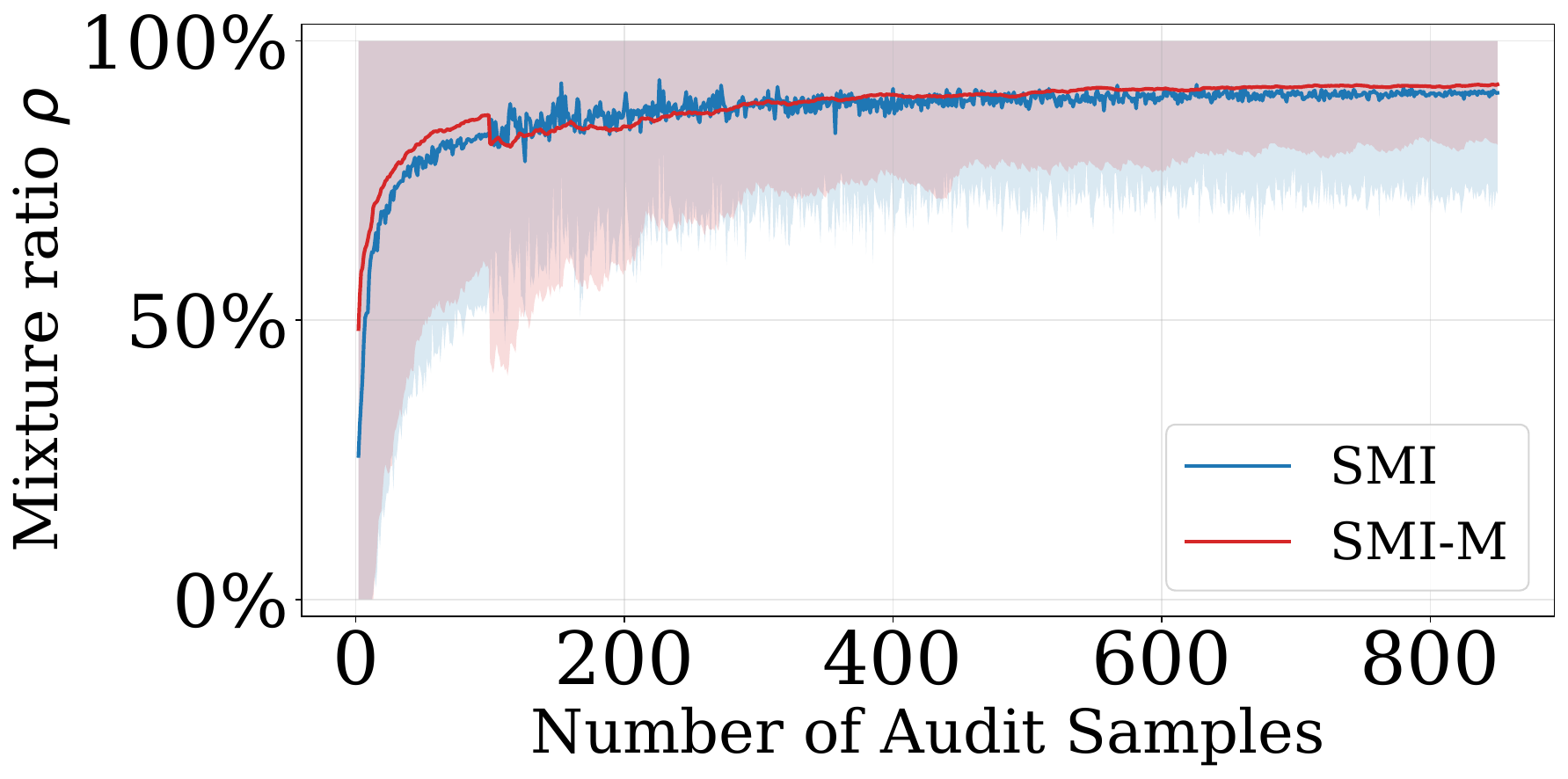}
    \caption{CINIC-ResNet50}
    \label{fig:cinic-resnet50-b}
\caption{Effect of audit sample size on SMI and SMI-M performance. Solid lines show the mean, and dashed lines show the reference range.}
\label{fig:learnicurves}
\vspace{-1.0em}
\end{figure}
\textbf{The Small-Sample Challenge of SMI.} Since SMI relies on distributional mixture estimation, its error increases with smaller sample sizes. We therefore study its sample requirement on CINIC-10 by varying the audit size from 1 to 800. For each size, we run 1,000 resampling audits when the size is below 200 and 500 audits otherwise. As shown in Figure~\ref{fig:learnicurves}, SMI converges around 200 samples, or about 20 samples per class for a 10-class dataset.
\begin{wrapfigure}{r}{0.49\textwidth}
\vspace{-1.0em}
\centering
\begin{subfigure}[t]{0.48\linewidth}
    \centering
    \includegraphics[width=\linewidth]{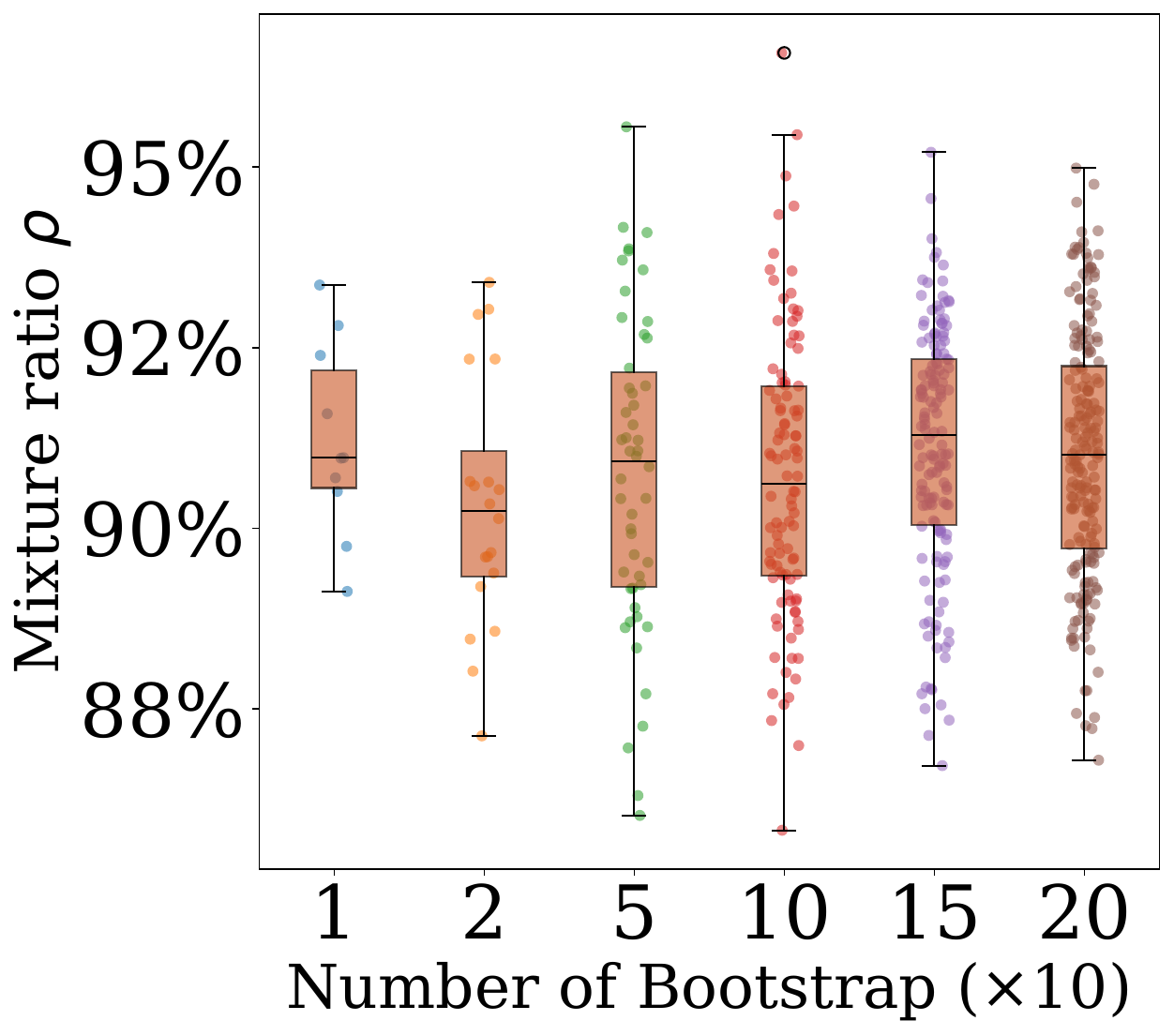}
    \caption{SMI}
    \label{fig:cinic-resnet50-a}
\end{subfigure}
\hfill
\begin{subfigure}[t]{0.48\linewidth}
    \centering
    \includegraphics[width=\linewidth]{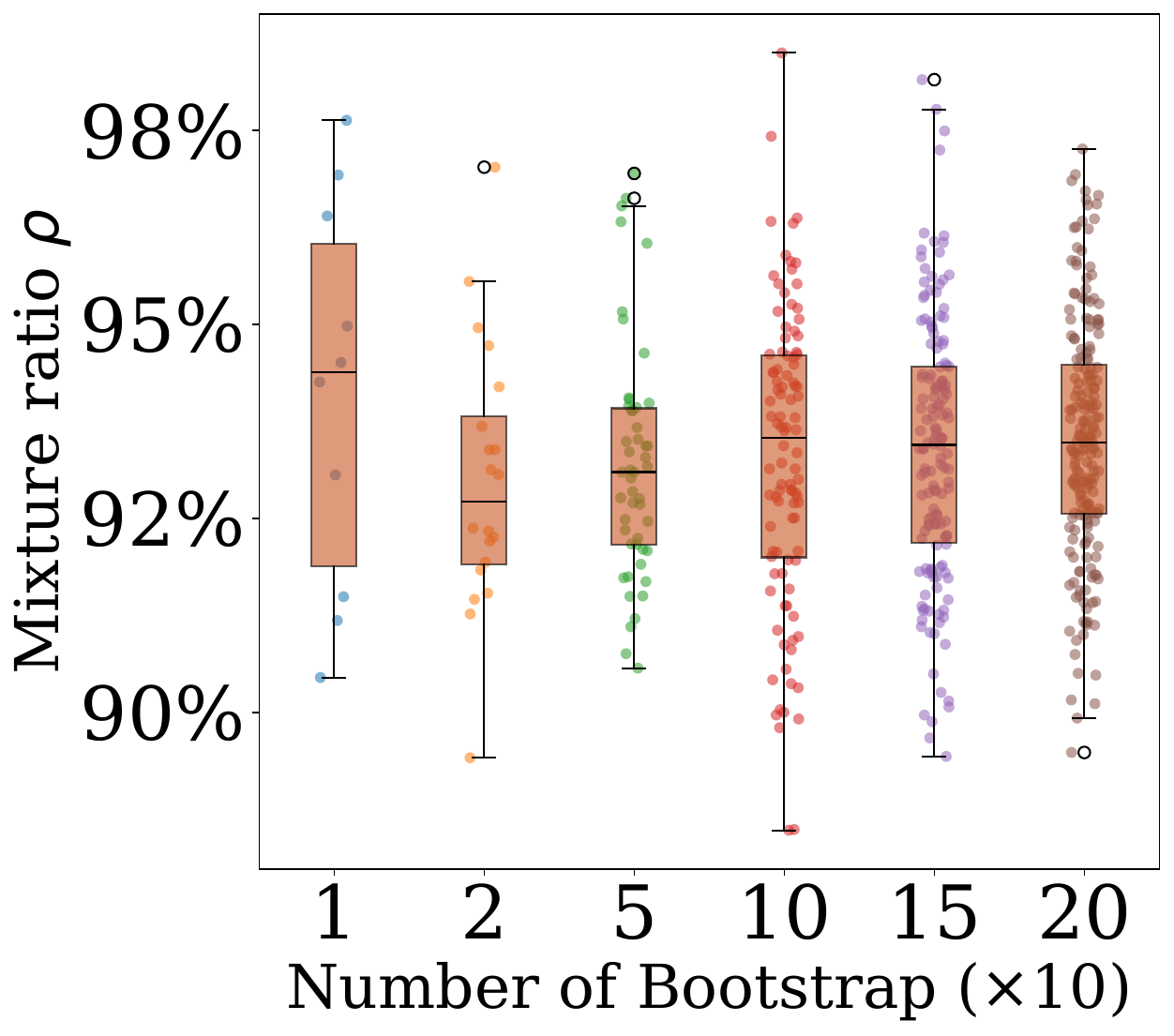}
    \caption{SMI-M}
    \label{fig:cinic-resnet50-b}
\end{subfigure}
\caption{Performance fluctuation of SMI and SMI-M with different numbers of resampling iterations on 10\% randomly unlearned data.}
\label{fig:learning-444}
\vspace{-1.0em}
\end{wrapfigure}

\textbf{Bootstrap Overhead.} Figure~\ref{fig:learning-444} illustrates the auditing performance of SMI and SMI-M under different numbers of bootstrap groups. As a white-box auditing method, SMI has already demonstrated strong and reliable performance advantages. Across varying numbers of bootstrap groups, SMI does not exhibit significant fluctuations in either the median performance or the reference ranges, indicating that the bootstrap procedure is well controlled and does not need to be excessively increased to improve SMI’s performance.
\subsection{The Robustness of SMI}
\textbf{Hyperparameter.} Since SMI-M adopts a kernel embedding strategy, it involves several hyperparameters that need to be discussed. The hyperparameters of MMD generally include the choice of kernel function and the bandwidth selection strategy. In this subsection, we consider the RBF kernel, Laplacian kernel, linear kernel, polynomial kernel, and cosine kernel, as well as different RBF-based bandwidth selection strategies. The experimental results are shown in the table~\ref{tab:kernel_comparison_cifar100}. Overall, the standard median bandwidth with the RBF kernel achieves the best MMD hyperparameter setting. The bandwidth is computed as the median distance over 10,000 sampled points from the member, non-member, and unlearning data. If the number of unlearning samples is small, the bandwidth can be estimated using only the member and non-member data.
\begin{table}[t]
\centering
\caption{Relationship between different kernels and the performance of SMI-M. Values in parentheses denote the multiplier applied to the median bandwidth. }
\label{tab:kernel_comparison_cifar100}
\resizebox{\textwidth}{!}{%
\begin{tabular}{lccccccccc}
\toprule
Data & RBF & RBF(0.25X) & RBF(0.5X) & RBF(2X) & RBF(4X) & Laplacian & Linear & Polynomial & Consine \\
\midrule
Cifar100-5\%  & 89.84\% & 24.39\% & 77.70\% & 86.88\% & 86.72\% & 80.05\% & 86.65\% & 82.56\% & 83.98\% \\
Cifar100-10\% & 91.84\% & 26.03\% & 79.22\% & 89.88\% & 89.90\% & 82.41\% & 89.88\% & 85.63\% & 88.16\% \\
\bottomrule
\end{tabular}
}
\end{table}
\subsection{The Practicality of SMI}
\textbf{Practical Tasks.}
Beyond retraining, practical scenarios involve more approximate unlearning tasks that are difficult to audit. These methods may exhibit low accuracy on $D_u$, but their true level of forgetting is hard to determine. In this subsection, we evaluate the unlearning performance of three approximate unlearning strategies other than retraining: FineTune, Fisher, and Scrub. Their detailed descriptions are provided in Appendix~\ref{unlearning}. The final results are shown in Table~\ref{tab:comparison_cifar100_cinic}. Among them, Scrub achieves the best unlearning performance.

\begin{table}[h]
\centering
\caption{SMI-based Detection of Unlearning Levels in Practical Unlearning Tasks}
\label{tab:comparison_cifar100_cinic}
\resizebox{\textwidth}{!}{%
\begin{tabular}{lcccccccc}
\toprule
\multirow{2}{*}{Method}
& \multicolumn{4}{c}{Cifar100}
& \multicolumn{4}{c}{Cinic} \\
\cmidrule(lr){2-5} \cmidrule(lr){6-9}
& Resnet18-5\% & Resnet18-10\% & Resnet50-5\% & Resnet50-10\%
& Resnet50-5\% & Resnet50-10\% & Vit-5\% & Vit-10\% \\
\midrule
Retrain  & 89.84\% & 91.84\% & 92.64\% & 93.95\% & 90.72\% & 92.27\% & 92.13\% & 93.62\% \\
\midrule
FineTune & 63.01\% & 66.54\% & 59.63\% & 61.79\% & 67.35\% & 68.66\% & 58.99\% & 55.42\% \\
Fisher   & 32.51\% & 34.70\% & 39.82\% & 34.86\% & 40.83\% & 44.98\% & 44.77\% & 38.93\% \\
Scrub     & 76.34\% & 75.95\% & 72.61\% & 77.58\% & 69.21\% & 63.27\% & 62.43\% & 66.16\% \\
\bottomrule
\end{tabular}
}
\end{table}

\section{Conclusion}
In this work, we propose SMI, an unlearned model auditing method based on measuring distributional distances. We revisit the implicit bounds underlying MIA attacks and MIA-based auditing, and argue that failed attacks can produce illusory forgetting. To address this issue, we propose a mixture-ratio estimation strategy for auditing model unlearning, which provides an efficient unlearned model auditing approach without requiring shadow models. Experimental results show that our MMD-based SMI-M outperforms existing MIA-based methods. It demonstrates strong capability in reliably measuring unlearning outcomes, achieving efficient computation without training shadow models, and providing reference intervals to assess the reliability of the auditing conclusions.

\section*{Acknowledgements}
We thank the following non-author contributors for their valuable contributions and insights, which greatly enriched the manuscript: Xinpeng Ling from Tongji University for suggestions on the writing; Zhanpeng Shi from Shanghai Jiao Tong University for suggestions on the writing; Sizhe Liu from The Chinese University of Hong Kong for advice on probability theory; Baolin Chen from Capital Normal University for valuable insights into high-dimensional statistics; Yihao Xiao from Shanghai University of Finance and Economics for help with interpretable AI; Kejia Zhang from Heilongjiang University for analysis from an AI safety perspective; and Zihui Song from Heilongjiang University for assistance with several figures. We sincerely thank all of them for their support.

\bibliographystyle{plain}
\bibliography{ref}

\begin{thebibliography}{10}

\bibitem{Sha2}
Li~Bai, Qingqing Ye, Xinwei Zhang, Sen Zhang, Zi~Liang, Jianliang Xu, and Haibo Hu.
\newblock Toward efficient inference attacks: Shadow model sharing via mixture-of-experts.
\newblock {\em arXiv preprint arXiv:2510.13451}, 2025.

\bibitem{Lira}
Nicholas Carlini, Steve Chien, Milad Nasr, Shuang Song, Andreas Terzis, and Florian Tramer.
\newblock Membership inference attacks from first principles.
\newblock In {\em 2022 IEEE symposium on security and privacy (SP)}, pages 1897--1914. IEEE, 2022.

\bibitem{Unleark}
Min Chen, Zhikun Zhang, Tianhao Wang, Michael Backes, Mathias Humbert, and Yang Zhang.
\newblock When machine unlearning jeopardizes privacy.
\newblock In {\em Proceedings of the 2021 ACM SIGSAC Conference on Computer and Communications Security}, CCS '21, page 896–911, New York, NY, USA, 2021. Association for Computing Machinery.

\bibitem{cinic}
Luke~N Darlow, Elliot~J Crowley, Antreas Antoniou, and Amos~J Storkey.
\newblock Cinic-10 is not imagenet or cifar-10.
\newblock {\em arXiv preprint arXiv:1810.03505}, 2018.

\bibitem{Sha1}
Yuntao Du, Jiacheng Li, Yuetian Chen, Kaiyuan Zhang, Zhizhen Yuan, Hanshen Xiao, Bruno Ribeiro, and Ninghui Li.
\newblock Cascading and proxy membership inference attacks.
\newblock {\em arXiv preprint arXiv:2507.21412}, 2025.

\bibitem{unlearningicml2}
Ali Ebrahimpour-Boroojeny, Hari Sundaram, and Varun Chandrasekaran.
\newblock Not all wrong is bad: Using adversarial examples for unlearning.
\newblock In Aarti Singh, Maryam Fazel, Daniel Hsu, Simon Lacoste-Julien, Felix Berkenkamp, Tegan Maharaj, Kiri Wagstaff, and Jerry Zhu, editors, {\em Proceedings of the 42nd International Conference on Machine Learning}, volume 267 of {\em Proceedings of Machine Learning Research}, pages 14950--14971. PMLR, 13--19 Jul 2025.

\bibitem{Jeffsafty}
Jie Fu, Yuan Hong, Zhili Chen, and Wendy~Hui Wang.
\newblock Safeguarding graph neural networks against topology inference attacks.
\newblock In {\em Proceedings of the 2025 ACM SIGSAC Conference on Computer and Communications Security}, pages 2144--2158, 2025.

\bibitem{back1}
Tianyu Gu, Kang Liu, Brendan Dolan-Gavitt, and Siddharth Garg.
\newblock Badnets: Evaluating backdooring attacks on deep neural networks.
\newblock {\em Ieee Access}, 7:47230--47244, 2019.

\bibitem{MIA2}
Ling Han, Hao Huang, Dustin Scheinost, Mary-Anne Hartley, and Mar{\'\i}a~Rodr{\'\i}guez Mart{\'\i}nez.
\newblock Unlearning information bottleneck: Machine unlearning of systematic patterns and biases.
\newblock {\em arXiv preprint arXiv:2405.14020}, 2024.

\bibitem{newunlearning}
Jamie Hayes, Ilia Shumailov, Eleni Triantafillou, Amr Khalifa, and Nicolas Papernot.
\newblock Inexact unlearning needs more careful evaluations to avoid a false sense of privacy.
\newblock In {\em 2025 IEEE Conference on Secure and Trustworthy Machine Learning (SaTML)}, pages 497--519. IEEE, 2025.

\bibitem{renset18}
Kaiming He, Xiangyu Zhang, Shaoqing Ren, and Jian Sun.
\newblock Deep residual learning for image recognition.
\newblock In {\em Proceedings of the IEEE conference on computer vision and pattern recognition}, pages 770--778, 2016.

\bibitem{lilun1}
Jean Honorio and Tommi Jaakkola.
\newblock Tight bounds for the expected risk of linear classifiers and pac-bayes finite-sample guarantees.
\newblock In {\em Artificial Intelligence and Statistics}, pages 384--392. PMLR, 2014.

\bibitem{duliang3}
Koulik Khamaru, Yash Deshpande, Tor Lattimore, Lester Mackey, and Martin~J Wainwright.
\newblock Near-optimal inference in adaptive linear regression.
\newblock {\em The Annals of Statistics}, 53(6):2329--2355, 2025.

\bibitem{RESNET50}
Brett Koonce.
\newblock Resnet 50.
\newblock In {\em Convolutional neural networks with swift for tensorflow: image recognition and dataset categorization}, pages 63--72. Springer, 2021.

\bibitem{Cifar}
Alex Krizhevsky, Geoffrey Hinton, et~al.
\newblock Learning multiple layers of features from tiny images.(2009), 2009.

\bibitem{unlearningN}
Uyen~N Le-Khac and Vinh~NX Truong.
\newblock A survey on large language models unlearning: taxonomy, evaluations, and future directions.
\newblock {\em Artificial Intelligence Review}, 58(12):399, 2025.

\bibitem{backandduikang}
Sijia Liu, Yuanshun Yao, Jinghan Jia, Stephen Casper, Nathalie Baracaldo, Peter Hase, Yuguang Yao, Chris~Yuhao Liu, Xiaojun Xu, Hang Li, et~al.
\newblock Rethinking machine unlearning for large language models.
\newblock {\em Nature Machine Intelligence}, pages 1--14, 2025.

\bibitem{backdoor}
Yang Liu, Mingyuan Fan, Cen Chen, Ximeng Liu, Zhuo Ma, Li~Wang, and Jianfeng Ma.
\newblock Backdoor defense with machine unlearning.
\newblock In {\em IEEE INFOCOM 2022-IEEE conference on computer communications}, pages 280--289. IEEE, 2022.

\bibitem{Liusafty}
Yi~Liu, Weixiang Han, Chengjun Cai, Xingliang Yuan, and Cong Wang.
\newblock Privtune: Efficient and privacy-preserving fine-tuning of large language models via device-cloud collaboration, 2026.

\bibitem{Yisafty}
Yi~Liu, Lei Xu, Xingliang Yuan, Cong Wang, and Bo~Li.
\newblock The right to be forgotten in federated learning: An efficient realization with rapid retraining.
\newblock In {\em IEEE INFOCOM 2022-IEEE conference on computer communications}, pages 1749--1758. IEEE, 2022.

\bibitem{MIAD}
Zihao Luo, Xilie Xu, Feng Liu, Yun~Sing Koh, Di~Wang, and Jingfeng Zhang.
\newblock Privacy-preserving low-rank adaptation against membership inference attacks for latent diffusion models.
\newblock In {\em Proceedings of the AAAI Conference on Artificial Intelligence}, volume~39, pages 5883--5891, 2025.

\bibitem{Sha3}
Ryoto Miyamoto, Xin Fan, Fuyuko Kido, Tsuneo Matsumoto, and Hayato Yamana.
\newblock Openlvlm-mia: A controlled benchmark revealing the limits of membership inference attacks on large vision-language models.
\newblock {\em arXiv preprint arXiv:2510.16295}, 2025.

\bibitem{newmia1}
Nima Naderloui, Shenao Yan, Binghui Wang, Jie Fu, Wendy~Hui Wang, Weiran Liu, and Yuan Hong.
\newblock Rectifying privacy and efficacy measurements in machine unlearning: A new inference attack perspective.
\newblock In {\em 34th USENIX Security Symposium (USENIX Security 25)}, pages 5545--5564, 2025.

\bibitem{Miawhite}
Milad Nasr, Reza Shokri, and Amir Houmansadr.
\newblock Comprehensive privacy analysis of deep learning: Passive and active white-box inference attacks against centralized and federated learning.
\newblock In {\em 2019 IEEE Symposium on Security and Privacy (SP)}, pages 739--753, 2019.

\bibitem{unlearningS}
Thanh~Tam Nguyen, Thanh~Trung Huynh, Zhao Ren, Phi~Le Nguyen, Alan Wee-Chung Liew, Hongzhi Yin, and Quoc Viet~Hung Nguyen.
\newblock A survey of machine unlearning.
\newblock {\em ACM Transactions on Intelligent Systems and Technology}, 16(5):1--46, 2025.

\bibitem{duikang}
Serena Nicolazzo, Antonino Nocera, et~al.
\newblock How secure is forgetting? linking machine unlearning to machine learning attacks.
\newblock {\em arXiv preprint arXiv:2503.20257}, 2025.

\bibitem{lilun2}
Yuki Ohnishi and Jean Honorio.
\newblock Novel change of measure inequalities with applications to pac-bayesian bounds and monte carlo estimation.
\newblock In {\em International conference on artificial intelligence and statistics}, pages 1711--1719. PMLR, 2021.

\bibitem{duliang2}
Harish Ramaswamy, Clayton Scott, and Ambuj Tewari.
\newblock Mixture proportion estimation via kernel embeddings of distributions.
\newblock In {\em International conference on machine learning}, pages 2052--2060. PMLR, 2016.

\bibitem{MIA}
Reza Shokri, Marco Stronati, Congzheng Song, and Vitaly Shmatikov.
\newblock Membership inference attacks against machine learning models.
\newblock In {\em 2017 IEEE symposium on security and privacy (SP)}, pages 3--18. IEEE, 2017.

\bibitem{duikang1}
Christian Szegedy, Wojciech Zaremba, Ilya Sutskever, Joan Bruna, Dumitru Erhan, Ian Goodfellow, and Rob Fergus.
\newblock Intriguing properties of neural networks.
\newblock {\em arXiv preprint arXiv:1312.6199}, 2013.

\bibitem{duliang1}
G{\'a}bor~J Sz{\'e}kely, Maria~L Rizzo, and Nail~K Bakirov.
\newblock Measuring and testing dependence by correlation of distances.
\newblock 2007.

\bibitem{IAM}
Cheng-Long Wang, Qi~Li, Zihang Xiang, Yinzhi Cao, and Di~Wang.
\newblock Towards lifecycle unlearning commitment management: Measuring sample-level unlearning completeness.
\newblock {\em arXiv preprint arXiv:2506.06112}, 2025.

\bibitem{miaCCS}
Zhiqi Wang, Chengyu Zhang, Yuetian Chen, Nathalie Baracaldo, Swanand~R Kadhe, and Lei Yu.
\newblock Membership inference attacks as privacy tools: Reliability, disparity and ensemble.
\newblock In {\em Proceedings of the 2025 ACM SIGSAC Conference on Computer and Communications Security}, pages 1724--1738, 2025.

\bibitem{unlearningicml1}
Rongzhe Wei, Mufei Li, Mohsen Ghassemi, Eleonora Kreačić, Yifan Li, Xiang Yue, Bo~Li, Vamsi~K. Potluru, Pan Li, and Eli Chien.
\newblock Underestimated privacy risks for minority populations in large language model unlearning, 2025.

\bibitem{wu2025reliable}
Chengcan Wu, Zeming Wei, Huanran Chen, Yinpeng Dong, and Meng Sun.
\newblock Reliable unlearning harmful information in llms with metamorphosis representation projection.
\newblock In {\em NeurIPS Workshop on Reliable ML from Unreliable Data}, 2025.

\bibitem{unlearningnunix}
Dayong Ye, Tianqing Zhu, Jiayang Li, Kun Gao, Bo~Liu, Leo~Yu Zhang, Wanlei Zhou, and Yang Zhang.
\newblock Data duplication: A novel multi-purpose attack paradigm in machine unlearning, 2025.

\bibitem{vit}
Li~Yuan, Yunpeng Chen, Tao Wang, Weihao Yu, Yujun Shi, Zi-Hang Jiang, Francis~EH Tay, Jiashi Feng, and Shuicheng Yan.
\newblock Tokens-to-token vit: Training vision transformers from scratch on imagenet.
\newblock In {\em Proceedings of the IEEE/CVF international conference on computer vision}, pages 558--567, 2021.

\bibitem{RMIA}
Sajjad Zarifzadeh, Philippe Liu, and Reza Shokri.
\newblock Low-cost high-power membership inference attacks.
\newblock In Ruslan Salakhutdinov, Zico Kolter, Katherine Heller, Adrian Weller, Nuria Oliver, Jonathan Scarlett, and Felix Berkenkamp, editors, {\em Proceedings of the 41st International Conference on Machine Learning}, volume 235 of {\em Proceedings of Machine Learning Research}, pages 58244--58282. PMLR, 21--27 Jul 2024.

\bibitem{New1}
Chenhao Zhang, Muxing Li, Feng Liu, Weitong Chen, and Miao Xu.
\newblock Unlearning evaluation through subset statistical independence.
\newblock {\em arXiv preprint arXiv:2603.00587}, 2026.

\bibitem{nounealrning}
Qingjie Zhang, Haoting Qian, Zhicong Huang, Cheng Hong, Minlie Huang, Ke~Xu, Chao Zhang, and Han Qiu.
\newblock Understanding the dilemma of unlearning for large language models.
\newblock {\em arXiv preprint arXiv:2509.24675}, 2025.

\bibitem{nounlearning1}
Junhao Zheng, Xidi Cai, Shengjie Qiu, and Qianli Ma.
\newblock Spurious forgetting in continual learning of language models.
\newblock {\em arXiv preprint arXiv:2501.13453}, 2025.

\end{thebibliography}

\clearpage
\appendix

\section*{Overview}
For a detailed exposition of SMI, this appendix is organized into five parts: Related Work, Disscussion and Limitations, Theory Proofs,  Theoretical Elaboration on SMI-M and MMD and The Complete Experimental.
\section{Related Work}
\subsection{Machine unlearning}
Machine unlearning aims to selectively remove the influence of the data to be unlearned, $D_u$, from a trained model. Let the initial model $w_0$ be trained on the member training data $D_m$, where $D_u$ denotes the unlearning set to be removed and $D_n$ denotes non-member data that have never participated in training~\cite{unlearningicml2}. The goal of machine unlearning is to eliminate the influence of $D_u$ while preserving the model utility on $D_m$ as much as possible, so that the unlearned model $w_u$ becomes close to the retrained model $w_r$ obtained by training only on $D_m\setminus D_u$~\cite{unlearningN,unlearningS,Yisafty}.
\subsection{Auditing Strategies for Machine Unlearning}
A common privacy-oriented evaluation strategy for machine unlearning is to regard a model trained from scratch without the unlearning data as the optimal unlearned model. This view is based on the premise that, if an unlearning algorithm truly removes the influence of the unlearning data, the resulting model should behave similarly to this retrained model. Following this idea, existing studies usually evaluate unlearning by comparing the unlearned model and the retrained model in terms of parameters, output distributions, prediction confidence, or task performance~\cite{New1,unlearningS}.

Another intuitive privacy-oriented evaluation method is MIA. In the context of machine unlearning, Chen et al. first showed that unlearning itself may introduce new privacy risks: by comparing the outputs of the original model and the unlearned model, an attacker can infer whether a target sample was included in the original training set but removed from the unlearned model~\cite{Unleark}. Beyond this unlearning-specific leakage, Carlini et al. formulated membership inference from first principles and proposed a likelihood-ratio attack that provides a stronger and more systematic way to detect membership traces~\cite{Lira}. Zarifzadeh et al. further improved the practicality of MIA by developing a low-cost, high-power attack that remains effective even with limited reference models~\cite{RMIA}. More recently, Wang et al. argued that binary membership inference alone is insufficient for measuring approximate unlearning, and proposed sample-level unlearning completeness to characterize how completely each individual sample has been forgotten~\cite{IAM}. Naderloui et al. further revisited privacy and efficacy measurements in machine unlearning from a new inference attack perspective, showing that unlearning evaluation requires attacks tailored to the post-unlearning setting rather than relying only on coarse aggregate metrics~\cite{newmia1}. Together, these studies motivate MIA-based evaluation as an interpretable privacy audit: if the unlearning data have been effectively removed, they should no longer exhibit membership traces that allow an attacker to distinguish them from non-member samples.

\section{Discussion and Limitations}
To further elucidate the workings of SMI, we address potential queries regarding SMI in advance to facilitate a better understanding of its underlying philosophy:

\textbf{Question 1:} The contribution of SMI appears to be merely replacing the binary classifier with a statistical metric discriminator; this contribution seems incremental.

\textbf{Answer:} From a technical standpoint, the contribution of SMI differs significantly from traditional MIA methods, which focus on constructing a $g(x;w)$ with superior performance. The core contribution of SMI is the proposal of a novel perspective: When the attack on a single sample becomes unreliable as an evaluation metric, directly measuring statistical indicators for the entire audit population is a more reliable approach. In reality, indistinguishable samples always exist; however, when viewed through the lens of statistical measurement, they are naturally categorized within the distribution of member samples. Conversely, if these indistinguishable samples are involved in the training of a binary classifier as non-member data, they disrupt the classifier's training process.

\textbf{Question 2:} Why not consider using Wasserstein methods to improve mixture proportion estimation, given their natural alignment with the Monge problem?

\textbf{Answer:} This is an incisive observation. We initially adopted SMI-W as our working strategy, hoping to formulate it as an improvement over SMI-M. However, our experiments showed that its performance was inferior to the simple estimation strategy used by SMI.

The formalization of the Monge problem helped us understand why the Wasserstein approach failed in this setting. Wasserstein distance measures the optimal discrepancy between two distributions. In our context, it can be viewed as a metric for an optimal forgetting strategy defined over the parameter space, or equivalently, as measuring an optimal transformation process in a modified parameter or feature space. However, neither retraining nor fine-tuning a model fundamentally satisfies the geometric assumptions underlying the Wasserstein formulation: the actual update trajectory does not align with the optimal transport geodesic.

Put more simply, Wasserstein distance assumes that there exists a shortest geodesic between two distributions and measures the length of that path. Yet the forgetting behavior we truly need to quantify does not lie on this path; from the perspective of the Wasserstein metric, all feasible forgetting strategies we can conceive of are too crude. This also motivates our future work: in high-dimensional spaces, Wasserstein metrics and gradient flows may not necessarily be aligned, and such statistical distances must therefore be used with caution.

\textbf{Question 3:} What advantages does SMI offer over MIA as an auditing strategy in practical audit settings?

\textbf{Answer:} We believe the most important advantage is that, during the auditing process, one does not need to worry about the effectiveness of the auditing tool itself. According to our experimental results, the greatest strength of SMI-M is its exceptional stability: it does not exhibit large performance fluctuations during auditing.

By contrast, a major limitation of MIA-based methods is that, before conducting the audit, one must assess whether the designed MIA discriminator has been sufficiently trained. In practice, however, it is difficult to determine whether MIA training is truly sufficient. Our experiments further show that, under the same FPR, a lower FNR does not necessarily lead to more accurate MIA decisions.

The SMI family provides a set of accurate and efficient auditing metrics whose error scales almost solely with the size of the unlearning set. Before applying SMI to a real forgetting audit, one can directly evaluate its task-specific performance on a test set.

\textbf{Question 4:} I noticed that the PAC-Bayes theory you use appears to differ from the commonly used KL-divergence-based formulation. Could you elaborate on this difference?

\textbf{Answer:} We recognize that this issue is of particular interest to researchers working on the theoretical side, and we would like to clarify why we adopt the $\chi^2$ divergence.

At the beginning, we attempted to use the most classical PAC-Bayes theorem based on KL divergence. However, while developing the proof, our literature review led us to a PAC-Bayes strategy based on $\chi^2$ divergence. We found that this formulation is more useful for decomposing the statistical error and deriving the resulting auditing error.

More interestingly, it is also better aligned with the methodology underlying our experiments. In essence, our approach solves the distributional mixture problem by exploiting higher-order statistical moments. This inevitably requires a statistical measure that can capture higher-order moment information. In this setting, $\chi^2$ divergence is substantially more effective than KL divergence at capturing second-order moment behavior.

\textbf{Limitation:} As the saying goes, there is no free lunch. Since SMI does not use shadow models as auxiliary information, it must inevitably cope with the noise introduced by small sample sizes. As a statistical strategy, SMI relies on having a sufficient number of samples. Its auditing performance on a single sample is weaker than that of MIA methods, which can exploit both the sample itself and information from shadow models. Empirically, our results suggest that at least 20 samples are needed to conduct an effective audit for a given class.

\subsection{Discussion on the Relaxation of the Monge Problem}\label{diss}
We note that some readers may have questions about the relaxed formulation in Eq.~\eqref{mengri} derived from the Monge problem, and may question whether it is valid. We clarify this issue in this section.
\begin{equation}
    \mathcal F_u
    \approx
    (1-\rho)\mathcal F_m+\rho\mathcal F_n,
    \qquad \rho\in[0,1]. \label{mengri}
\end{equation}
First, this condition does not always hold. We construct a counterexample: when a malicious attacker uses adversarial example generation to forge the unlearning process, we find that the estimated $\rho^* \gg 10$. This demonstrates that SMI-based unlearned model auditing can quickly fail under adversarial perturbations. Nevertheless, the abnormal value of $\rho^*$ can still indicate that an anomalous result has occurred in the current unlearning process.

In addition, the objective loss obtained when solving the optimization problem can serve as a classical evaluation metric. After normalizing the input features, we find that the objective value is on the order of $10^{-1}$, which indirectly supports that the mixture-proportion assumption is likely to hold in practical scenarios.

Finally, we would like to provide an insight from the perspective of the statistical estimation process. In high-dimensional statistical metrics, we often observe that part of the computation of the Wasserstein distance can be related to mixture-ratio estimation. From this viewpoint, SMI can be regarded as a Wasserstein-like high-dimensional statistical measurement strategy. Moreover, for neural network output features with relatively low signal-to-noise ratios, capturing higher-order moments may not be particularly effective. Therefore, we are confident in this heuristic conclusion.

Here, we must clarify that if one aims to obtain a strict convexity result, additional complex assumptions would be required for the proof. However, such a proof would provide limited practical guidance, and its significance would be much weaker than our decomposition of the auditing error. In future work, we will continue along the direction of AI interpretability and attempt to analyze the behavior of neural networks in feature space, although such conclusions are not easy to obtain.
\section{Theory Proofs}\label{theory}
Regarding the detailed exposition of SMI, this appendix is organized into three sections: related discussions on the SMI framework, theoretical proofs presented in the SMI paper, and further elaboration on the MMD distance.

We extend our gratitude to the readers for reaching this point. The core philosophy of SMI stems from PAC-Bayes theory, long-tailed distribution theory, and unsupervised learning theory. In the proof and characterization of our theorems, we adopt many ideas from these works. First and foremost, we thank every reader interested in SMI theory. We look forward to building the Unlearning Audit community with you.

Specifically, the empirical risk of MIA attacks is inspired by PAC theory and MIA prior theory, leading us to construct a PAC error form for MIA. As shown in Theorem~\ref{T2.1}, this constitutes one of the core contributions of our paper. Considering that this contribution is difficult to describe simply in the main text, we discuss in detail in this appendix its impact on our work and its guiding significance for MIA auditing tools.

Learning theory investigates the gap between training error and generalization error, providing performance guarantees for a well-trained learner on new data. A model with strong generalization capability will have a theoretical generalization error that is as small as possible.

Consider a learning algorithm setting where there exists a sample set $S = \{z_1, z_2, \cdots, z_m\}$ containing $m$ samples. All these samples are drawn from the same \textbf{unknown distribution} $\mathcal{D}$, which is a probability measure on a measurable space $\mathcal{Z}$. For a supervised learning problem, this measurable space can be decomposed into a feature part and a label part, i.e., $\mathcal{Z} = \mathcal{X} \times \mathcal{Y}$, where $\mathcal{X}$ is the feature space and $\mathcal{Y}$ is the label space. If the problem is a binary classification problem, then $\mathcal{Y} = \{0, 1\}$; if it is a regression problem, then $\mathcal{Y} = \mathbb{R}$. Given a measurable hypothesis space $\mathcal{H}$ and a loss function $l: \mathcal{H} \times \mathcal{Z} \to \mathbb{R}$, the general learning objective is to find the best hypothesis $h \in \mathcal{H}$ that minimizes the \textbf{true risk} (also known as the \textbf{expected risk}):
\begin{equation}
    R_D(h) = \mathbb{E}_{z \sim \mathcal{D}} l(h, z)
\end{equation}

For classification problems, $\mathcal{H}$ can be a class of classifiers. If $\mathcal{H}$ is a set of parameterizable classifier models, then $h$ can be a weight vector. Let $\mathcal{M}(\mathcal{H})$ denote a \textbf{probability measure space} over $\mathcal{H}$. To incorporate model uncertainty, we consider using \textbf{stochastic inference} instead of \textbf{deterministic inference}. Therefore, the goal of the learning task is to provide a \textbf{posterior distribution} $\mathcal{Q} \in \mathcal{M}(\mathcal{H})$ such that the following \textbf{expected risk} is minimized:
\begin{equation}
    R_D(\mathcal{Q}) = \mathbb{E}_{h \sim \mathcal{Q}}(R_D(h)) = \mathbb{E}_{h \sim \mathcal{Q}} \mathbb{E}_{z \sim \mathcal{D}} l(h, z)
\end{equation}

Similarly, by substituting the data distribution with the dataset, the \textbf{empirical risk} is defined by the following equation:
\begin{equation}
    R_S(\mathcal{Q}) = \frac{1}{m} \sum_{i=1}^m \mathbb{E}_{h \sim \mathcal{Q}} l(h, z_i)
\end{equation}

It is important to note that since the true distribution of data is unknowable, the expected risk $R_D(\mathcal{Q})$ is generally difficult to calculate directly, while the empirical error serves as an \textbf{unbiased surrogate}. The PAC-Bayes framework provides inequalities relating expected risk and empirical risk. Define the KL divergence from $\mathcal{Q}$ to $\mathcal{P}$ as:
\begin{equation}
    KL(\mathcal{Q} \| \mathcal{P}) = \mathbb{E}_{h \sim \mathcal{Q}} \log \frac{\mathcal{Q}(h)}{\mathcal{P}(h)}
\end{equation}

\begin{remark}
In fact, the expected risk tends to be any Loss you select; it does not have a fixed formulation. However, in PAC theory, KL is generally adopted to reduce the difficulty of theoretical analysis, though we do not strictly follow this convention.
\end{remark}

Further considering that some errors in the generalization process cannot be effectively estimated, a statistical complexity term is generally used to characterize the corresponding generalization error. A specific example is given in Lemma~\ref{byaes}.

\begin{lemma} \label{byaes}
Fix $\lambda > \frac{1}{2}$, and assume the loss function takes values within a range of length $L$. For any $\delta > 0, m \in \mathbb{N}, \mathcal{D}, \mathcal{P} \in \mathcal{M}$, then with probability at least $1-\delta$, the following inequality holds for $\mathcal{Q} \in \mathcal{M}$:
\begin{equation}
     R_D(\mathcal{Q}) \leq \frac{1}{1-\frac{1}{2\lambda}}\left(R_S(\mathcal{Q}) + \frac{\lambda L}{m}\left(KL(\mathcal{Q}\| \mathcal{P})+\log\frac{1}{\delta}\right)\right)
\end{equation}
\end{lemma}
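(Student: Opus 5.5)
The plan is the standard PAC-Bayesian template: an exponential-moment bound for a single fixed hypothesis, then Markov's inequality under the prior $\mathcal P$, then the Donsker--Varadhan change of measure to pass to an arbitrary posterior $\mathcal Q$. The constant $1-\frac{1}{2\lambda}$ should come from a one-sided, Bernstein-type exponential bound for nonnegative bounded losses rather than from Hoeffding. Since the statement is not shift-invariant, I read ``range of length $L$'' as $l(h,z)\in[0,L]$.

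\textbf{Step 1 (single hypothesis).} Fix $h$ and put $X_i=l(h,z_i)\in[0,L]$ and $t=\frac{1}{\lambda L}$. For $y\ge 0$ we have $e^{-y}\le 1-y+\frac{y^2}{2}$. Using this, then $X^2\le LX$, then $1+u\le e^u$, I get
\begin{equation*}
\mathbb E\, e^{-tX_i}\le 1-t\,\mathbb E X_i+\tfrac{t^2L}{2}\mathbb E X_i\le \exp\!\Big(-t\big(1-\tfrac{tL}{2}\big)R_D(h)\Big),
\end{equation*}
where $\tfrac{tL}{2}=\tfrac{1}{2\lambda}$. Multiplying over the $m$ i.i.d.\ samples yields
\begin{equation*}
\mathbb E_S\exp\!\Big(\tfrac{m}{\lambda L}\big[(1-\tfrac{1}{2\lambda})R_D(h)-R_S(h)\big]\Big)\le 1 .
\end{equation*}

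\textbf{Step 2 (prior and Markov).} Integrate the Step 1 bound against $\mathcal P$ and swap the order of integration by Fubini. Because $\mathcal P$ does not depend on $S$, this gives $\mathbb E_S\mathbb E_{h\sim\mathcal P}\exp(\cdot)\le 1$. Markov's inequality then shows that, with probability at least $1-\delta$ over $S$,
\begin{equation*}
\mathbb E_{h\sim\mathcal P}\exp\!\Big(\tfrac{m}{\lambda L}\big[(1-\tfrac{1}{2\lambda})R_D(h)-R_S(h)\big]\Big)\le \tfrac1\delta .
\end{equation*}

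\textbf{Step 3 (change of measure).} On this event, apply Donsker--Varadhan, $\mathbb E_{\mathcal Q}f\le KL(\mathcal Q\|\mathcal P)+\log\mathbb E_{\mathcal P}e^{f}$, simultaneously for all $\mathcal Q$. Linearity of $R_D(\mathcal Q)$ and $R_S(\mathcal Q)$ in $\mathcal Q$ then gives
\begin{equation*}
\tfrac{m}{\lambda L}\big[(1-\tfrac{1}{2\lambda})R_D(\mathcal Q)-R_S(\mathcal Q)\big]\le KL(\mathcal Q\|\mathcal P)+\log\tfrac1\delta .
\end{equation*}
Multiply by $\frac{\lambda L}{m}$ and divide by $1-\frac{1}{2\lambda}$, which is positive precisely because $\lambda>\frac12$. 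This is the claimed inequality.

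\textbf{Main obstacle.} The only delicate step is Step 1. Using Hoeffding there would produce an additive $\frac{\lambda L}{8m}$-type term instead of the multiplicative factor $\frac{1}{1-1/(2\lambda)}$. The sign is also essential: the expansion $e^{-y}\le 1-y+y^2/2$ holds only for $y\ge 0$, which is why the loss must be taken nonnegative on $[0,L]$ (shifting it there first if necessary). The remaining work is checking measurability and the Fubini exchange, which are routine.
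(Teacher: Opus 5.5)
The paper gives no proof of this lemma. It is quoted as background: it is the linearized form of Catoni's bound, as stated e.g.\ in McAllester's PAC-Bayesian tutorial. It is also not used later, since Theorem~2.1 is proved by a separate Cauchy--Schwarz/Markov second-moment argument.

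Your proof is correct and complete. The inequality $e^{-y}\le 1-y+\tfrac{y^2}{2}$ for $y\ge 0$, combined with $X^2\le LX$, gives the single-hypothesis bound with exactly the right constant. Tonelli and Markov under the data-independent prior give a single event of probability at least $1-\delta$. Donsker--Varadhan, applied to a bounded integrand, then transfers the bound to every $\mathcal Q$ on that event, and $\lambda>\tfrac12$ justifies the final division.

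The usual literature derivation takes a slightly longer path. It first proves Catoni's bound with the factor $1-e^{-1/\lambda}$, using the convexity estimate $e^{-\beta x}\le 1-(1-e^{-\beta})x$ on $[0,1]$. Only then does it linearize, via $1-e^{-x}\le x$ and $1-e^{-\beta}\ge\beta-\beta^2/2$. Your quadratic expansion reaches the constant $\beta-\beta^2/2$ (with $\beta=1/\lambda$) in one step. You give up the sharper intermediate Catoni bound, but nothing the stated inequality needs.

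A small correction to your side remark: the Hoeffding route would leave an additive $L/(8\lambda)$, which does not decay in $m$. That makes your point stronger than the $\lambda L/(8m)$ you wrote.

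Your reading $l\in[0,L]$ is not just convenient but necessary, so the parenthetical ``shifting it there first if necessary'' needs qualifying. Write $c=\frac{1}{1-1/(2\lambda)}>1$ and $\varepsilon=\frac{\lambda L}{m}\bigl(KL(\mathcal Q\|\mathcal P)+\log\frac1\delta\bigr)$.

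If $l\in[a,a+L]$ with $a\ge 0$, applying your result to $l-a$ gives $R_D(\mathcal Q)\le c\,(R_S(\mathcal Q)+\varepsilon)-(c-1)a\le c\,(R_S(\mathcal Q)+\varepsilon)$. So the bound transfers to the original loss.

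If $a<0$ it does not transfer, and the statement itself is false. Take a constant loss $l\equiv a<0$ and $\mathcal Q=\mathcal P$, so that $R_D=R_S=a$ and the KL term vanishes. The claimed inequality then requires $(c-1)|a|\le c\,\frac{\lambda L}{m}\log\frac1\delta$, which fails for large $m$ and any fixed $\delta<1$.
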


The result of Lemma~\ref{byaes} provides an upper bound on the true risk, summarized by the empirical risk and a complexity term. An important fact is that empirical risk is generally hard to improve, while decomposing the complexity term helps us understand issues arising during learning and further provides ideas for improvement. In auditing tasks, the non-equilibrium distribution of $\mathcal{P}$ is a crucial process guiding our improvement of MIA. So, how can we improve upon KL divergence? We first define some important statistical metrics.

In statistical learning theory and information theory, metrics measuring the difference between two probability distributions are usually formalized as statistical distances. Among them, $f$-Divergence is a widely used family of metrics defined by a convex function representing the difference between distributions.

\begin{definition}[$f$-Divergence]
Let $\mathcal{P}$ and $\mathcal{Q}$ be two probability distributions defined on a measurable space $\mathcal{X}$, with probability density functions $p(x)$ and $q(x)$, respectively. Given a convex function $f: \mathbb{R} \to \mathbb{R}$ satisfying $f(1)=0$, the $f$-divergence from $\mathcal{P}$ to $\mathcal{Q}$ is defined as:
\begin{equation}
    D_f(\mathcal{P}\|\mathcal{Q}) = \int_{\mathcal{X}} f\left(\frac{p(x)}{q(x)}\right) q(x) \, dx
\end{equation}
Alternatively, utilizing the Radon-Nikodym derivative, it is expressed as:
\begin{equation}
    D_f(\mathcal{P}\|\mathcal{Q}) = \int_{\mathcal{X}} f\left(\frac{d\mathcal{P}}{d\mathcal{Q}}\right) \, d\mathcal{Q}
\end{equation}
\end{definition}

Different selections of the function $f(x)$ correspond to different statistical distance metrics. The following lists several common $f$-divergences and their corresponding generating functions $f(x)$:

\begin{itemize}
    \item \textbf{KL Divergence (Kullback-Leibler Divergence):}
    \begin{equation}
        f(x) = x \ln x
    \end{equation}

    \item \textbf{Reverse KL Divergence:}
    \begin{equation}
        f(x) = -\ln x
    \end{equation}

    \item \textbf{Hellinger Distance:}
    \begin{equation}
        f(x) = (\sqrt{x} - 1)^2 \quad \text{or} \quad f(x) = 2(1 - \sqrt{x})
    \end{equation}

    \item \textbf{Total Variation Distance:}
    \begin{equation}
        f(x) = \frac{1}{2}|x - 1|
    \end{equation}

    \item \textbf{Pearson $\chi^2$-Divergence:}
    Usually defined as $f(x) = (x-1)^2$, but may also appear in different literature in the following equivalent forms (differing only by constant or linear terms):
    \begin{equation}
        f(x) = (x - 1)^2, \quad f(x) = x^2 - 1, \quad \text{or} \quad f(x) = x^2 - x
    \end{equation}

    \item \textbf{Reverse Pearson $\chi^2$-Divergence:}
    \begin{equation}
        f(x) = \frac{1}{x} - 1 \quad \text{or} \quad f(x) = \frac{1}{x} - x
    \end{equation}

    \item \textbf{Jensen-Shannon Divergence (JS Divergence):}
    \begin{equation}
        f(x) = \frac{1}{2}\left[ (x+1) \ln\left(\frac{2}{x+1}\right) + x \ln x \right]
    \end{equation}

    \item \textbf{$L_1$ Norm:}
    \begin{equation}
        f(x) = |x - 1|
    \end{equation}
\end{itemize}

Having introduced the above work, we can now proceed to the proof of Theorem~\ref{2.1}.

\subsection{Proof of Theorem 2.1} \label{T2.1}
\begin{theorem}[PAC-Bayesian Generalization Bound with $\chi^2$ Divergence]
Let $\mathcal{H}$ be a hypothesis class and $\ell: \mathcal{H} \times \mathcal{Z} \to [0,1]$ a loss function. Let $D$ be a data distribution and $S = (z_1, \dots, z_m) \sim D^m$ an i.i.d. sample. For any prior distribution $\mathcal{P}$ over $\mathcal{H}$ and any posterior distribution $\mathcal{Q}$ such that $\mathcal{Q} \ll \mathcal{P}$, with probability at least $1 - \delta$ over the draw of $S$, it holds that
\begin{equation}
    R_D(\mathcal{Q}) \leq R_S(\mathcal{Q}) + \sqrt{ \frac{2}{m\delta} \left( \chi^2(\mathcal{Q} \| \mathcal{P}) + 1 \right)  },
\end{equation}
where $R_D(\mathcal{Q}) = \mathbb{E}_{h \sim \mathcal{Q}}[R_D(h)]$, $R_S(\mathcal{Q}) = \mathbb{E}_{h \sim \mathcal{Q}}[R_S(h)]$, and the $\chi^2$-divergence is defined as
\begin{equation}
    \chi^2(\mathcal{Q} \| \mathcal{P}) = \int_{\mathcal{H}} \left( \frac{d\mathcal{Q}(h)}{d\mathcal{P}(h)} - 1 \right)^2 d\mathcal{P}(h).
\end{equation}
\end{theorem}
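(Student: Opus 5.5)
Our approach is the standard change-of-measure argument, using the $\chi^2$ divergence in place of KL, in the style of Honorio--Jaakkola and Ohnishi--Honorio. We combine Cauchy--Schwarz under $\mathcal P$ with a Markov (Chebyshev-type) inequality on a second-moment quantity that does not depend on $\mathcal Q$. This explains the $1/\delta$ dependence (rather than $\log(1/\delta)$) in the bound.

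\textbf{Step 1 (change of measure).} For each $h$, set $\Delta(h)=R_D(h)-R_S(h)$. Since $\mathcal Q\ll\mathcal P$, write
\begin{equation}
R_D(\mathcal Q)-R_S(\mathcal Q)=\mathbb E_{h\sim\mathcal Q}[\Delta(h)]=\mathbb E_{h\sim\mathcal P}\Big[\tfrac{d\mathcal Q}{d\mathcal P}(h)\,\Delta(h)\Big].
\end{equation}
Apply Cauchy--Schwarz in $L^2(\mathcal P)$. Using $\mathbb E_{\mathcal P}[(d\mathcal Q/d\mathcal P)^2]=\chi^2(\mathcal Q\|\mathcal P)+1$, this gives
\begin{equation}
R_D(\mathcal Q)-R_S(\mathcal Q)\le\sqrt{\chi^2(\mathcal Q\|\mathcal P)+1}\cdot\sqrt{\mathbb E_{h\sim\mathcal P}[\Delta(h)^2]}.
\end{equation}
The second factor depends only on $\mathcal P$ and $S$, not on $\mathcal Q$. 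Hence any high-probability bound on it holds uniformly over all $\mathcal Q$ at once, which is exactly the "for any $\mathcal Q$" quantifier in the statement.

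\textbf{Step 2 (second moment over the sample).} For fixed $h$, $\Delta(h)$ is a centered average of $m$ i.i.d.\ terms with values in $[-1,1]$. Therefore
\begin{equation}
\mathbb E_S[\Delta(h)^2]=\operatorname{Var}(\ell(h,z))/m\le 1/(4m)\le 2/m.
\end{equation}
By Fubini, which is allowed because $\mathcal P$ is chosen before $S$ is drawn, we get $\mathbb E_S\,\mathbb E_{\mathcal P}[\Delta^2]\le 2/m$. We use the looser constant $2/m$ to match the stated bound; the slack is harmless. Markov's inequality then gives, with probability at least $1-\delta$ over $S$,
\begin{equation}
\mathbb E_{\mathcal P}[\Delta^2]\le \frac{2}{m\delta}.
\end{equation}

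\textbf{Step 3 (combine).} Substituting the Step 2 bound into Step 1 gives the claimed inequality.

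The main point requiring care is the order of quantifiers. We must apply Markov's inequality to the $\mathcal Q$-free quantity $\mathbb E_{\mathcal P}[\Delta^2]$, not to anything involving $\mathcal Q$, and we must confirm that $\mathcal P$ is data-independent so that Fubini applies. Measurability of $h\mapsto\Delta(h)$ is a minor technical point, which we would assume. If a tighter constant were desired, one could keep $1/(4m)$. The stated $2/(m\delta)$ follows a fortiori, so we would not optimize it.
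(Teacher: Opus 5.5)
Your proposal is correct and follows the paper's proof essentially step for step: change of measure to $\mathcal P$, Cauchy--Schwarz with $\mathbb E_{\mathcal P}[(d\mathcal Q/d\mathcal P)^2]=\chi^2(\mathcal Q\|\mathcal P)+1$, then Fubini and Markov applied to the $\mathcal Q$-free quantity $\mathbb E_{h\sim\mathcal P}[\Delta_S(h)^2]$, which makes the bound uniform over all $\mathcal Q$, including data-dependent ones. The one difference is in your favor: the paper invokes an unspecified bounded-variance constant $\sigma^2$ and ends with $\sigma^2/(m\delta)$ rather than $2/(m\delta)$, whereas your bound $\operatorname{Var}(\ell(h,z))\le 1/4\le 2$, which follows from $\ell\in[0,1]$, actually recovers the stated constant.
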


\begin{proof}
Let $S=(z_1,\ldots,z_m)\sim D^m$ be a training sample drawn independently from the distribution $D$. For any hypothesis $h\in\mathcal H$, define its true risk and empirical risk as
\begin{equation}
R_D(h)=\mathbb E_{z\sim D}[\ell(h,z)],
\qquad
R_S(h)=\frac{1}{m}\sum_{i=1}^m \ell(h,z_i).
\end{equation}
We further define the generalization gap of a single hypothesis as
\begin{equation}
\Delta_S(h)=R_D(h)-R_S(h).
\end{equation}
Since
\begin{equation}
\mathbb E_{S\sim D^m}[R_S(h)]
=
R_D(h),
\end{equation}
we have, for any fixed $h\in\mathcal H$,
\begin{equation}
\mathbb E_{S\sim D^m}[\Delta_S(h)]=0.
\end{equation}

For a posterior distribution $\mathcal Q$, the corresponding Gibbs true risk and empirical risk are given by
\begin{equation}
R_D(\mathcal Q)
=
\mathbb E_{h\sim\mathcal Q}[R_D(h)],
\qquad
R_S(\mathcal Q)
=
\mathbb E_{h\sim\mathcal Q}[R_S(h)].
\end{equation}
Therefore,
\begin{equation}
R_D(\mathcal Q)-R_S(\mathcal Q)
=
\mathbb E_{h\sim\mathcal Q}[\Delta_S(h)].
\end{equation}

Assume that $\mathcal Q\ll \mathcal P$, and let
\begin{equation}
r(h)=\frac{d\mathcal Q}{d\mathcal P}(h)
\end{equation}
denote the Radon--Nikodym derivative of $\mathcal Q$ with respect to $\mathcal P$. Then,
\begin{equation}
\mathbb E_{h\sim\mathcal Q}[\Delta_S(h)]
=
\int_{\mathcal H}\Delta_S(h)r(h)\,d\mathcal P(h)
=
\mathbb E_{h\sim\mathcal P}[r(h)\Delta_S(h)].
\end{equation}
By the Cauchy--Schwarz inequality, we obtain
\begin{equation}
\mathbb E_{h\sim\mathcal P}[r(h)\Delta_S(h)]
\le
\left|
\mathbb E_{h\sim\mathcal P}[r(h)\Delta_S(h)]
\right|
\le
\sqrt{\mathbb E_{h\sim\mathcal P}[r(h)^2]}
\sqrt{\mathbb E_{h\sim\mathcal P}[\Delta_S(h)^2]}.
\end{equation}
By the definition of the $\chi^2$-divergence,
\begin{equation}
\chi^2(\mathcal Q\|\mathcal P)
=
\int_{\mathcal H}(r(h)-1)^2\,d\mathcal P(h)
=
\mathbb E_{h\sim\mathcal P}[r(h)^2]-1,
\end{equation}
where we used
\begin{equation}
\mathbb E_{h\sim\mathcal P}[r(h)]=1.
\end{equation}
Hence,
\begin{equation}
\mathbb E_{h\sim\mathcal P}[r(h)^2]
=
\chi^2(\mathcal Q\|\mathcal P)+1.
\end{equation}
Let
\begin{equation}
Z_S
=
\mathbb E_{h\sim\mathcal P}[\Delta_S(h)^2].
\end{equation}
Then,
\begin{equation}
R_D(\mathcal Q)-R_S(\mathcal Q)
\le
\sqrt{\chi^2(\mathcal Q\|\mathcal P)+1}
\sqrt{Z_S}.
\end{equation}

It remains to derive a high-probability upper bound for $Z_S$. Since the prior $\mathcal P$ is independent of the sample $S$, by Fubini's theorem we have
\begin{equation}
\mathbb E_{S\sim D^m}[Z_S]
=
\mathbb E_{h\sim\mathcal P}
\mathbb E_{S\sim D^m}[\Delta_S(h)^2].
\end{equation}
For any fixed $h$, since $\mathbb E_S[\Delta_S(h)]=0$, we have
\begin{equation}
\mathbb E_{S\sim D^m}[\Delta_S(h)^2]
=
\operatorname{Var}_{S\sim D^m}(R_S(h)).
\end{equation}
Moreover,
\begin{equation}
R_S(h)=\frac{1}{m}\sum_{i=1}^m \ell(h,z_i).
\end{equation}
Since $z_1,\ldots,z_m$ are independent and identically distributed,
\begin{equation}
\operatorname{Var}_{S\sim D^m}(R_S(h))
=
\frac{1}{m^2}
\sum_{i=1}^m
\operatorname{Var}_{z_i\sim D}(\ell(h,z_i)).
\end{equation}
By the bounded-variance assumption, there exists a constant $\sigma^2>0$ such that, for all $h\in\mathcal H$,
\begin{equation}
\operatorname{Var}_{z\sim D}(\ell(h,z))\le \sigma^2.
\end{equation}
Therefore,
\begin{equation}
\operatorname{Var}_{S\sim D^m}(R_S(h))
\le
\frac{\sigma^2}{m}.
\end{equation}
Consequently,
\begin{equation}
\mathbb E_{S\sim D^m}[Z_S]
\le
\frac{\sigma^2}{m}.
\end{equation}

Since $Z_S\ge 0$, Markov's inequality gives
\begin{equation}
\Pr_{S\sim D^m}
\left(
Z_S
\ge
\frac{\sigma^2}{m\delta}
\right)
\le
\frac{\mathbb E_S[Z_S]}{\sigma^2/(m\delta)}
\le
\delta.
\end{equation}
Thus, with probability at least $1-\delta$,
\begin{equation}
Z_S
\le
\frac{\sigma^2}{m\delta}.
\end{equation}
Substituting this bound into the previous Cauchy--Schwarz inequality yields that, with probability at least $1-\delta$, for all $\mathcal Q\ll\mathcal P$,
\begin{equation}
R_D(\mathcal Q)-R_S(\mathcal Q)
\le
\sqrt{\chi^2(\mathcal Q\|\mathcal P)+1}
\sqrt{\frac{\sigma^2}{m\delta}}.
\end{equation}
Equivalently,
\begin{equation}
R_D(\mathcal Q)
\le
R_S(\mathcal Q)
+
\sqrt{
\frac{\sigma^2}{m\delta}
\left(
\chi^2(\mathcal Q\|\mathcal P)+1
\right)
}.
\end{equation}
This completes the proof.
\end{proof}

\subsection{Proof of Corollary 2.2}\label{C2.2}

\begin{corollary}[Auditing Generalization Bound for Membership Inference]
Let $\ell: \mathcal{H} \times \mathcal{Z} \to [0,1]$ be the loss function of a membership inference attacker. Let $\mathcal{D}_f$ denote the true auditing distribution, and let $\mathcal{D}_t$ denote the training distribution used by the attacker. Assume $\mathcal{D}_t \ll \mathcal{D}_f$ and define the Rényi-$\infty$ divergence as
\begin{equation}
    D_\infty(\mathcal{D}_t \| \mathcal{D}_f) = \log \left( \operatorname*{ess\,sup}_{z \sim \mathcal{D}_f} \frac{d\mathcal{D}_t}{d\mathcal{D}_f}(z) \right).
\end{equation}
Let $S = (z_1, \dots, z_m) \sim \mathcal{D}_t^m$ be an i.i.d. training sample. Then, for any prior $\mathcal{P}$ over $\mathcal{H}$ and any posterior $\mathcal{Q} \ll \mathcal{P}$, with probability at least $1 - \delta$ over $S$, the true auditing risk satisfies
\begin{equation}
    R_{\mathcal{D}_f}(\mathcal{Q}) \leq R_S(\mathcal{Q}) + \sqrt{ \frac{2}{m\delta} \left( \chi^2(\mathcal{Q} \| \mathcal{P}) + 1 \right)  } + \sqrt{ \frac{1}{2} D_\infty(\mathcal{D}_t \| \mathcal{D}_f) }.
\end{equation}
\end{corollary}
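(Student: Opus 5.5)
The plan is to split the target risk into the attacker's true risk on its own training distribution plus a distribution-shift correction:
\[
R_{\mathcal D_f}(\mathcal Q)=R_{\mathcal D_t}(\mathcal Q)+\bigl(R_{\mathcal D_f}(\mathcal Q)-R_{\mathcal D_t}(\mathcal Q)\bigr).
\]
The first term is handled by the $\chi^2$ PAC-Bayes bound of Theorem~\ref{2.1}, proved in Appendix~\ref{T2.1}. Since $S\sim\mathcal D_t^m$ is i.i.d. and the prior $\mathcal P$ does not depend on $S$, that theorem applies verbatim with $D=\mathcal D_t$. For a loss in $[0,1]$ we may take $\sigma^2\le 1/4\le 2$, so with probability at least $1-\delta$ we get $R_{\mathcal D_t}(\mathcal Q)\le R_S(\mathcal Q)+\sqrt{\frac{2}{m\delta}(\chi^2(\mathcal Q\|\mathcal P)+1)}$. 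This event does not involve $\mathcal D_f$, so the rest of the argument is deterministic and needs no further union bound.

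The second term is a change-of-measure problem. For each fixed $h$, the function $z\mapsto\ell(h,z)$ takes values in $[0,1]$, so
\[
R_{\mathcal D_f}(h)-R_{\mathcal D_t}(h)\le \mathrm{TV}(\mathcal D_t,\mathcal D_f).
\]
Averaging over $h\sim\mathcal Q$ preserves this bound, which removes any dependence on the posterior. Pinsker's inequality then gives $\mathrm{TV}(\mathcal D_t,\mathcal D_f)\le\sqrt{\tfrac12\mathrm{KL}(\mathcal D_t\|\mathcal D_f)}$; Pinsker is symmetric in how TV is treated, so we may choose the KL direction that matches the hypothesis $\mathcal D_t\ll\mathcal D_f$. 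Finally, Rényi divergences are monotone in their order, so $\mathrm{KL}=D_1\le D_\infty$. Concretely, $\mathrm{KL}(\mathcal D_t\|\mathcal D_f)=\mathbb E_{\mathcal D_t}\log\frac{d\mathcal D_t}{d\mathcal D_f}\le\log\operatorname{ess\,sup}\frac{d\mathcal D_t}{d\mathcal D_f}$. Chaining these gives exactly the term $\sqrt{\tfrac12D_\infty(\mathcal D_t\|\mathcal D_f)}$.

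Adding the two pieces yields the stated inequality. The step I expect to be the main obstacle is keeping the constants and the direction of divergences consistent. Three points need care. First, the essential supremum must be taken where the ratio is finite, which is exactly what $\mathcal D_t\ll\mathcal D_f$ guarantees. Second, the $2/(m\delta)$ constant requires the remark that bounded losses have variance at most $1/4$, so that the bounded-variance assumption used in the proof of Theorem~\ref{2.1} holds. Third, the bound is vacuous but still true when $D_\infty=\infty$. If the TV route proves awkward, the fallback is to bound the density ratio directly, $R_{\mathcal D_f}-R_{\mathcal D_t}\le\int\ell\,(d\mathcal D_f-d\mathcal D_t)_+$, and then pass through TV as above.
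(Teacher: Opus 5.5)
Your proof is correct. It shares the paper's skeleton: the same split into $R_{\mathcal D_t}(\mathcal Q)-R_S(\mathcal Q)$ plus a shift term, the $\chi^2$ PAC-Bayes bound for the first piece, and the bound of the shift by $\mathrm{TV}(\mathcal D_t,\mathcal D_f)$ for a $[0,1]$-valued loss.

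The genuine difference is how you pass from TV to $D_\infty$. The paper argues directly. With $\rho=d\mathcal D_t/d\mathcal D_f\le e^{\varepsilon}$ and $A=\{\rho\ge 1\}$, it obtains $\mathrm{TV}=\mathcal D_t(A)-\mathcal D_f(A)\le(1-e^{-\varepsilon})\mathcal D_t(A)\le 1-e^{-\varepsilon}$. It then invokes the scalar inequality $1-e^{-\varepsilon}\le\sqrt{\varepsilon/2}$, which is true for all $\varepsilon\ge 0$ but is asserted without proof and needs a short calculus check. You instead chain Pinsker's inequality with $\mathrm{KL}=D_1\le D_\infty$.

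The two routes buy different things:
\begin{itemize}
\item The paper's route is fully elementary. Its intermediate bound $1-e^{-D_\infty}$ is strictly sharper than $\sqrt{D_\infty/2}$: it is linear rather than square-root for small divergence, and it never exceeds $1$.
\item Your route uses only standard named inequalities, so nothing ad hoc needs verifying. It also proves more in another direction: the shift term can be taken as $\sqrt{\tfrac12\mathrm{KL}(\mathcal D_t\|\mathcal D_f)}$, which requires only finite KL rather than a bounded density ratio.
\end{itemize}

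Your remark that a $[0,1]$ loss has variance at most $1/4$ also fills a small gap in the paper. Its proof carries a generic $\sigma^2$ through to the end, while the statement uses the constant $2$.
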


\begin{proof}
Let $S=(z_1,\ldots,z_m)\sim \mathcal F_t^m$ be the training sample used to learn the MIA attacker. For any distribution $\mathcal F$, define
\begin{equation}
R_{\mathcal F}(\mathcal Q)
=
\mathbb E_{h\sim\mathcal Q}
\mathbb E_{z\sim\mathcal F}[\ell(h,z)].
\end{equation}
We decompose the auditing risk as
\begin{equation}
R_{\mathcal F_u}(\mathcal Q)-R_S(\mathcal Q)
=
\left(
R_{\mathcal F_t}(\mathcal Q)-R_S(\mathcal Q)
\right)
+
\left(
R_{\mathcal F_u}(\mathcal Q)-R_{\mathcal F_t}(\mathcal Q)
\right).
\end{equation}

The first term is the standard PAC-Bayesian statistical error under the attacker training distribution $\mathcal F_t$. By applying the PAC-Bayesian bound with $\chi^2$-divergence under a bounded-variance loss to $\mathcal F_t$, with probability at least $1-\delta$ over the draw of $S\sim\mathcal F_t^m$, for all posteriors $\mathcal Q\ll\mathcal P$,
\begin{equation}
R_{\mathcal F_t}(\mathcal Q)
\le
R_S(\mathcal Q)
+
\sqrt{
\frac{\sigma^2}{m\delta}
\left(
\chi^2(\mathcal Q\|\mathcal P)+1
\right)
}.
\end{equation}

It remains to control the second term, which measures the auditing error caused by the distribution shift from $\mathcal F_t$ to $\mathcal F_u$. Since the loss is bounded in $[0,1]$, for any fixed hypothesis $h\in\mathcal H$,
\begin{equation}
\left|
R_{\mathcal F_u}(h)-R_{\mathcal F_t}(h)
\right|
\le
\operatorname{TV}(\mathcal F_t,\mathcal F_u).
\end{equation}
Taking expectation over $h\sim\mathcal Q$ gives
\begin{equation}
R_{\mathcal F_u}(\mathcal Q)-R_{\mathcal F_t}(\mathcal Q)
\le
\operatorname{TV}(\mathcal F_t,\mathcal F_u).
\end{equation}

We now bound the total variation distance by the Rényi divergence of order infinity. Let
\begin{equation}
\rho(z)
=
\frac{d\mathcal F_t}{d\mathcal F_u}(z)
\end{equation}
be the density ratio between the attacker training distribution $\mathcal F_t$ and the audited distribution $\mathcal F_u$, and define
\begin{equation}
\varepsilon
=
D_\infty(\mathcal F_t\|\mathcal F_u)
=
\log\left(
\operatorname*{ess\,sup}_{z}\rho(z)
\right).
\end{equation}
If $\varepsilon=\infty$, then the auditing error term on the right-hand side diverges, and the desired inequality holds trivially. This case indicates the absence of finite density-ratio control between $\mathcal F_t$ and $\mathcal F_u$: the attacker training distribution is not sufficiently covered by the audited distribution, or the two distributions have insufficient overlap for this divergence-based auditing guarantee to be finite. Therefore, it suffices to consider the nontrivial case where $\varepsilon<\infty$.

When $\varepsilon<\infty$, we have
\begin{equation}
\rho(z)
\le
e^\varepsilon
\end{equation}
for $\mathcal F_u$-almost every $z$. Let
\begin{equation}
A
=
\left\{
z:
\rho(z)\ge 1
\right\}.
\end{equation}
By the variational characterization of total variation distance,
\begin{equation}
\operatorname{TV}(\mathcal F_t,\mathcal F_u)
=
\mathcal F_t(A)-\mathcal F_u(A).
\end{equation}
Moreover, since $\rho(z)\le e^\varepsilon$, we have
\begin{equation}
\mathcal F_u(A)
=
\int_A d\mathcal F_u
\ge
e^{-\varepsilon}
\int_A \rho(z)\,d\mathcal F_u(z)
=
e^{-\varepsilon}\mathcal F_t(A).
\end{equation}
Therefore,
\begin{equation}
\operatorname{TV}(\mathcal F_t,\mathcal F_u)
\le
\left(1-e^{-\varepsilon}\right)\mathcal F_t(A)
\le
1-e^{-\varepsilon}.
\end{equation}
Using the elementary inequality
\begin{equation}
1-e^{-\varepsilon}
\le
\sqrt{\frac{\varepsilon}{2}},
\qquad
\varepsilon\ge 0,
\end{equation}
we obtain
\begin{equation}
\operatorname{TV}(\mathcal F_t,\mathcal F_u)
\le
\sqrt{
\frac{1}{2}
D_\infty(\mathcal F_t\|\mathcal F_u)
}.
\end{equation}
Consequently,
\begin{equation}
R_{\mathcal F_u}(\mathcal Q)-R_{\mathcal F_t}(\mathcal Q)
\le
\sqrt{
\frac{1}{2}
D_\infty(\mathcal F_t\|\mathcal F_u)
}.
\end{equation}

Combining the PAC-Bayesian statistical error under $\mathcal F_t$ with the auditing error bound yields that, with probability at least $1-\delta$, for all $\mathcal Q\ll\mathcal P$,
\begin{equation}
R_{\mathcal F_u}(\mathcal Q)
\le
R_S(\mathcal Q)
+
\sqrt{
\frac{\sigma^2}{m\delta}
\left(
\chi^2(\mathcal Q\|\mathcal P)+1
\right)
}
+
\sqrt{
\frac{1}{2}
D_\infty(\mathcal F_t\|\mathcal F_u)
}.
\end{equation}
This completes the proof.
\end{proof}
\subsection{Proof of Proposition 3.1} \label{P3.1}
\begin{proposition}
If $\mathcal F_u$ is modeled as a mixture of $\mathcal F_n$ and $\mathcal F_m$ with proportions $\rho$ and $1-\rho$, respectively, i.e.,
\begin{equation}
    \mathcal F_u=\rho \mathcal F_n +(1-\rho)\mathcal F_m,
\end{equation}
then its mean $\mu_u$ and covariance $\Sigma_u$ satisfy
\begin{equation}
        \mu_u=\rho \mu_n+(1-\rho)\mu_m,\quad
        \Sigma_u=\rho \Sigma_n+(1-\rho)\Sigma_m
        +(\rho-\rho^2)(\mu_n-\mu_m)(\mu_n-\mu_m)^\top.
\end{equation}
\end{proposition}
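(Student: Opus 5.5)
The plan is to compute the first two moments of the mixture directly from linearity of expectation over mixture components. Let $Z\sim\mathcal F_u$ and introduce a latent indicator $S\in\{0,1\}$ with $\Pr(S=1)=\rho$. Conditionally on $S=1$ we take $Z\sim\mathcal F_n$, and conditionally on $S=0$ we take $Z\sim\mathcal F_m$. This representation is exactly equivalent to the mixture $\mathcal F_u=\rho\mathcal F_n+(1-\rho)\mathcal F_m$, since for any integrable test function $\phi$ we have $\int\phi\,d\mathcal F_u=\rho\int\phi\,d\mathcal F_n+(1-\rho)\int\phi\,d\mathcal F_m$. Throughout we assume that $\mathcal F_n$ and $\mathcal F_m$ have finite second moments, which is automatic for empirical distributions.

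\textbf{Step 1 (mean).} Taking $\phi(z)=z$ in the mixture identity immediately gives $\mu_u=\rho\mu_n+(1-\rho)\mu_m$.

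\textbf{Step 2 (covariance).} We use the law of total covariance,
\begin{equation}
\Sigma_u=\mathbb E[\operatorname{Cov}(Z\mid S)]+\operatorname{Cov}(\mathbb E[Z\mid S]).
\end{equation}
The first term equals $\rho\Sigma_n+(1-\rho)\Sigma_m$. For the second term, note that $\mathbb E[Z\mid S]=\mu_m+S\,(\mu_n-\mu_m)$. Its covariance is therefore $\operatorname{Var}(S)\,(\mu_n-\mu_m)(\mu_n-\mu_m)^\top$, and $\operatorname{Var}(S)=\rho(1-\rho)=\rho-\rho^2$. Adding the two terms yields the stated formula.

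\textbf{Step 3 (optional direct check).} As an alternative that avoids the latent variable, we can compute second moments directly, $\mathbb E[ZZ^\top]=\rho(\Sigma_n+\mu_n\mu_n^\top)+(1-\rho)(\Sigma_m+\mu_m\mu_m^\top)$, and then subtract $\mu_u\mu_u^\top$. The algebraic obstacle, such as it is, comes at this point. The cross terms must collapse to
\begin{equation}
\rho\mu_n\mu_n^\top+(1-\rho)\mu_m\mu_m^\top-(\rho\mu_n+(1-\rho)\mu_m)(\rho\mu_n+(1-\rho)\mu_m)^\top=\rho(1-\rho)(\mu_n-\mu_m)(\mu_n-\mu_m)^\top.
\end{equation}
This follows by expanding both sides and collecting the coefficients of $\mu_n\mu_n^\top$, $\mu_m\mu_m^\top$ and the symmetric cross terms.

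\textbf{Caveat on empirical covariances.} If $\Sigma$ denotes the unbiased ($1/(N-1)$) sample covariance of a finite pooled sample, the identity holds only approximately. We therefore interpret the statement at the population level, or for the $1/N$-normalized empirical measures, where it is exact.
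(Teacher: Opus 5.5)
Your proof is correct and follows essentially the same route as the paper: the paper derives the within-plus-between decomposition $\Sigma_u=\rho\Sigma_n+(1-\rho)\Sigma_m+\rho(\mu_n-\mu_u)(\mu_n-\mu_u)^\top+(1-\rho)(\mu_m-\mu_u)(\mu_m-\mu_u)^\top$ by hand, adding and subtracting each component mean so that the cross terms vanish, which is exactly the law of total covariance you invoke. It then substitutes $\mu_n-\mu_u=(1-\rho)(\mu_n-\mu_m)$ and $\mu_m-\mu_u=-\rho(\mu_n-\mu_m)$ and uses $\rho(1-\rho)^2+(1-\rho)\rho^2=\rho(1-\rho)$, whereas your $\operatorname{Var}(S)$ computation gets the same coefficient in one line.
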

\begin{proof}[Proof of Proposition~\ref{P1}]
Let $x\sim\mathcal F_u$ be a random variable drawn from the audited mixture distribution. Since
\begin{equation}
\mathcal F_u=\rho\mathcal F_n+(1-\rho)\mathcal F_m,
\end{equation}
the mean of $\mathcal F_u$ is
\begin{align}
\mu_u
&=
\mathbb E_{x\sim\mathcal F_u}[x] \nonumber\\
&=
\rho \mathbb E_{x\sim\mathcal F_n}[x]
+
(1-\rho)\mathbb E_{x\sim\mathcal F_m}[x] \nonumber\\
&=
\rho\mu_n+(1-\rho)\mu_m .
\end{align}

Next, we derive the covariance of $\mathcal F_u$. By definition,
\begin{align}
\Sigma_u
&=
\mathbb E_{x\sim\mathcal F_u}
\left[
(x-\mu_u)(x-\mu_u)^\top
\right] \nonumber\\
&=
\rho\,
\mathbb E_{x\sim\mathcal F_n}
\left[
(x-\mu_u)(x-\mu_u)^\top
\right]
+
(1-\rho)\,
\mathbb E_{x\sim\mathcal F_m}
\left[
(x-\mu_u)(x-\mu_u)^\top
\right] \nonumber\\
&=
\rho\,
\mathbb E_{x\sim\mathcal F_n}
\left[
(x-\mu_n+\mu_n-\mu_u)
(x-\mu_n+\mu_n-\mu_u)^\top
\right] \nonumber\\
&\quad
+
(1-\rho)\,
\mathbb E_{x\sim\mathcal F_m}
\left[
(x-\mu_m+\mu_m-\mu_u)
(x-\mu_m+\mu_m-\mu_u)^\top
\right] \nonumber\\
&=
\rho\Sigma_n
+
(1-\rho)\Sigma_m
+
\rho(\mu_n-\mu_u)(\mu_n-\mu_u)^\top
+
(1-\rho)(\mu_m-\mu_u)(\mu_m-\mu_u)^\top .
\end{align}
Using
\begin{equation}
\mu_u=\rho\mu_n+(1-\rho)\mu_m,
\end{equation}
we have
\begin{equation}
\mu_n-\mu_u
=
(1-\rho)(\mu_n-\mu_m),
\qquad
\mu_m-\mu_u
=
-\rho(\mu_n-\mu_m).
\end{equation}
Substituting these identities into the covariance expression gives
\begin{align}
\Sigma_u
&=
\rho\Sigma_n
+
(1-\rho)\Sigma_m
+
\rho(1-\rho)^2
(\mu_n-\mu_m)(\mu_n-\mu_m)^\top \nonumber\\
&\quad
+
(1-\rho)\rho^2
(\mu_n-\mu_m)(\mu_n-\mu_m)^\top \nonumber\\
&=
\rho\Sigma_n
+
(1-\rho)\Sigma_m
+
\left[
\rho(1-\rho)^2+(1-\rho)\rho^2
\right]
(\mu_n-\mu_m)(\mu_n-\mu_m)^\top \nonumber\\
&=
\rho\Sigma_n
+
(1-\rho)\Sigma_m
+
\rho(1-\rho)
(\mu_n-\mu_m)(\mu_n-\mu_m)^\top \nonumber\\
&=
\rho\Sigma_n
+
(1-\rho)\Sigma_m
+
(\rho-\rho^2)
(\mu_n-\mu_m)(\mu_n-\mu_m)^\top .
\end{align}
This completes the proof.
\end{proof}
\subsection{Proof of Lemma 3.2} \label{L3.2}
\begin{lemma}[Solution of SMI]
Let
$d=\mu_n-\mu_m,B=dd^\top,
A=\Sigma_u-\Sigma_m,
G=\Sigma_n-\Sigma_m+B .$
SMI estimates the mixture ratio by solving:
\begin{equation}
\min_{0\le\rho\le 1}\left\|
\Sigma_u-
\left[
\rho \Sigma_n+(1-\rho)\Sigma_m
+(\rho-\rho^2)(\mu_n-\mu_m)(\mu_n-\mu_m)^\top
\right]
\right\|^2_F
\end{equation}
The solution is selected from the search space $\mathcal C$:
\begin{equation}
\rho^\star=
\arg\min_{\rho\in\mathcal C}
\left\|A-\rho G+\rho^2 B\right\|_F^2, \label{SMI-J}
\end{equation}
The search space $\mathcal C$ is:
\begin{equation}
\mathcal C
=
\{0,1\}
\cup
\Big\{
\rho\in[0,1]:
2\|B\|_F^2\rho^3
-3\langle G,B\rangle_F\rho^2 +
\left(\|G\|_F^2+2\langle A,B\rangle_F\right)\rho
-\langle A,G\rangle_F
=0
\Big\},
\end{equation}
where, \(\langle A,B\rangle_F=\mathrm{tr}(A^\top B)\) denotes the Frobenius inner product. Specifically, the solution process can be described as first solving a univariate cubic equation in $\rho$, and then substituting the obtained roots into the search space to select the minimizer.
\end{lemma}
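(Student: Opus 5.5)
The objective in \eqref{youhauwentismi} reduces to a one-dimensional quartic polynomial minimized over a compact interval, and the plan is to reduce to that form and then apply the elementary first-order optimality argument.

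\textbf{Rewriting the residual.} Using Proposition~\ref{P1}'s parametrization, I would write the residual inside the norm as
\begin{equation*}
\Sigma_u-\Sigma_m-\rho(\Sigma_n-\Sigma_m)-(\rho-\rho^2)B
= A-\rho(\Sigma_n-\Sigma_m+B)+\rho^2B
= A-\rho G+\rho^2 B .
\end{equation*}
This identifies the objective as $f(\rho)=\|A-\rho G+\rho^2B\|_F^2$, which justifies the expression in \eqref{SMI-J}.

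\textbf{Expanding $f$.} I would expand $f$ with the Frobenius inner product, using the symmetry of all matrices involved (covariances and $dd^\top$) so that $\langle X,Y\rangle_F=\langle Y,X\rangle_F$:
\begin{equation*}
f(\rho)=\|B\|_F^2\rho^4-2\langle G,B\rangle_F\rho^3+\bigl(\|G\|_F^2+2\langle A,B\rangle_F\bigr)\rho^2-2\langle A,G\rangle_F\rho+\|A\|_F^2 .
\end{equation*}
Differentiating and dividing by $2$ gives
\begin{equation*}
\tfrac12 f'(\rho)=2\|B\|_F^2\rho^3-3\langle G,B\rangle_F\rho^2+\bigl(\|G\|_F^2+2\langle A,B\rangle_F\bigr)\rho-\langle A,G\rangle_F ,
\end{equation*}
which is exactly the cubic defining $\mathcal C$.

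\textbf{Locating the minimizer.} Since $f$ is a continuous polynomial on the compact set $[0,1]$, a minimizer $\rho^\dagger$ exists by Weierstrass. Either $\rho^\dagger\in\{0,1\}$, or $\rho^\dagger$ is interior, in which case Fermat's rule gives $f'(\rho^\dagger)=0$, so $\rho^\dagger$ is a root of the cubic lying in $[0,1]$. In both cases $\rho^\dagger\in\mathcal C$. The set $\mathcal C$ is finite (at most $5$ points) unless the cubic vanishes identically; in that degenerate case $f$ is constant on $[0,1]$ and any point is optimal. Since $\mathcal C\subseteq[0,1]$, we get $\min_{\mathcal C}f=\min_{[0,1]}f$, so the argmin over $\mathcal C$ returns a global minimizer.

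\textbf{Main obstacle.} There is no deep difficulty here; the only places requiring care are bookkeeping. The first is getting the signs and the coefficient $3$ correct in the derivative, for which symmetry of $G$ and $B$ is needed to merge the cross terms. The second is the degenerate case $B=0$, $G=0$, which I would handle with a separate remark.
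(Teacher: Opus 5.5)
Your proposal is correct and follows the paper's proof essentially step for step: you rewrite the residual as $A-\rho G+\rho^2B$, expand it via the Frobenius inner product, and use the boundary-or-stationary-point argument on $[0,1]$ to place the minimizer in $\mathcal C$. Your remark on the degenerate case $B=G=0$ is a small improvement, since there $\mathcal C=[0,1]$ is not finite, a case the paper passes over. Note, though, that merging the cross terms needs no symmetry of $G$ or $B$, because $\mathrm{tr}(X^\top Y)=\mathrm{tr}(Y^\top X)$ holds for all real matrices.
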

\begin{proof}[Proof of Lemma~\ref{lemmaSMI}]
By Proposition~\ref{P1}, the covariance predicted by the mixture model with mixture ratio $\rho$ is
\begin{equation}
\widehat{\Sigma}_u(\rho)
=
\rho \Sigma_n+(1-\rho)\Sigma_m
+
(\rho-\rho^2)(\mu_n-\mu_m)(\mu_n-\mu_m)^\top.
\end{equation}
Let
\begin{equation}
d=\mu_n-\mu_m,
\qquad
B=dd^\top,
\qquad
A=\Sigma_u-\Sigma_m,
\qquad
G=\Sigma_n-\Sigma_m+B.
\end{equation}
Then
\begin{equation}
\widehat{\Sigma}_u(\rho)
=
\Sigma_m+\rho(\Sigma_n-\Sigma_m+B)-\rho^2B
=
\Sigma_m+\rho G-\rho^2B.
\end{equation}
Therefore, the covariance matching objective can be rewritten as
\begin{equation}
\left\|
\Sigma_u-\widehat{\Sigma}_u(\rho)
\right\|_F^2
=
\left\|
A-\rho G+\rho^2B
\right\|_F^2.
\end{equation}
Thus, SMI estimates $\rho$ by solving the one-dimensional constrained optimization problem
\begin{equation}
\rho^\star
=
\arg\min_{0\le \rho\le 1}
J(\rho),
\qquad
J(\rho)
=
\left\|
A-\rho G+\rho^2B
\right\|_F^2.
\end{equation}

We now derive the candidate set for the global minimizer. Using the Frobenius inner product
\begin{equation}
\langle X,Y\rangle_F=\mathrm{tr}(X^\top Y),
\end{equation}
the objective can be expanded as
\begin{equation}
\begin{aligned}
J(\rho)
&=
\left\langle A-\rho G+\rho^2B,
A-\rho G+\rho^2B
\right\rangle_F \\
&=
\|A\|_F^2
-2\rho\langle A,G\rangle_F
+\rho^2\|G\|_F^2
+2\rho^2\langle A,B\rangle_F \\
&\quad
-2\rho^3\langle G,B\rangle_F
+\rho^4\|B\|_F^2 .
\end{aligned}
\end{equation}
Since $J(\rho)$ is a continuous polynomial on the compact interval $[0,1]$, it attains a global minimizer. Any global minimizer must either lie at the boundary points $\rho=0$ or $\rho=1$, or be an interior stationary point satisfying $J'(\rho)=0$.

Differentiating $J(\rho)$ gives
\begin{equation}
\begin{aligned}
J'(\rho)
&=
-2\langle A,G\rangle_F
+2\rho\|G\|_F^2
+4\rho\langle A,B\rangle_F \\
&\quad
-6\rho^2\langle G,B\rangle_F
+4\rho^3\|B\|_F^2 .
\end{aligned}
\end{equation}
Dividing by $2$, the stationary condition $J'(\rho)=0$ is equivalent to
\begin{equation}
2\|B\|_F^2\rho^3
-3\langle G,B\rangle_F\rho^2
+
\left(
\|G\|_F^2+2\langle A,B\rangle_F
\right)\rho
-
\langle A,G\rangle_F
=
0.
\end{equation}
Therefore, all possible global minimizers are contained in the finite candidate set
\begin{equation}
\mathcal C
=
\{0,1\}
\cup
\Big\{
\rho\in[0,1]:
2\|B\|_F^2\rho^3
-3\langle G,B\rangle_F\rho^2
+
\left(\|G\|_F^2+2\langle A,B\rangle_F\right)\rho
-\langle A,G\rangle_F
=0
\Big\}.
\end{equation}
Consequently, the SMI estimate is obtained by evaluating the objective on this candidate set and selecting the minimizer:
\begin{equation}
\rho^\star
=
\arg\min_{\rho\in\mathcal C}
\left\|
A-\rho G+\rho^2B
\right\|_F^2.
\end{equation}
This completes the proof.
\end{proof}

\subsection{Proof of Lemma 3.3}\label{A:3.3}
\begin{lemma}
    For the embedded $\mu^{(k)}_f, \mu^{(k)}_v, \mu^{(k)}_t$, the optimization problem~\eqref{youhauwentismi} is equivalent to the following convex quadratic programming problem:
    \begin{equation}
        \min_{0\le\alpha\le 1}\,\, \alpha^2\|\mu^{(k)}_v-\mu^{(k)}_t \|^2_\mathcal H-2\alpha\langle\mu_f^{(k)}-\mu_t^{(k)}, \mu_v^{(k)}-\mu_t^{(k)}\rangle_\mathcal H
    \end{equation}
    This provides an efficient solution path for the optimization problem~\eqref{heyouhua}.
\end{lemma}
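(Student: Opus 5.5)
Throughout, I read $\mu^{(k)}_f$, $\mu^{(k)}_v$ and $\mu^{(k)}_t$ as the kernel mean embeddings in the RKHS $\mathcal H$ of the audited distribution $\mathcal F_u$, the non-member distribution $\mathcal F_n$ and the member distribution $\mathcal F_m$, respectively. I also read "equivalent to problem~\eqref{youhauwentismi}" as follows: the mixture-matching problem of Eq.~\eqref{7} is solved with the discrepancy $D$ taken to be the squared MMD. The covariance objective of Lemma~\ref{lemmaSMI} is the special case where only second moments are matched.

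\textbf{Step 1: mixtures embed linearly.} For $\mathcal F=\alpha\mathcal F_n+(1-\alpha)\mathcal F_m$ and any $f\in\mathcal H$, the reproducing property gives $\langle f,\mu_{\mathcal F}\rangle_{\mathcal H}=\mathbb E_{\mathcal F}[f]$. The right-hand side is linear in the mixing measure. Hence
\begin{equation*}
\mu_{\mathcal F}=\alpha\mu^{(k)}_v+(1-\alpha)\mu^{(k)}_t .
\end{equation*}
This is the RKHS analogue of the mean identity in Proposition~\ref{P1}. In feature space the covariance picks up a nonlinear $(\rho-\rho^2)B$ term, but no such term appears in $\mathcal H$. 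This is the sense in which second-order information becomes first-order information.

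\textbf{Step 2: rewrite and expand the objective.} The objective becomes
\begin{equation*}
\mathrm{MMD}^2\bigl(\mathcal F_u,\alpha\mathcal F_n+(1-\alpha)\mathcal F_m\bigr)=\bigl\|(\mu^{(k)}_f-\mu^{(k)}_t)-\alpha(\mu^{(k)}_v-\mu^{(k)}_t)\bigr\|_{\mathcal H}^2 .
\end{equation*}
Expanding the squared norm with the Hilbert inner product gives three terms:
\begin{equation*}
\|\mu^{(k)}_f-\mu^{(k)}_t\|_{\mathcal H}^2-2\alpha\langle\mu^{(k)}_f-\mu^{(k)}_t,\mu^{(k)}_v-\mu^{(k)}_t\rangle_{\mathcal H}+\alpha^2\|\mu^{(k)}_v-\mu^{(k)}_t\|_{\mathcal H}^2 .
\end{equation*}
The first term does not depend on $\alpha$, so dropping it leaves the minimizer over $[0,1]$ unchanged. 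What remains is exactly the stated quadratic.

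\textbf{Step 3: convexity and solution.} The leading coefficient is $\|\mu^{(k)}_v-\mu^{(k)}_t\|_{\mathcal H}^2\ge 0$, so the problem is a convex QP on a box. If this coefficient is positive (for instance, with a characteristic kernel and $\mathcal F_n\neq\mathcal F_m$), the minimizer is unique. It is the unconstrained minimizer clipped to $[0,1]$:
\begin{equation*}
\alpha^\star=\operatorname{clip}_{[0,1]}\left(\frac{\langle\mu^{(k)}_f-\mu^{(k)}_t,\mu^{(k)}_v-\mu^{(k)}_t\rangle_{\mathcal H}}{\|\mu^{(k)}_v-\mu^{(k)}_t\|_{\mathcal H}^2}\right).
\end{equation*}
All inner products reduce to averages of kernel evaluations via $\langle\mu_P,\mu_Q\rangle_{\mathcal H}=\mathbb E_{x\sim P,\,y\sim Q}[k(x,y)]$. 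In practice these are estimated from the empirical samples.

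\textbf{Main obstacle: the word "equivalent".} The MMD problem and problem~\eqref{youhauwentismi} are generally not literally the same optimization problem. To link them, I would first consider the homogeneous quadratic kernel $k(x,y)=(x^\top y)^2$, whose embedding is the second-moment matrix with the Frobenius inner product. Its mean embedding is $\mathbb E[xx^\top]=\Sigma+\mu\mu^\top$, so the MMD objective matches second moments.

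Under the mixture model both problems share the same exact minimizer $\rho$, and I would show this carefully. In $\mathcal H$ the second-moment matrix is exactly affine in $\rho$, whereas in covariance form the $(\rho-\rho^2)B$ term is what restores that affinity after centering. For a general characteristic kernel, I would present the equivalence as equality of population minimizers whenever the mixture model holds. That common minimizer is the true $\rho$, at which both objectives vanish.
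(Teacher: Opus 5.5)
Your Steps 1--3 are the paper's argument, and they are correct. The paper writes the objective directly as $\|\mu^{(k)}_f-\alpha\mu^{(k)}_v-(1-\alpha)\mu^{(k)}_t\|_{\mathcal H}^2$, regroups it as $\|(\mu^{(k)}_f-\mu^{(k)}_t)-\alpha(\mu^{(k)}_v-\mu^{(k)}_t)\|_{\mathcal H}^2$, expands, drops the $\alpha$-free term, and notes convexity. Your version improves on it in three ways:
\begin{itemize}
\item Step 1 supplies the fact the paper uses silently, namely that a mixture embeds as the same convex combination of embeddings.
\item Your clipped closed form removes the need for a solver.
\item Your uniqueness caveat is the correct one. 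The paper asserts a unique minimizer unconditionally, yet if $\mu^{(k)}_v=\mu^{(k)}_t$ the objective is identically zero on $[0,1]$.
\end{itemize}
The paper never tries to connect this to the covariance problem of Lemma~\ref{lemmaSMI}. It simply treats the squared MMD as the objective of that problem, so the equivalence it actually proves is exactly your Step 2.

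What would fail is the reconciliation in your last paragraph: the claim that, under the mixture model, the covariance problem and the MMD problem have the same minimizer. Both objectives vanish at the true $\rho$, but the covariance objective can vanish elsewhere too. With $A,G,B,d$ as in Lemma~\ref{lemmaSMI}, the mixture model gives $A=\rho G-\rho^2B$, hence
\[
A-\alpha G+\alpha^2B=(\rho-\alpha)\bigl(\Sigma_n-\Sigma_m+(1-\rho-\alpha)dd^\top\bigr).
\]
This is also zero whenever $\Sigma_n-\Sigma_m=(\rho+\alpha-1)dd^\top$. For example, if $\Sigma_n=\Sigma_m$ and $d\neq 0$, then $\alpha=1-\rho$ is a second global minimizer, because the mixture covariance depends on $\alpha$ only through $\alpha(1-\alpha)$. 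With a characteristic kernel, by contrast, the MMD problem has the unique minimizer $\rho$. So coinciding minimizer sets need an explicit identifiability hypothesis: no $\alpha\in[0,1]\setminus\{\rho\}$ with $\Sigma_n-\Sigma_m=(\rho+\alpha-1)dd^\top$.

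The quadratic kernel does not merge the two problems either. Write $M=\Sigma+\mu\mu^\top$ and $\bar\mu_\alpha=\alpha\mu_n+(1-\alpha)\mu_m$. The covariance residual equals the second-moment residual $M_u-\alpha M_n-(1-\alpha)M_m$ minus $\mu_u\mu_u^\top-\bar\mu_\alpha\bar\mu_\alpha^\top$. So off the exact mixture (e.g.\ with empirical statistics) the two minimizers differ.

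The defensible statement is therefore narrower. The squared-MMD mixture problem is equivalent to the stated QP. It agrees with the covariance problem only in that both recover the true $\rho$ under the mixture model, and uniquely so only given the identifiability condition.
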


\begin{proof}
We begin by expanding the squared Hilbert space norm in the objective of problem~\eqref{youhauwentismi}. Recall that for any $a, b \in \mathcal{H}$, we have $\|a - b\|_{\mathcal{H}}^2 = \langle a - b, a - b \rangle_{\mathcal{H}}$. Applying this to the objective function:

\begin{equation}
 \begin{aligned}
    &\|\mu^{(k)}_f - \alpha\mu^{(k)}_v - (1-\alpha)\mu^{(k)}_t \|_{\mathcal{H}}^2 \nonumber \\
    &= \big\langle \mu^{(k)}_f - \alpha\mu^{(k)}_v - (1-\alpha)\mu^{(k)}_t,\,
                    \mu^{(k)}_f - \alpha\mu^{(k)}_v - (1-\alpha)\mu^{(k)}_t \big\rangle_{\mathcal{H}}. \end{aligned}
\end{equation}

We simplify the expression inside the inner product. Observe that:
\begin{equation}
    \alpha\mu^{(k)}_v + (1-\alpha)\mu^{(k)}_t = \mu^{(k)}_t + \alpha(\mu^{(k)}_v - \mu^{(k)}_t).
\end{equation}
Therefore,
\begin{equation}
 \mu^{(k)}_f - \alpha\mu^{(k)}_v - (1-\alpha)\mu^{(k)}_t = (\mu^{(k)}_f - \mu^{(k)}_t) - \alpha(\mu^{(k)}_v - \mu^{(k)}_t).
\end{equation}

Let us denote:
\begin{equation}
A := \mu^{(k)}_f - \mu^{(k)}_t, \quad B := \mu^{(k)}_v - \mu^{(k)}_t.
\end{equation}

Then the objective becomes:
\begin{equation}
\|A - \alpha B\|_{\mathcal{H}}^2 = \langle A - \alpha B, A - \alpha B \rangle_{\mathcal{H}}.
\end{equation}

Expanding the inner product using bilinearity and symmetry:
\begin{equation}
\begin{aligned}
    \|A - \alpha B\|_{\mathcal{H}}^2
    &= \langle A, A \rangle_{\mathcal{H}} - 2\alpha \langle A, B \rangle_{\mathcal{H}} + \alpha^2 \langle B, B \rangle_{\mathcal{H}} \nonumber \\
    &= \|A\|_{\mathcal{H}}^2 - 2\alpha \langle A, B \rangle_{\mathcal{H}} + \alpha^2 \|B\|_{\mathcal{H}}^2.
\end{aligned}
\end{equation}

Since $\|A\|_{\mathcal{H}}^2$ is independent of $\alpha$, it does not affect the minimizer of the optimization problem over $\alpha \in [0,1]$. Therefore, minimizing the original objective is equivalent to minimizing the function:
\begin{equation}
\alpha^2 \|B\|_{\mathcal{H}}^2 - 2\alpha \langle A, B \rangle_{\mathcal{H}}.
\end{equation}

Substituting back the definitions of $A$ and $B$, we obtain:
\begin{equation}
\alpha^2 \|\mu^{(k)}_v - \mu^{(k)}_t\|_{\mathcal{H}}^2 - 2\alpha \langle \mu^{(k)}_f - \mu^{(k)}_t,\, \mu^{(k)}_v - \mu^{(k)}_t \rangle_{\mathcal{H}},
\end{equation}
which is precisely the objective in equation~\eqref{youhuamd}.

Finally, note that the objective in~\eqref{youhuamd} is a quadratic function in $\alpha$ with a non-negative leading coefficient $\|\mu^{(k)}_v - \mu^{(k)}_t\|_{\mathcal{H}}^2 \geq 0$. Hence, it is convex in $\alpha$, and the constraint $0 \leq \alpha \leq 1$ defines a compact convex set. Therefore, problem~\eqref{youhuamd} is a convex quadratic programming problem, which admits a unique global minimizer that can be efficiently computed.

This completes the proof.
\end{proof}

\section{Theoretical Elaboration on SMI-M and MMD}\label{mmd}
In this section, we elucidate the motivation behind SMI-M. In traditional statistical inference and optimization problems, the similarity between two random variables or feature distributions is often measured by matching the first moment (mean) or the second moment (covariance matrix) of the samples. For instance, in the SMI-$\Sigma$ method, the objective is to align the empirical covariance $\widehat{\Sigma}_f$ of the auditing features $f(\cdot; w)$ with the target covariance $\Sigma^\ast$. However, relying solely on low-order moments presents significant limitations:
\begin{itemize}
    \item Low-order moments fail to characterize high-order structures of the distribution (such as skewness, kurtosis, and multimodality).
    \item In high-dimensional spaces, the estimation of covariance matrices is susceptible to noise and small sample sizes, leading to the "curse of dimensionality".
    \item If the true distribution is non-Gaussian, matching only the first two moments may completely overlook critical differences.
\end{itemize}
To overcome these issues, modern approaches tend to directly measure the \textbf{distance between entire probability distributions} rather than just their moments. Among these, the \textbf{Maximum Mean Discrepancy} (MMD) serves as a non-parametric distribution metric with a solid theoretical foundation and computational feasibility.

Let $\mathcal{X} \subseteq \mathbb{R}^d$ be a non-empty set (typically a subset of Euclidean space), and let $\mathbb{P}$ and $\mathbb{Q}$ be two Borel probability measures defined on $\mathcal{X}$. Let $k: \mathcal{X} \times \mathcal{X} \to \mathbb{R}$ be a \textbf{Mercer kernel function}, which satisfies the following conditions:
\begin{enumerate}
    \item Symmetry: $k(x, x') = k(x', x)$ for all $x, x' \in \mathcal{X}$;
    \item Positive Definiteness: For any finite set of points $\{x_1, \dots, x_n\} \subset \mathcal{X}$ and any real coefficients $\{a_i\}_{i=1}^n \subset \mathbb{R}$, the following holds:
    \[
        \sum_{i=1}^n \sum_{j=1}^n a_i a_j k(x_i, x_j) \geq 0.
    \]
\end{enumerate}
According to Mercer's theorem, there exists a unique \textbf{Reproducing Kernel Hilbert Space} (RKHS) $\mathcal{H}_k$, where the inner product $\langle \cdot, \cdot \rangle_{\mathcal{H}_k}$ satisfies the \textbf{reproducing property}:
\[
    \langle f, k(x, \cdot) \rangle_{\mathcal{H}_k} = f(x), \quad \forall f \in \mathcal{H}_k,\, x \in \mathcal{X}.
\]
Furthermore, the RKHS $\mathcal{H}_k$ can be constructed via a feature map $\phi: \mathcal{X} \to \mathcal{H}_k$, such that $k(x, x') = \langle \phi(x), \phi(x') \rangle_{\mathcal{H}_k}$. This mapping embeds the input space into a (potentially infinite-dimensional) Hilbert space.

Given a probability measure $\mathbb{P}$, its \textbf{kernel mean embedding} in the RKHS $\mathcal{H}_k$ is defined as the Bochner integral:
\[
    \mu_{\mathbb{P}}^{(k)} := \mathbb{E}_{x \sim \mathbb{P}}[\phi(x)] = \int_{\mathcal{X}} \phi(x) \, d\mathbb{P}(x) \in \mathcal{H}_k,
\]
provided that this expectation exists in $\mathcal{H}_k$ (e.g., this holds when $\mathbb{E}_{x \sim \mathbb{P}}[\sqrt{k(x,x)}] < \infty$). This embedding compresses the entire distribution $\mathbb{P}$ into a single vector within the RKHS, thereby transforming the problem of distribution comparison into a geometric problem within a Hilbert space.

The \textbf{Maximum Mean Discrepancy} (MMD) is defined as the Hilbert norm distance between the embeddings of the two distributions:
\begin{equation}
    \mathrm{MMD}_k(\mathcal{P}, \mathcal{Q})
    := \left\| \mu_{\mathcal{P}}^{(k)} - \mu_{\mathcal{Q}}^{(k)} \right\|_{\mathcal{H}_k}.
\end{equation}
Utilizing the reproducing property of the RKHS and the linearity of the inner product, the square of MMD can be expanded into a form dependent solely on the expectation of the kernel function:
\begin{align}
    \mathrm{MMD}_k^2(\mathcal{P}, \mathcal{Q})
    &= \left\langle \mu_{\mathcal{P}}^{(k)} - \mu_{\mathcal{Q}}^{(k)}, \mu_{\mathcal{P}}^{(k)} - \mu_{\mathcal{Q}}^{(k)} \right\rangle_{\mathcal{H}_k} \nonumber \\
    &= \mathcal{E}_{x, x' \sim \mathcal{P}}[k(x, x')]
      + \mathcal{E}_{y, y' \sim \mathcal{Q}}[k(y, y')]
      - 2\mathcal{E}_{x \sim \mathcal{P},\, y \sim \mathcal{Q}}[k(x, y)].
\end{align}
This expression demonstrates that MMD relies only on pairwise kernel values between samples, without the need to explicitly compute high-dimensional feature maps.

It is worth noting that whether MMD constitutes a valid \textbf{metric} depends on the properties of the kernel function $k$. If $k$ is a \textbf{characteristic kernel}, satisfying
\[
    \mathrm{MMD}_k(\mathcal{P}, \mathcal{Q}) = 0 \iff \mathcal{P} = \mathcal{Q},
\]
then MMD induces a metric on the space of probability measures. Sufficient conditions include: $k$ is continuous, bounded, and the corresponding RKHS $\mathcal{H}_k$ is dense in $\mathcal{X}$ (e.g., the Gaussian RBF kernel satisfies this property on compact sets).

\subsection{Empirical Estimation}
In practical applications, assuming we only have independent and identically distributed (i.i.d.) samples $\{x_i\}_{i=1}^n$ from $\mathcal{P}$ and samples $\{y_j\}_{j=1}^m$ from $\mathcal{Q}$, MMD can be estimated via empirical mean embeddings. Define the empirical embeddings:
\[
    \hat{\mu}_{\mathcal{P}}^{(k)} = \frac{1}{n} \sum_{i=1}^n \phi(x_i), \quad
    \hat{\mu}_{\mathcal{Q}}^{(k)} = \frac{1}{m} \sum_{j=1}^m \phi(y_j).
\]
The \textbf{biased estimator} is given by:
\begin{equation}
    \widehat{\mathrm{MMD}}_{k,\text{biased}}^2 = \left\| \hat{\mu}_{\mathcal{P}}^{(k)} - \hat{\mu}_{\mathcal{Q}}^{(k)} \right\|_{\mathcal{H}_k}^2
    = \frac{1}{n^2}\sum_{i=1}^n\sum_{j=1}^n k(x_i, x_j)
    + \frac{1}{m^2}\sum_{i=1}^m\sum_{j=1}^m k(y_i, y_j)
    - \frac{2}{nm}\sum_{i=1}^n\sum_{j=1}^m k(x_i, y_j).
\end{equation}
This estimator possesses a systematic bias under finite samples but is consistent (i.e., it converges in probability to the true MMD as $n,m \to \infty$).

To eliminate the bias from auto-correlation terms, an \textbf{unbiased estimator} can be adopted:
\begin{align}
    \widehat{\mathrm{MMD}}_{k,\text{unbiased}}^2
    &= \frac{1}{n(n-1)} \sum_{i \neq j} k(x_i, x_j)
    + \frac{1}{m(m-1)} \sum_{i \neq j} k(y_i, y_j)
    - \frac{2}{nm} \sum_{i=1}^n \sum_{j=1}^m k(x_i, y_j),
\end{align}
where the first and second terms exclude the diagonal entries where $i=j$. This estimator is unbiased when $n, m \geq 2$, although it may exhibit larger variance with small sample sizes.
\subsection{Common Kernel Functions and Calculation Methods}

The empirical performance of MMD depends on the choice of kernel function $k(x,x')$. For distance-based kernels, the \emph{bandwidth} or scale parameter, usually denoted by $\sigma>0$, controls how rapidly the kernel value decreases as the distance between two samples increases. A smaller bandwidth makes the kernel more sensitive to local differences, whereas a larger bandwidth produces smoother similarity scores. In practice, $\sigma$ is commonly selected by the median heuristic or cross-validation.

Commonly used kernel functions can be computed as follows:

\begin{enumerate}
    \item \textbf{Gaussian RBF Kernel}
    \[
        k_{\mathrm{RBF}}(x,x') =
        \exp\left(-\frac{\|x-x'\|_2^2}{2\sigma^2}\right).
    \]
    To compute this kernel, first calculate the squared Euclidean distance
    \[
        \|x-x'\|_2^2 = \sum_{i=1}^d (x_i-x_i')^2,
    \]
    then substitute it into the exponential function. The bandwidth $\sigma$ determines the scale at which two samples are regarded as similar.

    \item \textbf{Laplacian Kernel}
    \[
        k_{\mathrm{Lap}}(x,x') =
        \exp\left(-\frac{\|x-x'\|_1}{\sigma}\right),
    \]
    where
    \[
        \|x-x'\|_1 = \sum_{i=1}^d |x_i-x_i'|.
    \]
    The calculation first sums the absolute coordinate-wise differences, then divides the result by $\sigma$ and applies the negative exponential.

    \item \textbf{Linear Kernel}
    \[
        k_{\mathrm{lin}}(x,x') = x^\top x'.
    \]
    This kernel is computed directly by taking the inner product between two vectors:
    \[
        x^\top x' = \sum_{i=1}^d x_i x_i'.
    \]

    \item \textbf{Polynomial Kernel}
    \[
        k_{\mathrm{poly}}(x,x') = (x^\top x' + c)^p,
    \]
    where $c \geq 0$ is the bias term and $p \in \mathbb{N}$ is the polynomial degree. Its computation consists of first evaluating the inner product $x^\top x'$, adding the bias term $c$, and then raising the result to the power $p$.

    \item \textbf{Cosine Kernel}
    \[
        k_{\mathrm{cos}}(x,x') =
        \frac{x^\top x'}{\|x\|_2 \|x'\|_2}.
    \]
    This kernel measures the cosine similarity between two nonzero vectors. It is computed by taking the inner product of $x$ and $x'$, then normalizing it by the product of their Euclidean norms.
\end{enumerate}

After choosing a kernel, the empirical MMD estimator can be obtained by substituting sample pairs from the two distributions into the corresponding kernel function.
\newpage
\section{The Complete Experimental}\label{A:zhubiao}
The complete experimental results are shown in Table~\ref{tab:newreus}.
\begin{table}[h]
\centering
\caption{Performance auditing of SMI and MIA-based methods on the complete unlearning task. The Dataset column reports the model, task, unlearning percentage, and number of shadow models.Experimental results for full datasets, with the \textbf{best} and \underline{second best} results are highlighted. }
\label{tab:newreus}
\resizebox{\textwidth}{!}{%
\begin{tabular}{@{}llGGccccccc@{}}
\toprule
Dataset & Metric & SMI & SMI-M & RULI & IAM & RMIA & EMIA & LiRA & Unleak & Ramdom \\
\midrule
\multirow{3}{*}{Resnet18-Cifar10-5\%-64} & FNR & -- & -- & 13.28\% & 12.72\% & 49.68\% & 70.82\% & 13.80\% & 41.24\% & 49.52\% \\
 & FPR & -- & -- & 0.00\% & 13.08\% & 50.04\% & 70.72\% & 0.00\% & 43.48\% & 50.88\% \\
 & $\rho^*$ & \underline{88.89\%} & \textbf{92.44\%} & 87.94\% & 88.67\% & 66.68\% & 39.10\% & 87.84\% & 62.58\% & 50.06\% \\
\midrule
\multirow{3}{*}{Resnet18-Cifar10-10\%-64} & FNR & -- & -- & 19.38\% & 14.22\% & 56.98\% & 61.22\% & 18.20\% & 52.44\% & 49.88\% \\
 & FPR & -- & -- & 0.00\% & 18.28\% & 42.24\% & 67.83\% & 0.00\% & 31.88\% & 50.17\% \\
 & $\rho^*$ & 87.70\% & \textbf{91.80\%} & 83.24\% & 83.37\% & 61.68\% & 30.90\% & \underline{89.94\%} & 53.28\% & 50.02\% \\
\midrule
\multirow{3}{*}{Resnet18-Cifar100-5\%-16} & FNR & -- & -- & 15.60\% & 17.76\% & 87.80\% & 24.72\% & 10.47\% & 81.04\% & 48.72\% \\
 & FPR & -- & -- & 0.00\% & 17.96\% & 87.96\% & 22.24\% & 0.00\% & 82.88\% & 49.52\% \\
 & $\rho^*$ & 79.64\% & \textbf{89.84\%} & 83.00\% & \underline{89.36\%} & 67.19\% & 84.64\% & 86.64\% & 28.52\% & 50.88\% \\
\midrule
\multirow{3}{*}{Resnet18-Cifar100-10\%-16} & FNR & -- & -- & 31.00\% & 54.88\% & 58.96\% & 87.04\% & 32.24\% & 62.84\% & 49.71\% \\
 & FPR & -- & -- & 0.00\% & 52.12\% & 60.92\% & 87.00\% & 0.00\% & 64.36\% & 51.02\% \\
 & $\rho^*$ & 77.56\% & \textbf{91.84\%} & 87.48\% & 84.76\% & 81.74\% & 51.46\% & \underline{88.94\%} & 36.90\% & 50.58\% \\
\midrule
\multirow{3}{*}{Resnet50-Cifar100-5\%-32} & FNR & -- & -- & 36.76\% & 86.72\% & 91.92\% & 66.52\% & 37.41\% & 93.08\% & 47.68\% \\
 & FPR & -- & -- & 0.00\% & 63.83\% & 56.00\% & 57.56\% & 0.00\% & 47.69\% & 50.15\% \\
 & $\rho^*$ & 81.22\% & \textbf{92.64\%} & 73.08\% & 63.56\% & 58.24\% & 67.84\% & \underline{87.04\%} & 10.40\% & 54.68\% \\
\midrule
\multirow{3}{*}{Resnet50-Cifar100-10\%-32} & FNR & -- & -- & 26.90\% & 87.73\% & 91.50\% & 70.72\% & 38.29\% & 93.56\% & 49.71\% \\
 & FPR & -- & -- & 0.00\% & 61.69\% & 53.21\% & 57.81\% & 0.00\% & 49.77\% & 42.27\% \\
 & $\rho^*$ & 83.46\% & \textbf{93.95\%} & 74.44\% & 64.98\% & 60.74\% & 69.94\% & \underline{87.38\%} & 4.68\% & 54.14\% \\
\midrule
\multirow{3}{*}{Resnet18-Cinic10-5\%-16} & FNR & -- & -- & 28.44\% & 82.32\% & 90.64\% & 9.24\% & 5.36\% & 14.96\% & 49.92\% \\
 & FPR & -- & -- & 0.00\% & 81.68\% & 90.20\% & 8.36\% & 0.00\% & 14.52\% & 51.13\% \\
 & $\rho^*$ & 89.12\% & \textbf{92.10\%} & 88.47\% & 37.53\% & 29.44\% & 88.24\% & \underline{89.27\%} & 87.28\% & 50.05\% \\
\midrule
\multirow{3}{*}{Resnet18-Cinic10-10\%-16} & FNR & -- & -- & 24.17\% & 81.44\% & 94.08\% & 85.56\% & 17.64\% & 29.92\% & 49.52\% \\
 & FPR & -- & -- & 0.00\% & 76.84\% & 92.56\% & 78.84\% & 0\% & 22.04\% & 50.88\% \\
 & $\rho^*$ & 87.24\% & \textbf{92.41\%} & \underline{91.15\%} & 40.34\% & 24.98\% & 35.88\% & 89.62\% & 78.71\% & 49.83\% \\
\midrule
\multirow{3}{*}{Resnet50-Cinic10-5\%-16} & FNR & -- & -- & 11.68\% & 82.32\% & 90.64\% & 15.24\% & 11.36\% & 14.96\% & 49.24\% \\
 & FPR & -- & -- & 0.00\% & 81.68\% & 90.20\% & 15.36\% & 0.00\% & 14.52\% & 50.21\% \\
 & $\rho^*$ & 89.31\% & \textbf{90.72\%} & 89.74\% & 37.53\% & 29.44\% & 88.26\% & \underline{89.92\%} & 87.28\% & 49.36\% \\
\midrule
\multirow{3}{*}{Resnet50-Cinic10-10\%-16} & FNR & -- & -- & 11.36\% & 29.92\% & 11.88\% & 22.08\% & 11.18\% & 19.46\% & 49.82\% \\
 & FPR & -- & -- & 0.00\% & 30.96\% & 11.28\% & 21.68\% & 0.00\% & 14.04\% & 50.23\% \\
 & $\rho^*$ & \underline{90.89\%} & \textbf{92.27\%} & 89.88\% & 85.35\% & 87.64\% & 89.66\% & 90.47\% & 78.02\% & 49.76\% \\
\midrule
\multirow{3}{*}{VIT-Cinic10-5\%-16} & FNR & -- & -- & 15.68\% & 46.96\% & 15.56\% & 36.60\% & 19.27\% & 81.28\% & 49.42\% \\
 & FPR & -- & -- & 0.00\% & 48.16\% & 14.72\% & 35.36\% & 0.00\% & 81.96\% & 50.48\% \\
 & $\rho^*$ & \underline{89.72\%} & \textbf{92.13\%} & 85.05\% & 53.62\% & 85.30\% & 65.52\% & 78.62\% & 18.84\% & 50.52\% \\
\midrule
\multirow{3}{*}{VIT-Cinic10-10\%-16} & FNR & -- & -- & 15.92\% & 57.40\% & 85.04\% & 65.36\% & 18.76\% & 43.24\% & 49.97\% \\
 & FPR & -- & -- & 0.00\% & 53.37\% & 86.12\% & 65.24\% & 0.00\% & 44.76\% & 51.93\% \\
 & $\rho^*$ & \underline{91.44\%} & \textbf{93.62\%} & 89.84\% & 41.01\% & 15.33\% & 36.66\% & 83.51\% & 55.71\% & 51.04\% \\
\bottomrule
\end{tabular}
}
\end{table}
\subsection{Robustness Experiments}
We provide additional visualizations of the experiments in Figures~\ref{fig:a1} and~\ref{fig:A2}.
\begin{figure}[H]
\begin{subfigure}[t]{0.24\linewidth}
    \centering
    \includegraphics[width=\linewidth]{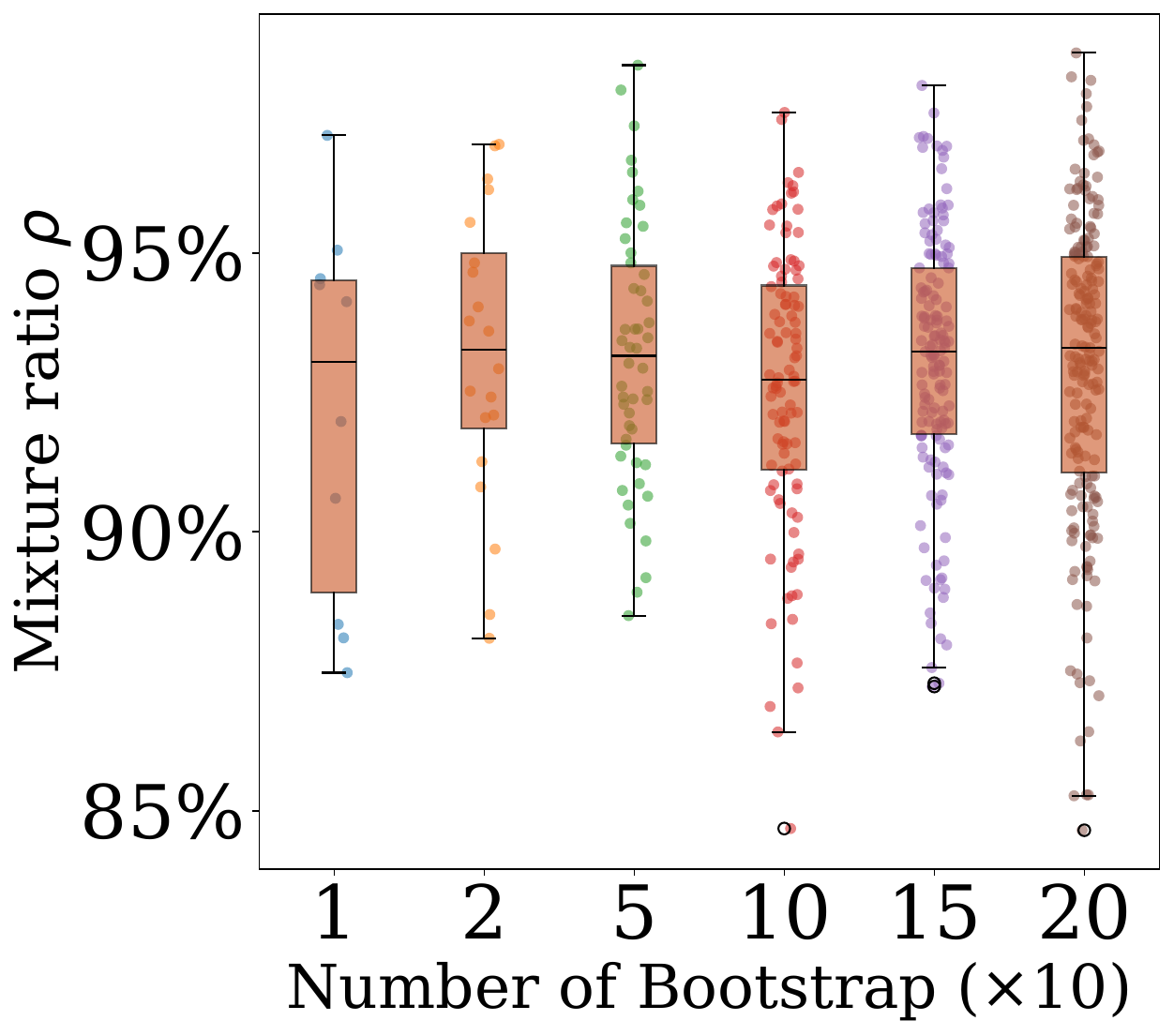}
    \caption{ResNet18-5\%-SMI}
    \label{fig:cinic-resnet50-a}
\end{subfigure}
\hfill
\begin{subfigure}[t]{0.24\linewidth}
    \centering
    \includegraphics[width=\linewidth]{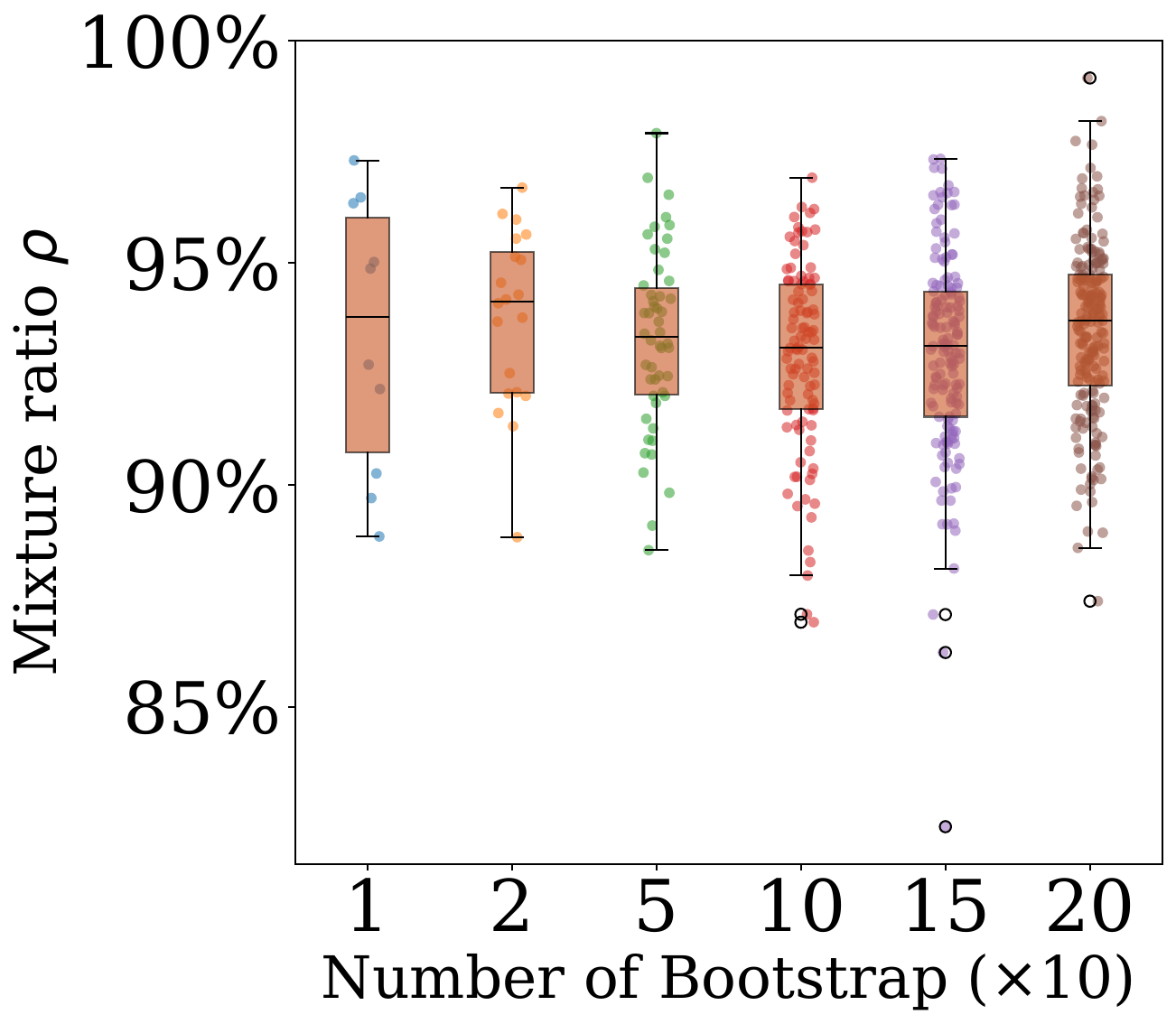}
    \caption{ResNet18-5\%-SMI-M}
    \label{fig:cinic-resnet50-b}
\end{subfigure}
\hfill
\begin{subfigure}[t]{0.24\linewidth}
    \centering
    \includegraphics[width=\linewidth]{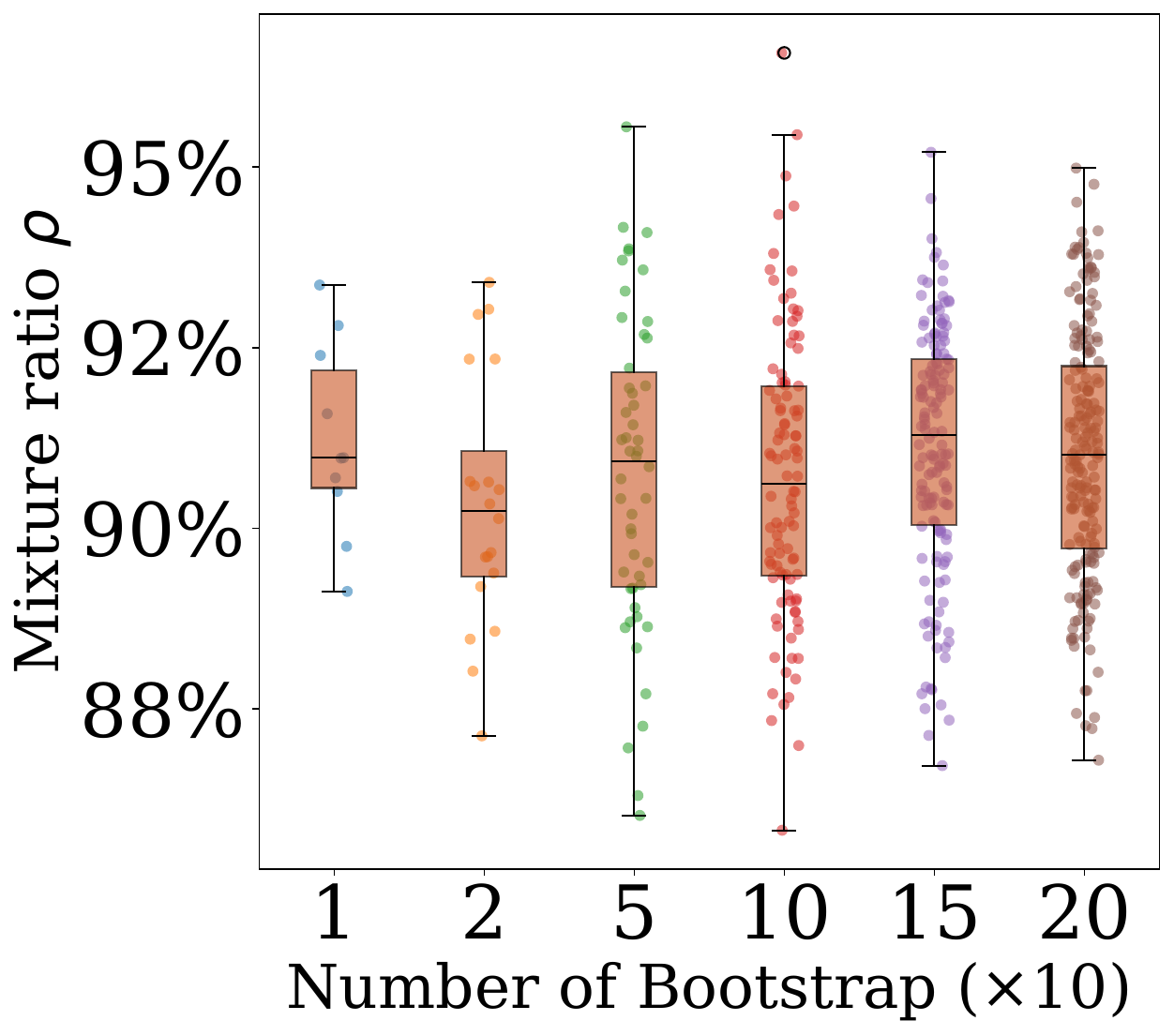}
    \caption{ResNet18-10\%-SMI}
    \label{fig:cinic-resnet50-b}
\end{subfigure}
\hfill
\begin{subfigure}[t]{0.24\linewidth}
    \centering
    \includegraphics[width=\linewidth]{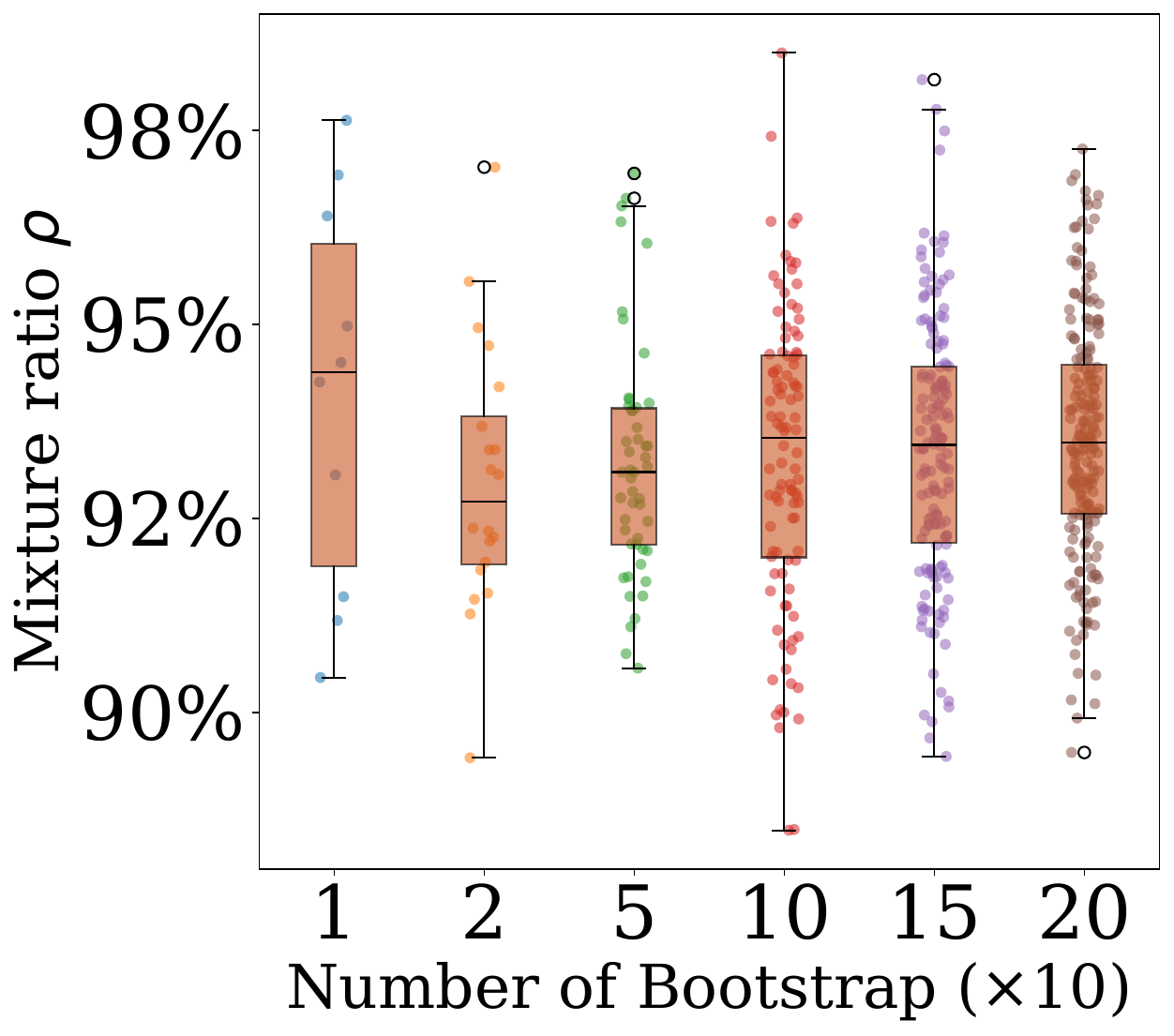}
    \caption{ResNet18-10\%-SMI-M}
    \label{fig:cinic-resnet50-b}
\end{subfigure}
\caption{Performance fluctuation of SMI and SMI-M with different numbers of resampling iterations on 5\% and 10\% randomly unlearned data.}
\label{fig:a1}
\end{figure}

\begin{figure}

\vspace{-1.0em}
\centering
\begin{subfigure}[t]{0.48\linewidth}
    \centering
    \includegraphics[width=\linewidth]{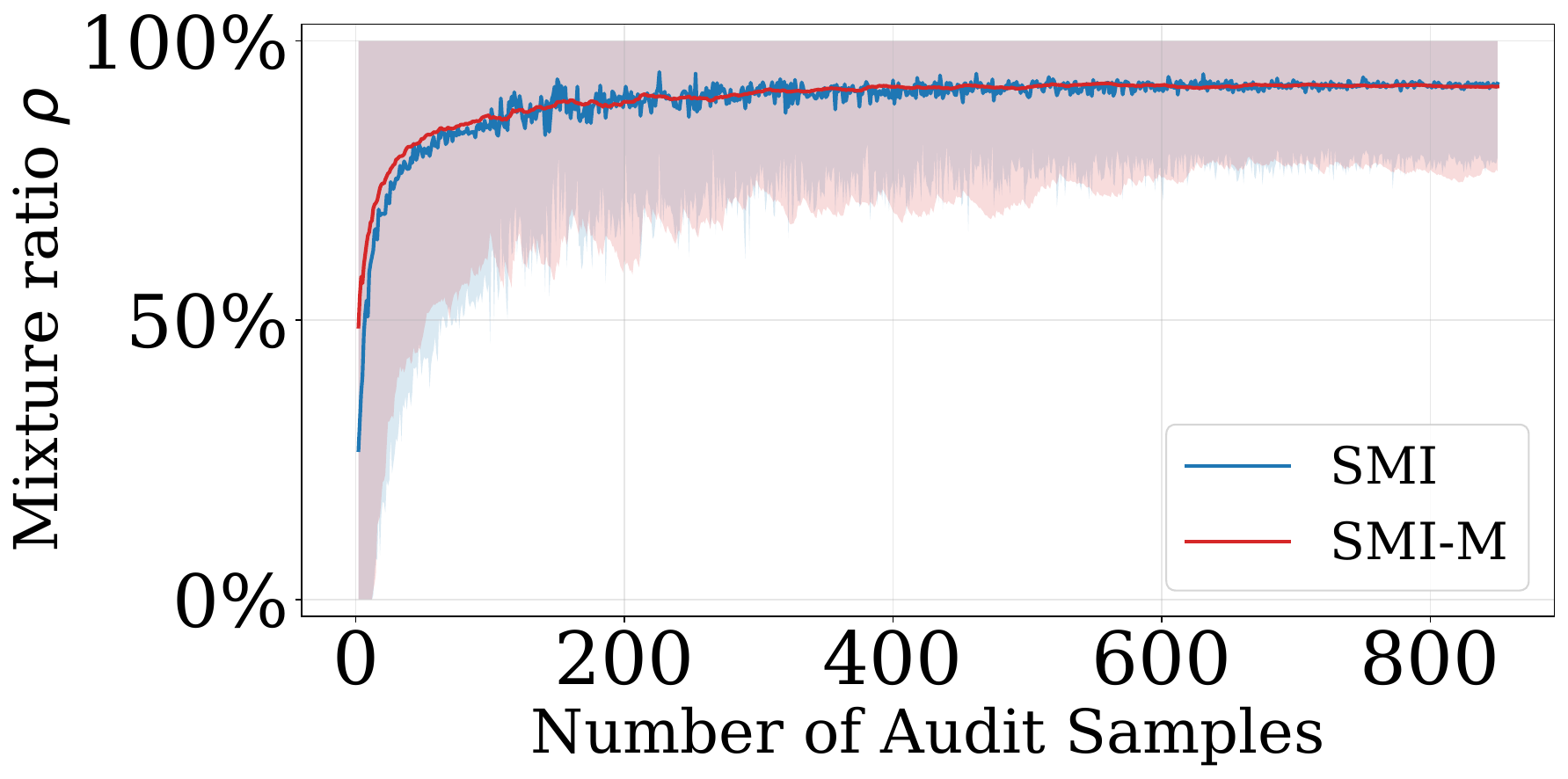}
    \caption{ResNet18-5\%}
    \label{fig:cinic-resnet50-a}
\end{subfigure}
\hfill
\begin{subfigure}[t]{0.48\linewidth}
    \centering
    \includegraphics[width=\linewidth]{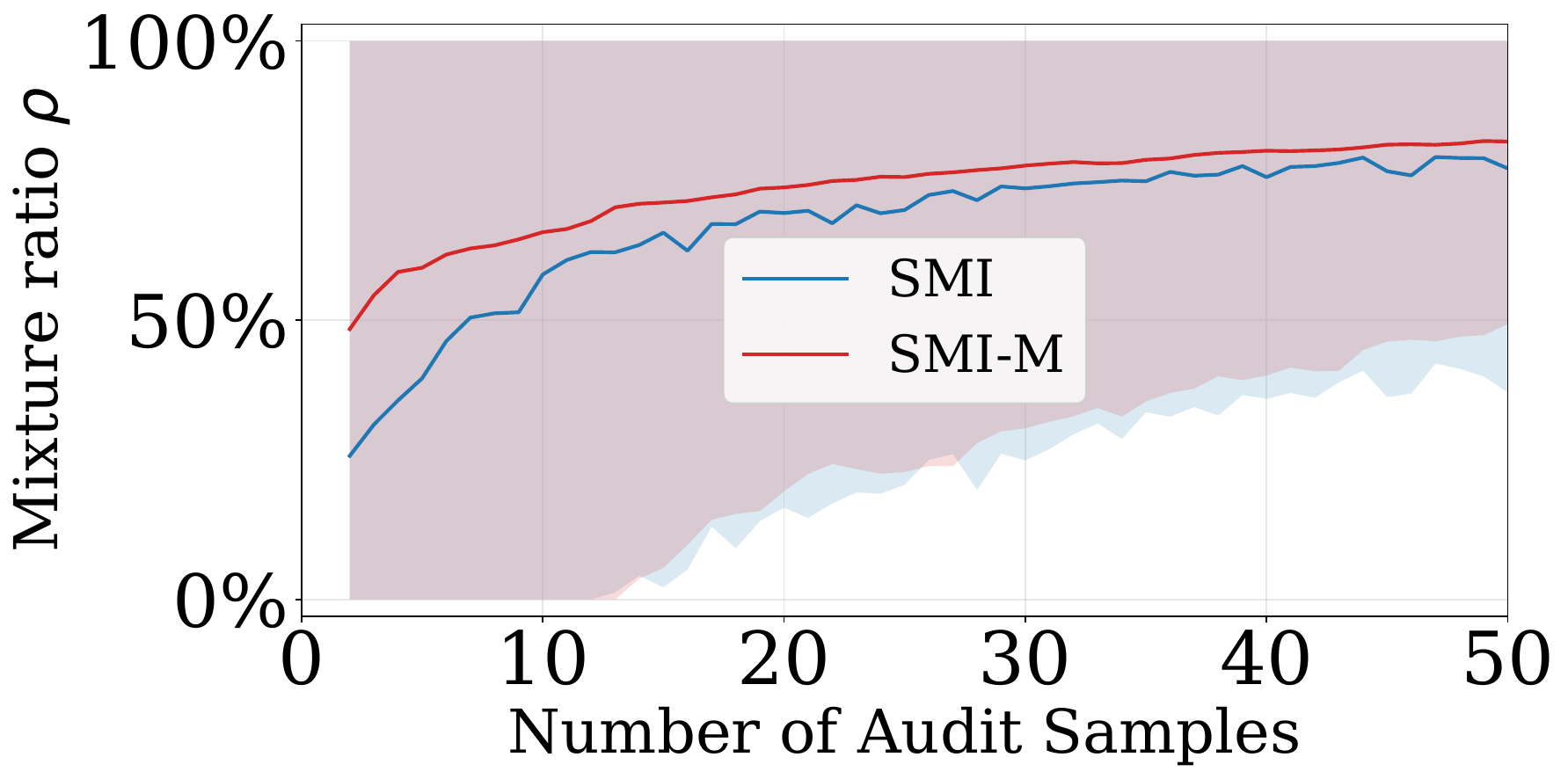}
    \caption{ResNet18-5\%}
    \label{fig:cinic-resnet50-b}
\end{subfigure}
\begin{subfigure}[t]{0.48\linewidth}
    \centering
    \includegraphics[width=\linewidth]{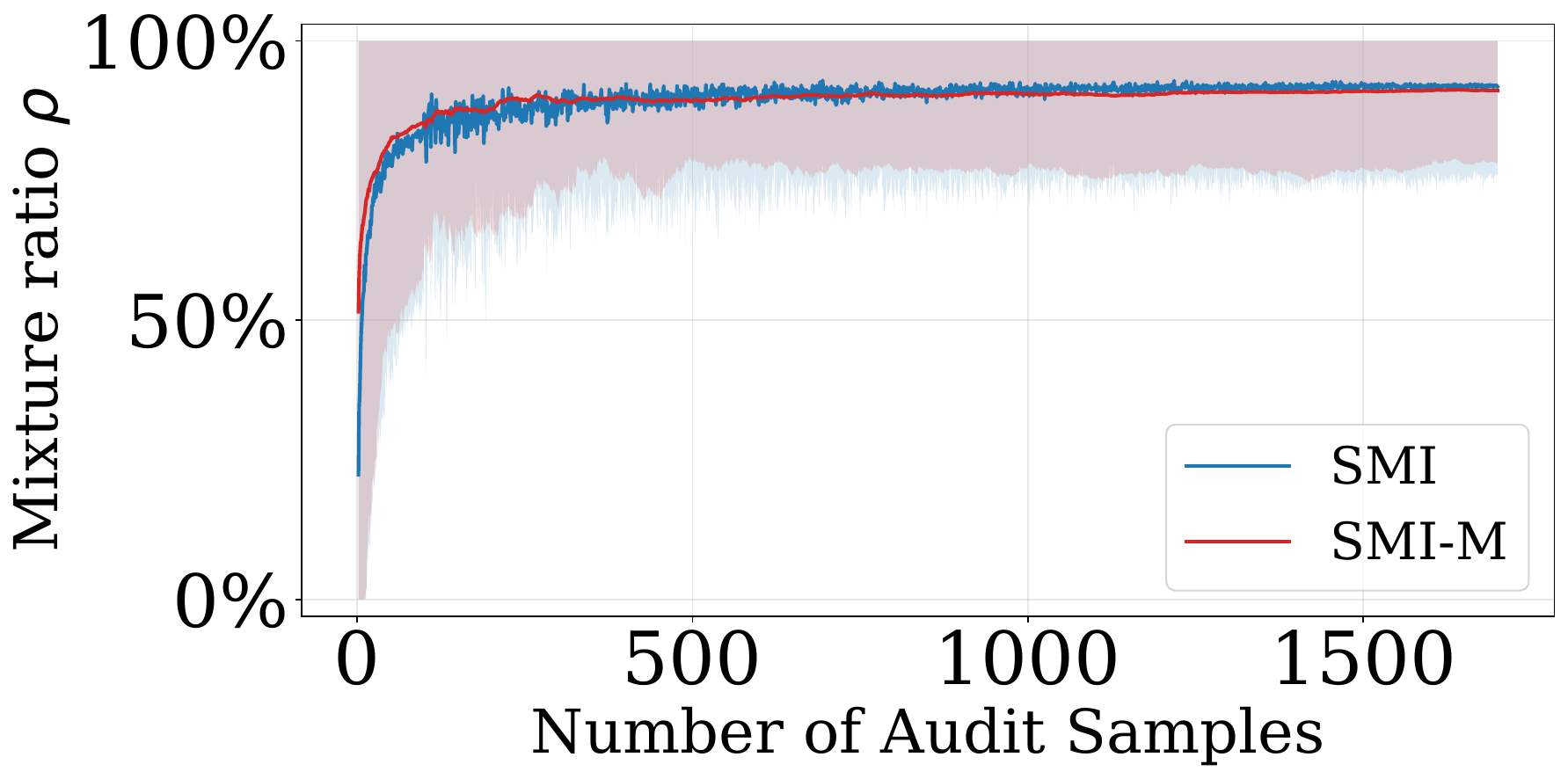}
    \caption{ResNet18-10\%}
    \label{fig:cinic-resnet50-a}
\end{subfigure}
\hfill
\begin{subfigure}[t]{0.48\linewidth}
    \centering
    \includegraphics[width=\linewidth]{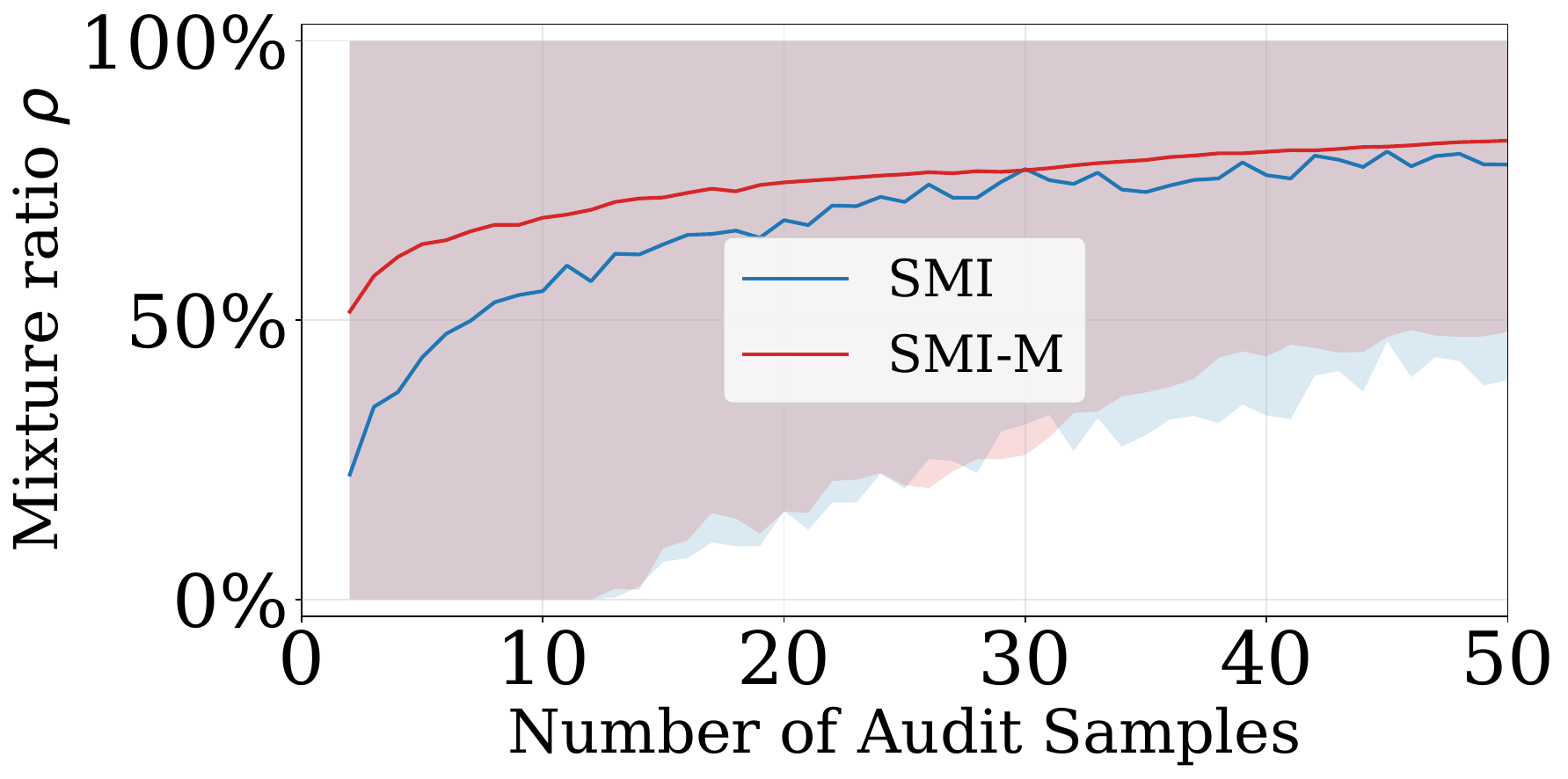}
    \caption{ResNet18-10\%}
    \label{fig:cinic-resnet50-b}
\end{subfigure}
\begin{subfigure}[t]{0.48\linewidth}
    \centering
    \includegraphics[width=\linewidth]{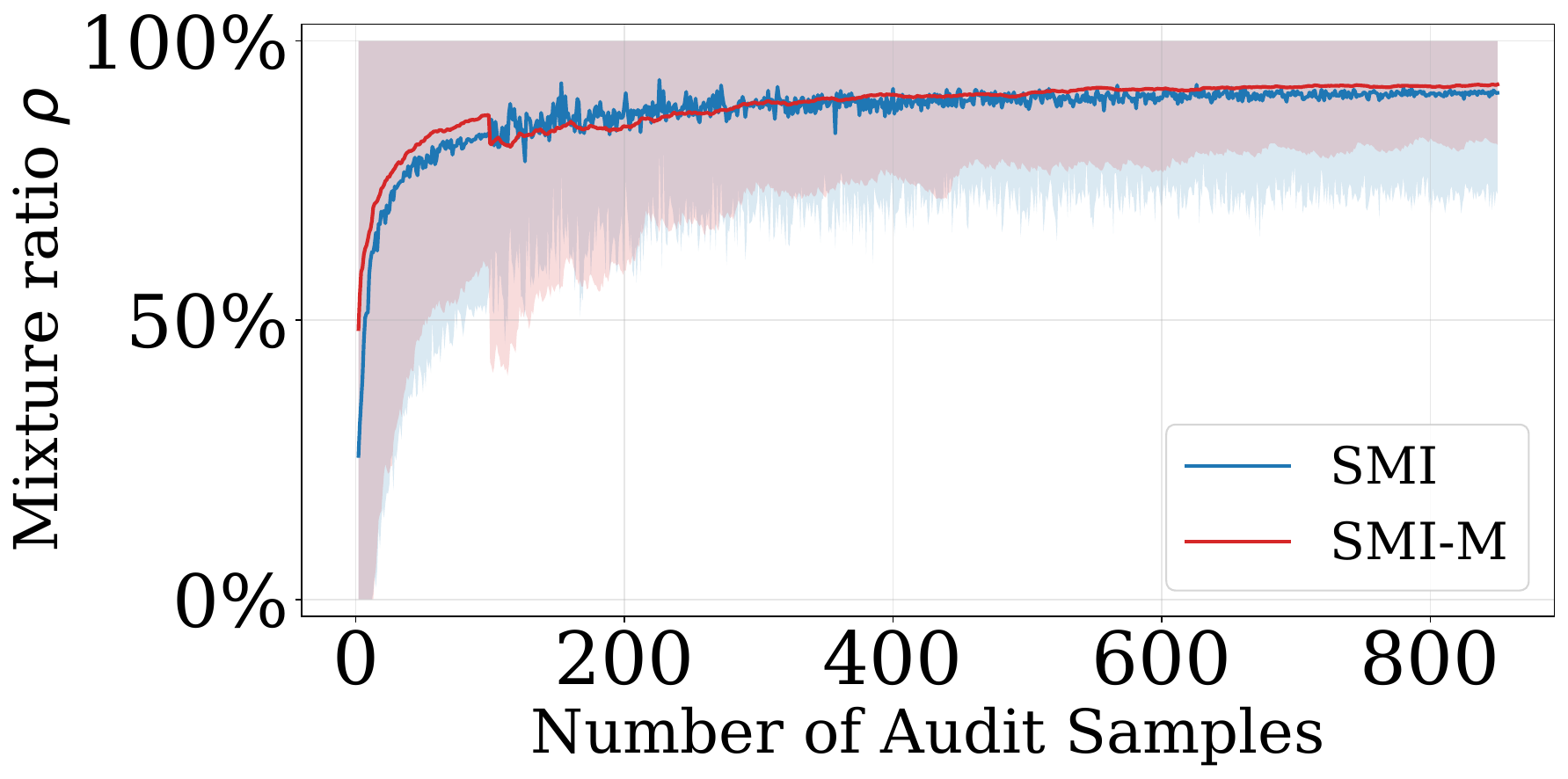}
    \caption{ResNet50-5\%}
    \label{fig:cinic-resnet50-a}
\end{subfigure}
\hfill
\begin{subfigure}[t]{0.48\linewidth}
    \centering
    \includegraphics[width=\linewidth]{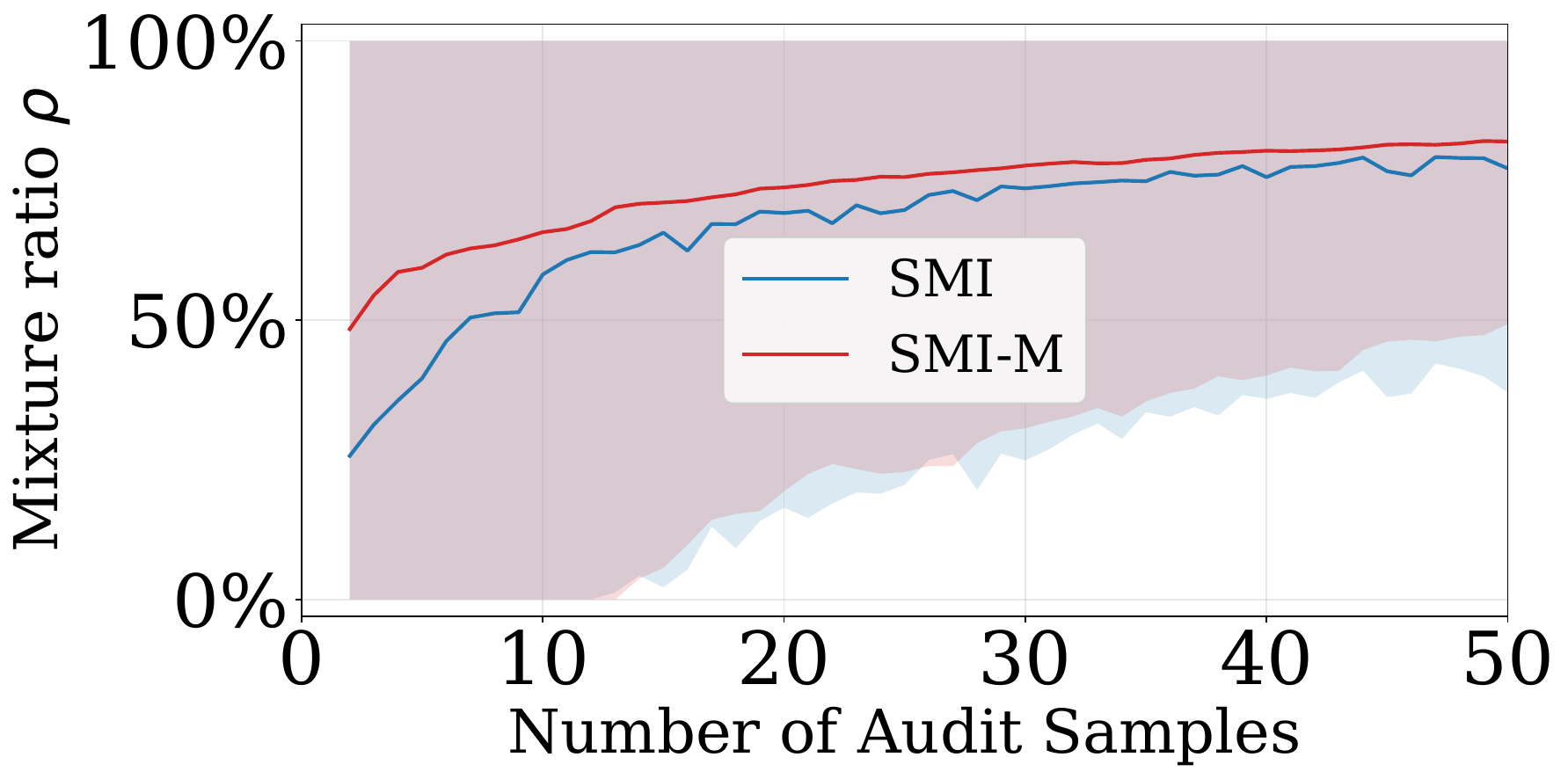}
    \caption{ResNet50-5\%}
    \label{fig:cinic-resnet50-b}
\end{subfigure}
\begin{subfigure}[t]{0.48\linewidth}
    \centering
    \includegraphics[width=\linewidth]{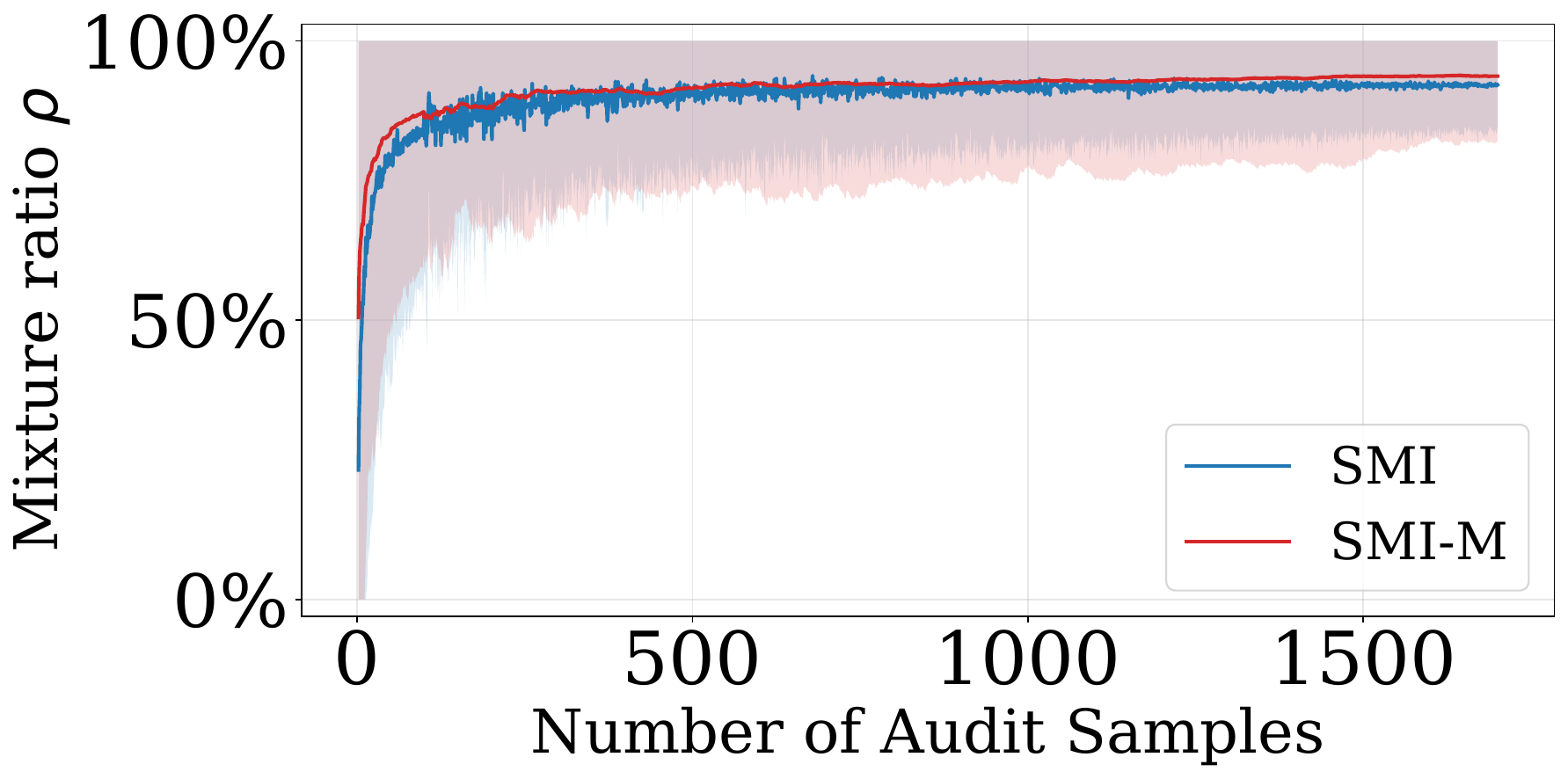}
    \caption{ResNet50-10\%}
    \label{fig:cinic-resnet50-a}
\end{subfigure}
\hfill
\begin{subfigure}[t]{0.48\linewidth}
    \centering
    \includegraphics[width=\linewidth]{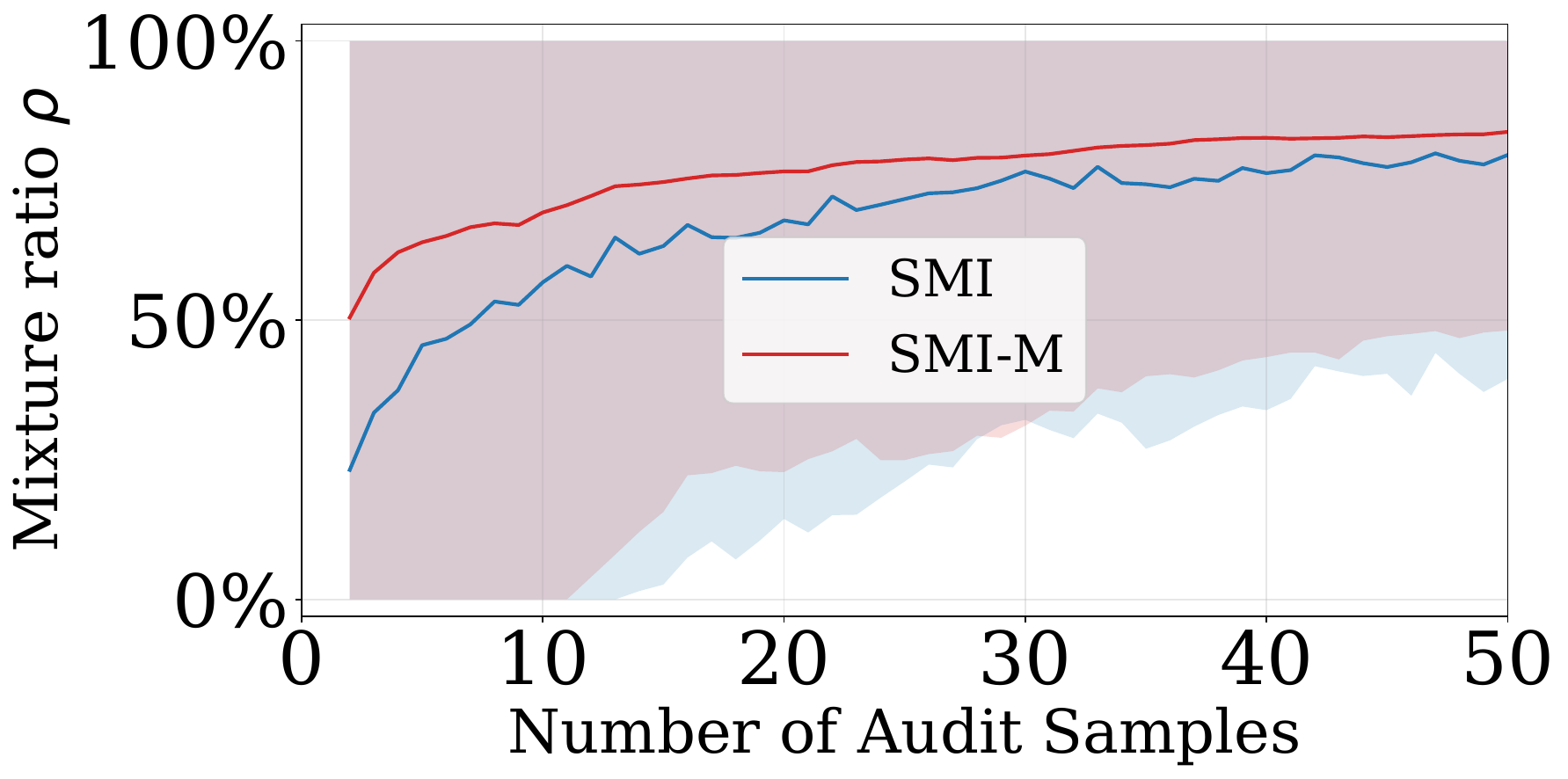}
    \caption{ResNet50-10\%}
    \label{fig:cinic-resnet50-b}
\end{subfigure}
\caption{Performance fluctuation of SMI and SMI-M with different numbers of resampling iterations on 5\% and 10\% randomly unlearned data.}
\label{fig:A2}
\vspace{-1.0em}
\end{figure}
\subsection{Experimental Environment}
We conduct the experiments on eight NVIDIA RTX A6000 GPUs, with Python 3.11 as the experimental environment.
\subsection{Unlearning methods} \label{unlearning}
We select Retrain, FineTune, Fisher, and Scrub as baseline methods for machine unlearning, in order to compare different unlearning strategies in terms of forgetting effectiveness and retained performance. Let $D_u$ denote the dataset to be unlearned, and let $D_m$ denote the member dataset after removing $D_u$.

\textbf{Retrain} is the most direct unlearning baseline. It removes the unlearning dataset $D_u$ from the original training set and trains a model with the same architecture from scratch using only the member dataset $D_m$. Since this model never observes $D_u$ during training, Retrain is usually regarded as the ideal reference for machine unlearning. However, it requires a full retraining process and therefore incurs substantial computational cost, so it is mainly used to evaluate the effectiveness of other unlearning methods.

\textbf{FineTune} adopts a fine-tuning-based unlearning strategy. Starting from the original model, it continues training the model using only the member dataset $D_m$, so that the model gradually shifts toward a state that relies more on the retained data. This method is simple to implement, has lower computational cost than retraining, and can weaken the influence of the unlearning data $D_u$ to some extent. However, since the model parameters are inherited from the original model, FineTune cannot guarantee complete removal of the information contained in $D_u$.

\textbf{Fisher} adopts a Fisher-information-based unlearning method. It estimates the importance of model parameters to determine which parameters are more critical for model predictions, and then adjusts or perturbs the parameters accordingly to weaken the influence of the unlearning data $D_u$. The core idea of this method is to exploit local second-order information in the parameter space, so as to forget $D_u$ while preserving the model performance on $D_m$ as much as possible. Compared with simple fine-tuning, Fisher pays more attention to the differences in how different parameters contribute to the model behavior.

\textbf{Scrub} adopts a distillation-based unlearning method. It guides the model to deviate from the original predictions on the unlearning dataset $D_u$, while preserving the prediction behavior of the original model on the member dataset $D_m$. In this way, Scrub balances forgetting effectiveness and retained performance: it weakens the model's memorization of the target unlearning samples while minimizing performance degradation on the remaining data.

\end{document}